\def\eqref#1{equation~\ref{#1}}
\def\1{\bm{1}}
\DeclareMathAlphabet{\mathsfit}{\encodingdefault}{\sfdefault}{m}{sl}
\SetMathAlphabet{\mathsfit}{bold}{\encodingdefault}{\sfdefault}{bx}{n}
\def\gD{{\mathcal{D}}}
\def\gR{{\mathcal{R}}}
\def\gX{{\mathcal{X}}}
\def\gY{{\mathcal{Y}}}
\newcommand{\ptrain}{P_{\rm{train}}}
\newcommand{\E}{\mathbb{E}}
\newcommand{\R}{\mathbb{R}}
\newcommand{\Var}{\mathrm{Var}}
\DeclareMathOperator*{\argmin}{arg\,min}
\newcommand*{\addFileDependency}[1]{
  \typeout{(#1)}
  \@addtofilelist{#1}
  \IfFileExists{#1}{}{\typeout{No file #1.}}
}
\newcommand{\cvar}{\textnormal{CVaR}}
\newcommand{\tv}{\textnormal{TV}}
\newtheorem{thm}{Theorem}
\newtheorem{prop}[thm]{Proposition}
\newtheorem{lem}[thm]{Lemma}
\newtheorem{cor}[thm]{Corollary}
\newcommand{\camready}[1]{#1}
\icmltitlerunning{DORO: Distributional and Outlier Robust Optimization}
\begin{document}

\twocolumn[
\icmltitle{DORO: Distributional and Outlier Robust Optimization}



\icmlsetsymbol{equal}{*}

\begin{icmlauthorlist}
\icmlauthor{Runtian Zhai}{equal,cmu}
\icmlauthor{Chen Dan}{equal,cmu}
\icmlauthor{J. Zico Kolter}{cmu}
\icmlauthor{Pradeep Ravikumar}{cmu}
\end{icmlauthorlist}

\icmlaffiliation{cmu}{School of Computer Science, Carnegie Mellon University, Pittsburgh, Pennsylvania, USA}

\icmlcorrespondingauthor{Runtian Zhai}{rzhai@cmu.edu}

\icmlkeywords{Machine Learning, ICML, Distributionally Robust Optimization, Robust Statistics, Algorithmic Fairness, Class Imbalance}

\vskip 0.3in
]



\printAffiliationsAndNotice{\icmlEqualContribution} 

\begin{abstract}
Many machine learning tasks involve subpopulation shift where the testing data distribution is a subpopulation of the training distribution. For such settings, a line of recent work has proposed the use of a variant of empirical risk minimization(ERM) known as distributionally robust optimization (DRO). In this work, we apply DRO to real, large-scale tasks with subpopulation shift, and observe that DRO performs relatively poorly, and moreover has severe instability. We identify one direct cause of this phenomenon: sensitivity of DRO to outliers in the datasets. To resolve this issue, we propose the framework of DORO, for Distributional and Outlier Robust Optimization. At the core of this approach is a refined risk function which prevents DRO from overfitting to potential outliers. We instantiate DORO for the Cressie-Read family of R\'enyi divergence, and delve into two specific instances of this family: CVaR and $\chi^2$-DRO. We theoretically prove the effectiveness of the proposed method, and empirically show that DORO improves the performance and stability of DRO with experiments on large modern datasets, thereby positively addressing the open question raised by \cite{pmlr-v80-hashimoto18a}. Codes are available at \url{https://github.com/RuntianZ/doro}.
\end{abstract}

\section{Introduction}

Many machine learning tasks require models to perform well under distributional shift, where the training and the testing data distributions are different. One type of distributional shift that arouses great research interest is \textit{subpopulation shift}, where the testing distribution is a specific or the worst-case subpopulation of the training distribution. A wide range of tasks can be modeled as subpopulation shift problems, such as learning for algorithmic fairness \cite{dwork2012fairness,barocas2016big} where we want to test model's performance on key demographic subpopulations, and learning with class imbalance \cite{japkowicz2000class,galar2011review} where we train a classifier on an imbalanced dataset with some minority classes having much fewer samples than the others, and we want to maximize the classifier's accuracy on the minority classes instead of its overall average accuracy.

Distributionally robust optimization (DRO) \cite{namkoong2016stochastic,duchi2018learning} refers to a family of learning algorithms that minimize the model's loss over the worst-case distribution in a neighborhood of the observed training distribution. Generally speaking, DRO trains the model on the worst-off subpopulation, and when the subpopulation membership is unknown, it focuses on the worst-off training instances, that is, the tail performance of the model. Previous work has shown effectiveness of DRO in subpopulation shift settings, such as algorithmic fairness \cite{pmlr-v80-hashimoto18a} and class imbalance \cite{xu2020class}. 

However, in our empirical investigations, when we apply DRO to real tasks on modern datasets, we observe that DRO suffers from poor performance and severe instability during training. \camready{The issue that DRO is sensitive to outliers has been raised by several previous papers \cite{pmlr-v80-hashimoto18a,hu2018does,zhu2020generalized} .} In this paper, we study the cause of these problems with DRO, and develop approaches to address them.
%

%
In particular, we identify and study one key factor that we find directly leads to DRO's sub-optimal behavior: DRO's sensitivity to outliers that widely exist in modern datasets. In general, DRO maximizes a model's tail performance by putting more weights on the ``harder'' instances, i.e. those which incur higher losses during training. On the one hand, this allows DRO to focus its attention on worst-off sub-populations. But on the other hand, since outliers are intuitively ``hard'' instances that incur higher losses than inliers, DRO is prone to assign large weights to outliers, resulting in both a drop in performance, and training instability. To provide empirical insights into how outliers affect DRO, in Section \ref{sec:dro-not-robust} we conducted experiments examining how the performance of DRO changes as we removed or added outliers to the dataset. The results of these experiments indicate that outliers bring about the observed bad performance of DRO. Thus, it is crucial to first enhance the robustness of DRO to outliers before applying it to real-world applications.

To this end, we propose DORO, an outlier robust refinement of DRO which takes inspiration from robust statistics. At the core of this approach is a refined risk function which prevents DRO from overfitting to potential outliers. Intuitively speaking, the new risk function adaptively filters out a small fraction of data with high risk during training, which is potentially caused by outliers. 
Figure \ref{fig:illust} illustrates the difference between DRO and DORO. In Section \ref{sec:method} we implement DORO for the Cressie-Read family of R\'enyi divergence, and for our theoretical and empirical study we primarily focus on CVaR-DORO and $\chi^2$-DORO. In Section \ref{sec:theory} we provide theoretical results guaranteeing that DORO can effectively handle subpopulation shift in the presence of outliers. Then, in Section \ref{sec:experiments} we empirically demonstrate that DORO improves the performance and stability of DRO. We conduct large-scale experiments on three datasets: the tabular dataset COMPAS, the vision dataset CelebA, and the language dataset CivilComments-Wilds.

\begin{figure}[!t]
    \centering
    \includegraphics[width=.85\linewidth]{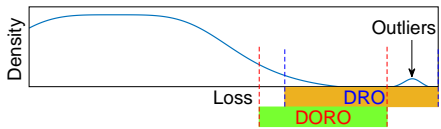}
    \caption{DORO avoids overfitting to outliers.}
    \label{fig:illust}
    \vskip -0.1in
\end{figure}

\paragraph{Contributions}
Our contributions are summarized below:
\begin{itemize}
    \item We demonstrate that the sensitivity of DRO to outliers is a direct cause of the irregular behavior of DRO with some intriguing experimental results in Section \ref{sec:dro-not-robust}.
    \item We propose and implement DORO as an outlier robust refinement of DRO in Section \ref{sec:method}. Then, in Section \ref{sec:theory} we provide theoretical guarantees for DORO.
    \item We conduct large-scale experiments in Section \ref{sec:experiments} and empirically show that DORO improves the performance and stability of DRO. We also analyze the effect of hyperparameters on DRO and DORO.
\end{itemize}

\paragraph{Related Work}
Distributional shift naturally arises in many machine learning applications and has been widely studied in statistics, applied probability and optimization \cite{shimodaira2000improving,huang2006correcting,bickel2007discriminative,quionero2009dataset}. One common type of distributional shift is \textit{domain generalization} where the training and testing distributions consist of distinct domains, and relevant topics include domain adaptation \cite{patel2015visual,wang2018deep} and transfer learning \cite{pan2009survey,tan2018survey}. Another common type of distributional shift studied in this paper is subpopulation shift, where the two distributions consist of the same group of domains. Subpopulation shift is closely related to algorithmic fairness and class imbalance. For algorithmic fairness, a number of fairness notions have been proposed, such as individual fairness \cite{dwork2012fairness,zemel2013learning}, group fairness \cite{hardt2016equality,zafar2017fairness}, counterfactual fairness \cite{kusner2017counterfactual} and Rawlsian Max-Min fairness \cite{rawls2001justice,pmlr-v80-hashimoto18a}. The setting of subpopulation shift is most closely related to the Rawlsian Max-Min fairness notion. Several recent papers \cite{pmlr-v80-hashimoto18a,oren-etal-2019-distributionally,xu2020class} proposed using DRO to deal with subpopulation shift, but it was also observed that DRO was prone to overfit in practice \cite{sagawa2019distributionally,sagawa2020investigation}. \cite{pmlr-v80-hashimoto18a} raised the open question whether it is possible to design algorithms both fair to unknown latent subpopulations and robust to outliers, and this work answers this question positively.

Outlier robust estimation is a classic problem in statistics starting with the pioneering works of \cite{tukey1960survey,huber1992robust}. Recent works in statistics and machine learning \cite{lai2016agnostic,diakonikolas2017being,prasad2018robust,diakonikolas2019robust} provided efficiently computable outlier-robust estimators for high-dimensional mean estimation with corresponding error guarantees. 
Outliers have a greater effect on the performance of DRO than ERM \cite{hu2018does}, due to its focus on the tail performance, so removing this negative impact of outliers is crucial for the success of DRO in its real-world applications. One closely related recent work is \cite{NEURIPS2020_9f60ab2b}, and DORO can be viewed as a combination of risk-averse and risk-seeking methods discussed in this paper.

\section{Background}
\label{sec:preliminary}

This section provides the necessary background of subpopulation shift and DRO.

\subsection{Subpopulation Shift}
\label{sec:group-fairness}

A machine learning task with subpopulation shift requires a model that performs well on the data distribution of each subpopulation. Let the input space be $\gX$ and the label space be $\gY$. We are given a training set containing $m$ samples i.i.d. sampled from some data distribution $P$ over $\gX \times \gY$. There are $K$ predefined domains (subpopulations) $\gD_1,\cdots,\gD_K$, each of which is a subset of $\gX \times \gY$. For example, in an algorithmic fairness task, domains are demographic groups defined by a number of \textit{protected features} such as race and sex. Let $P_k(z) = P(z|z \in \gD_k)$ be the conditional training distribution over $\gD_k$, where $z=(x,y)$. The goal is to train a model $f_\theta: \gX \rightarrow \gY$ parameterized by $\theta \in \Theta$ that performs well over every $P_k$. Denote the expected risk over $P$ by $\gR(\theta;P)=\E_{Z \sim P}[\ell(\theta;Z)]$ where $\ell(\theta;z)$ is a measurable loss function. Then the expected risk over $P_k$ is $\gR_k(\theta;P)=\E_{Z\sim P_k}[\ell(\theta;Z)]$. The objective is to minimize the \textit{worst-case risk} defined as
\begin{equation}
\label{eqn:rmax}
\gR_{\max}(\theta;P) = \max_{k=1,\cdots,K} \gR_k(\theta;P)
\end{equation}

Several different settings were studied by previous work:

\paragraph{Overlapping vs Non-overlapping}
The \textit{overlapping} setting allows the domains to overlap with each other while \textit{non-overlapping} does not. For example, suppose we have two protected features: race (White and Others) and sex (Male and Female). Under either setting we will have four domains. Under the overlapping setting we will have \textit{White}, \textit{Others}, \textit{Male} and \textit{Female}, while under the non-overlapping setting we will have \textit{White Male}, \textit{White Female}, \textit{Others Male} and \textit{Others Female}. All the experiments in this work are conducted under the overlapping setting. Each instance may belong to zero, one or more domains.

\paragraph{Domain-Aware vs Domain-Oblivious}
Some previous work has assumed that domain memberships of instances are known at least during training. This is called the \textit{domain-aware} setting. However, \cite{pmlr-v80-hashimoto18a} argue that in many real applications, domain memberships are unknown during training, either because it is hard to extract the domain information from the input, or because it is hard to identify all protected features. Thus, a line of recent work \cite{pmlr-v80-hashimoto18a,lahoti2020fairness} studies the \textit{domain-oblivious} setting, in which the training algorithm does not know the domain membership of any instance (even the number of domains $K$ is unknown). In this work, we focus on the domain-oblivious setting.

\subsection{Distributionally Robust Optimization (DRO)}

Under the domain-oblivious setting, we cannot compute the worst-case risk since we have no access to $\gD_1,\cdots,\gD_K$. In this case, the framework of DRO instead maximizes the performance over the worst-off subpopulation in general. Specifically, given some divergence $D$ between distributions, DRO aims to minimize the expected risk over the worst-case distribution $Q$ (that is absolutely continuous with respect to training distribution $P$, so that $Q \ll P$) in a ball w.r.t. divergence $D$ around the training distribution $P$.

%
Thus, while empirical risk minimization (ERM) algorithm minimizes the expected risk $\gR(\theta;P)$, DRO minimizes the \textit{expected DRO risk} defined as:
\begin{equation}
\label{eqn:dro-risk}
\gR_{D,\rho}(\theta;P) = \sup_{Q \ll P}\{ \E_Q [\ell(\theta;Z)]: D(Q \parallel P) \leq \rho \}
\end{equation}
for some $\rho > 0$.
Different divergence functions $D$ derive different DRO risks. In this work, we focus on the Cressie-Read family of R\'enyi divergence \cite{cressie1984multinomial} formulated as:
\begin{equation}
\label{eqn:cressie-read}
	D_\beta(Q \parallel P) =\int  f_\beta(\frac{dQ}{dP}) dP
\end{equation}
where $\beta >1$, and $f_\beta(t)$ is defined as:
\begin{equation}
	f_\beta(t) = \frac{1}{\beta(\beta-1)}\left( t^\beta - \beta t + \beta -1\right)
\end{equation}

An advantage of the Cressie-Read family is that it has the following convenient dual characterization (see Lemma 1 of \cite{duchi2018learning} for the proof):
	\begin{equation}\label{eqn:dro_dual}
		 \gR_{D_{\beta},  \rho}(\theta; P) = \inf_{\eta \in \R} \left\lbrace c_\beta(\rho) \E_{P}[(l(\theta;Z)-\eta)_+^{\beta_*}]^{\frac{1}{\beta_*} } + \eta\right\rbrace
	\end{equation}
	where $\beta_* = \frac{\beta}{\beta -1}$, and $c_\beta(\rho) = (1+ \beta(\beta-1) \rho)^{\frac{1}{\beta}}$.

The following proposition shows that DRO can handle subpopulation shift under the domain-oblivious setting. The only information DRO needs during training is $\alpha$, the ratio between the size of the smallest domain and the size of the population. See the proof in Appendix \ref{proof:dro-fwod-generalized}.

\begin{prop}
	\label{prop:dro-fwod-generalized} 
	Let $\alpha = \min_{k=1,\cdots,K} P(\gD_k)$ be the minimal group size, and define $\rho = f_{\beta}(\frac{1}{\alpha})$. Then
	\begin{equation}
	\label{eqn:dro-fwod-generalized}
	\gR_{\max}(\theta;P) \leq  \gR_{D_{\beta},  \rho}(\theta; P)
	\end{equation}
\end{prop}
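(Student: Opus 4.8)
The plan is to exhibit, for each domain $\gD_k$, a single distribution feasible for the supremum defining $\gR_{D_\beta,\rho}(\theta;P)$ whose expected loss equals $\gR_k(\theta;P)$ --- namely the conditional distribution $P_k$ itself. Concretely, if I can show $D_\beta(P_k \parallel P) \le \rho$ for every $k$, then $P_k$ is one of the $Q$'s competing in the supremum in \eqref{eqn:dro-risk}, so $\gR_k(\theta;P) = \E_{P_k}[\ell(\theta;Z)] \le \gR_{D_\beta,\rho}(\theta;P)$; taking the maximum over $k = 1,\dots,K$ on the left then gives exactly \eqref{eqn:dro-fwod-generalized}. So the entire proof reduces to the scalar divergence bound $D_\beta(P_k\parallel P) \le f_\beta(1/\alpha)$.

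To establish that bound, I would first write down the Radon--Nikodym derivative $\frac{dP_k}{dP}(z) = \1[z\in\gD_k]/P(\gD_k)$, which takes only the two values $0$ and $1/P(\gD_k)$. Substituting into $D_\beta(P_k\parallel P) = \int f_\beta\!\big(\frac{dP_k}{dP}\big)\,dP$ and splitting the integral over $\gD_k$ and its complement yields $D_\beta(P_k\parallel P) = P(\gD_k)\,f_\beta\!\big(\tfrac{1}{P(\gD_k)}\big) + \big(1 - P(\gD_k)\big)\,f_\beta(0)$. Using $f_\beta(0) = 1/\beta$ and simplifying, the terms linear in $P(\gD_k)$ cancel and the constants combine, so this collapses to the clean identity $D_\beta(P_k\parallel P) = \frac{1}{\beta(\beta-1)}\big(P(\gD_k)^{1-\beta} - 1\big)$.

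Since $\beta > 1$, the function $a \mapsto \frac{1}{\beta(\beta-1)}(a^{1-\beta}-1)$ is strictly decreasing on $(0,1]$, and every group satisfies $P(\gD_k) \ge \alpha$, so $D_\beta(P_k\parallel P) \le \frac{1}{\beta(\beta-1)}(\alpha^{1-\beta}-1)$. It remains to compare this with $\rho = f_\beta(1/\alpha)$: expanding $f_\beta(1/\alpha)$ and cancelling the common prefactor $\frac{1}{\beta(\beta-1)}$, the inequality $\frac{1}{\beta(\beta-1)}(\alpha^{1-\beta}-1) \le f_\beta(1/\alpha)$ is equivalent to $(1/\alpha)^{\beta-1} \ge \beta$. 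This holds in the parameter ranges that matter here: it is automatic in the CVaR limit $\beta\to\infty$ for any $\alpha<1$, it reads $\alpha \le 1/2$ for $\chi^2$-DRO ($\beta=2$), and more generally it is implied by $\alpha \le 1/e$ (since then $\alpha^{1-\beta} = (1/\alpha)^{\beta-1} \ge e^{\beta-1} \ge \beta$), which the smallest subpopulation satisfies in all the settings considered.

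The Radon--Nikodym computation and the algebraic collapse are routine; the one step that needs care is the final scalar inequality $(1/\alpha)^{\beta-1}\ge\beta$, where one must pin down precisely how small the minority group (or how large $\beta$) must be for $f_\beta(1/\alpha)$ to be a genuine upper bound on $D_\beta(P_k\parallel P)$ rather than merely a convenient-looking radius. An equivalent route, if one prefers not to identify $P_k$ explicitly, is to start from the dual form \eqref{eqn:dro_dual}: bound $\gR_k(\theta;P) \le \E_{P_k}[(\ell(\theta;Z)-\eta)_+] + \eta$ for arbitrary $\eta$, apply H\"older with exponents $\beta$ and $\beta_*$ to $\E_P\big[\frac{\1_{\gD_k}}{P(\gD_k)}(\ell-\eta)_+\big]$ to pull out the factor $P(\gD_k)^{-1/\beta_*} \le \alpha^{-1/\beta_*}$, and then check $c_\beta(\rho) \ge \alpha^{-1/\beta_*}$ --- which reduces to the same scalar inequality.
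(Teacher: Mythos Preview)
Your overall strategy is the same as the paper's: show that each conditional $P_k$ is feasible for the supremum in \eqref{eqn:dro-risk} by verifying $D_\beta(P_k\parallel P)\le f_\beta(1/\alpha)$, then conclude $\gR_k(\theta;P)\le\gR_{D_\beta,\rho}(\theta;P)$ and take the maximum over $k$. The paper argues via a pointwise bound: from $\tfrac{dP_k}{dP}\le 1/\alpha$ and monotonicity of $f_\beta$ on $[1,\infty)$ it asserts $f_\beta\bigl(\tfrac{dP_k}{dP}\bigr)\le f_\beta(1/\alpha)$ and integrates. You instead compute $D_\beta(P_k\parallel P)=\tfrac{1}{\beta(\beta-1)}\bigl(P(\gD_k)^{1-\beta}-1\bigr)$ exactly and then compare with $\rho$.

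Your version is in fact more careful than the paper's. The density ratio $\tfrac{dP_k}{dP}$ equals $0$ on the complement of $\gD_k$, so the paper's pointwise monotonicity step silently relies on $f_\beta(0)\le f_\beta(1/\alpha)$ there --- and that inequality is \emph{equivalent} to the scalar condition $(1/\alpha)^{\beta-1}\ge\beta$ you isolated. Hence the extra hypothesis is not an artifact of your exact-computation route; the paper's argument needs it too but does not state it, and without it the conclusion can genuinely fail (for instance $\beta=2$ with a single group of mass $\alpha\in(1/2,1)$ gives $D_{\chi^2}(P_k\parallel P)>\rho$ and one can construct losses with $\gR_{\max}>\gR_{D_{\chi^2},\rho}$). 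Your handling --- checking the condition in the CVaR limit and for $\chi^2$-DRO, and noting that $\alpha\le 1/e$ suffices for all $\beta>1$ --- is the appropriate caveat.
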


While the Cressie-Read formulation only defines the $f$-divergence for finite $\beta \in (1, +\infty)$, it can be shown that the dual characterization is valid for $\beta = \infty$ as well, for which the DORO risk becomes the well-known \textit{conditional value-at-risk} (CVaR) (See e.g. \cite{duchi2018learning}, Example 3).  In our theoretical analysis and experiments, we delve into two most widely-used sepecial cases of the Cressie-Read family: (i) $\beta=\infty$, which corresponds to CVaR; (ii) $\beta=2$, which corresponds to \textit{$\chi^2$-DRO risk} used in \cite{pmlr-v80-hashimoto18a}. Table \ref{tab:cressie-read} summarizes the relevant quatities in these two special cases.

\begin{table}[!ht]
\vskip -0.1in
\caption{CVaR and $\chi^2$-DRO. $\alpha$ is the ratio between the size of the smallest domain and the size of the population.}
\label{tab:cressie-read}
\vskip 0.15in
\begin{center}
\begin{small}
\begin{tabular}{c|cc}
\toprule
 & \textbf{CVaR} & \textbf{$\chi^2$-DRO}  \\
\midrule
$\beta$    & $\infty$ & 2 \\
$\beta_*$ & 1 & 2 \\ 
$\rho$ & $-\log (\alpha)$ & $\frac{1}{2}(\frac{1}{\alpha}-1)^2$ \\[3pt]
$c_\beta (\rho)$ & $\alpha^{-1}$ & $\sqrt{1+(\frac{1}{\alpha}-1)^2}$ \\
$D_\beta (Q \parallel P)$ & $\sup \log \frac{dQ}{dP}$ & $\frac{1}{2}\int (dQ /dP - 1)^2 dP$ \\[3pt]
DRO Risk & $\cvar_{\alpha}(\theta;P)$ & $\gR_{D_{\chi^2}, \rho}(\theta;P)$ \\
\bottomrule
\end{tabular}
\end{small}
\end{center}
\vskip -0.1in
\end{table}

For example, the dual form of CVaR is
\begin{equation}
\label{eqn:cvar-dual}
\cvar_{\alpha}(\theta;P) = \inf_{\eta \in \R} \{ \alpha^{-1} \E_{P}[(\ell(\theta;Z) - \eta)_+] + \eta \}
\end{equation}

It is easy to see that the optimal $\eta$ of (\ref{eqn:cvar-dual}) is the $\alpha$-quantile of $l(\theta;Z)$ defined as
\begin{equation}
\label{eqn:quantile}
q_\theta(\alpha) = \inf_{q}\{ P_{Z\sim P}(\ell(\theta;Z) > q) \leq \alpha \}
\end{equation}

The dual form (\ref{eqn:cvar-dual}) shows that CVaR in effect minimizes the expected risk on the worst $\alpha$ portion of the training data.


The following corollary of Proposition \ref{prop:dro-fwod-generalized} shows that both $\cvar_{\alpha}(\theta;P)$ and $\gR_{D_{\chi^2},  \rho}(\theta; P)$ are upper bounds of $\gR_{\max}(\theta;P)$, so that minimizing either of them guarantees a small worst-case risk (see the proof in Appendix \ref{proof:dro-fwod}):

\begin{cor}
\label{prop:dro-fwod} 
Let $\alpha = \min_{k=1,\cdots,K} P(\gD_k)$ be the minimal group size, and $\rho = \frac{1}{2}(\frac{1}{\alpha}-1)^2$. Then
\begin{equation}
\label{eqn:dro-fwod}
\gR_{\max}(\theta;P) \leq \cvar_{\alpha}(\theta;P) \leq \gR_{D_{\chi^2},  \rho}(\theta; P)
\end{equation}
\end{cor}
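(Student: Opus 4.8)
The plan is to prove the two inequalities separately; in both cases the argument reduces to an inclusion of the uncertainty sets underlying the three risks, together with the fact that each risk is a supremum of $\E_Q[\ell(\theta;Z)]$ over its set.

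\textbf{First inequality, $\gR_{\max}(\theta;P)\le\cvar_\alpha(\theta;P)$.} Fix a domain $\gD_k$ and set $L_k:=dP_k/dP=\1[\,\cdot\in\gD_k\,]/P(\gD_k)$, so $0\le L_k\le 1/P(\gD_k)\le\alpha^{-1}$. For every $\eta\in\R$, using $t\le t_+$ and then changing measure, $\E_{P_k}[\ell(\theta;Z)-\eta]\le\E_{P_k}[(\ell(\theta;Z)-\eta)_+]=\E_P[L_k(\ell(\theta;Z)-\eta)_+]\le\alpha^{-1}\E_P[(\ell(\theta;Z)-\eta)_+]$, hence $\E_{P_k}[\ell(\theta;Z)]\le\eta+\alpha^{-1}\E_P[(\ell(\theta;Z)-\eta)_+]$. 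Taking the infimum over $\eta$ and invoking \eqref{eqn:cvar-dual} gives $\gR_k(\theta;P)\le\cvar_\alpha(\theta;P)$ for each $k$; maximizing over $k$ proves the claim. (This is precisely the $\beta=\infty$ case of Proposition~\ref{prop:dro-fwod-generalized}.)

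\textbf{Second inequality, $\cvar_\alpha(\theta;P)\le\gR_{D_{\chi^2},\rho}(\theta;P)$ with $\rho=\tfrac12(\tfrac1\alpha-1)^2$.} By the dual characterization of CVaR recalled after \eqref{eqn:cvar-dual}, $\cvar_\alpha(\theta;P)=\sup\{\E_Q[\ell(\theta;Z)]:Q\ll P,\ \|dQ/dP\|_\infty\le\alpha^{-1}\}$, while $\gR_{D_{\chi^2},\rho}(\theta;P)=\sup\{\E_Q[\ell(\theta;Z)]:Q\ll P,\ D_{\chi^2}(Q\parallel P)\le\rho\}$ by \eqref{eqn:dro-risk}. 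It therefore suffices to show the first feasible set is contained in the second. Let $L:=dQ/dP$ with $0\le L\le\alpha^{-1}$ and $\E_P[L]=1$; then $L-1\in[-1,\alpha^{-1}-1]$, and since $\alpha\le\tfrac12$ (so $\alpha^{-1}-1\ge1$) we get $(L-1)^2\le(\alpha^{-1}-1)^2$ pointwise, whence $D_{\chi^2}(Q\parallel P)=\tfrac12\E_P[(L-1)^2]\le\tfrac12(\alpha^{-1}-1)^2=\rho$. Thus $Q$ is feasible for the $\chi^2$ program, and the inequality follows.

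\textbf{Main obstacle.} The first inequality is essentially bookkeeping (and already contained in Proposition~\ref{prop:dro-fwod-generalized}); the content is the set inclusion in the second, and the one step that must be checked carefully is the pointwise bound $(L-1)^2\le(\alpha^{-1}-1)^2$, which requires the minimal group size to satisfy $\alpha\le\tfrac12$ (a mild condition, automatic for instance when the $K\ge 2$ domains are disjoint). A purely dual route meets the same condition: plugging the CVaR optimizer $\eta_0=q_\theta(\alpha)$ of \eqref{eqn:cvar-dual} into \eqref{eqn:dro_dual} and applying Cauchy--Schwarz with $P(\ell(\theta;Z)>\eta_0)\le\alpha$ gives $\alpha^{-1}\E_P[(\ell(\theta;Z)-\eta_0)_+]\le\alpha^{-1/2}\E_P[(\ell(\theta;Z)-\eta_0)_+^2]^{1/2}$, and then $\cvar_\alpha\le\gR_{D_{\chi^2},\rho}$ reduces to $\alpha^{-1/2}\le c_2(\rho)=\sqrt{1+(\alpha^{-1}-1)^2}$, i.e.\ $(2\alpha-1)(\alpha-1)\ge0$, again $\alpha\le\tfrac12$. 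Beyond that step, everything is routine.
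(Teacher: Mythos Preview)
Your proof is correct and follows essentially the same route as the paper. For the first inequality the paper works on the primal side (exhibiting a distribution $Q'$ with $P=\alpha P_k+(1-\alpha)Q'$, so that $P_k$ lies in the CVaR feasible set), while you work on the dual side via \eqref{eqn:cvar-dual}; these are equivalent. For the second inequality the paper argues exactly as you do: any $Q$ feasible for CVaR has $dQ/dP\le\alpha^{-1}$, hence $D_{\chi^2}(Q\parallel P)\le\tfrac12(\alpha^{-1}-1)^2=\rho$. You are in fact more careful than the paper here: the pointwise bound $(L-1)^2\le(\alpha^{-1}-1)^2$ does require $\alpha\le\tfrac12$, a hypothesis the paper's proof uses silently. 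Your observation that the same constraint reappears in the purely dual derivation (via $\alpha^{-1/2}\le c_2(\rho)$) is a nice confirmation that this is intrinsic to the inequality as stated, not an artifact of the method.
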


\section{DRO is Sensitive to Outliers}
\label{sec:dro-not-robust}

Although the construction of DRO aims to be effective against subpopulation shift as detailed in the previous section, when applied to real tasks DRO is found to have poor and unstable performance. After some examination, we pinpoint one direct cause of this phenomenon: the vulnerablity of DRO to outliers that widely exist in modern datasets. In this section, we will provide some intriguing experimental results to show that:
\begin{enumerate}
    \item DRO methods have poor and unstable performances.
    \item Sensitivity to outliers is a direct cause of DRO's poor performance. To support this argument, we show that DRO becomes good and stable on a ``clean'' dataset constructed by removing the outliers from the original dataset, and new outliers added to this ``clean'' dataset compromise DRO's performance and stability.
\end{enumerate}

We conduct experiments on COMPAS \cite{larson2016we}, a recidivism prediction dataset with 5049 training instances (after preprocessing and train-test splitting). We select two features as protected features: race and sex. The two protected features define four overlapping demographic groups: \textit{White}, \textit{Others}, \textit{Male} and \textit{Female}. A two-layer feed-forward neural network with ReLU activations  is used as the classification model. We train three models on this dataset with ERM, CVaR and $\chi^2$-DRO. Then we remove the outliers from the training set using the following procedure: We first train a model with ERM, and then remove 200 training instances that incur the highest loss on this model, as outliers are likely to have poorer fit. Then we reinitialize the model, train it on the new training set with ERM, and remove 200 more instances with the highest loss from the new training set. This process is repeated 5 times, so that 1000 training instances are removed and we get a new training set with 4049 instances. Note that this procedure is not guaranteed to remove all outliers and retain all inliers, but is sufficient for the purposes of our demonstration. We then run the three algorithms again on this \textit{same} ``clean'' training set.

\begin{figure}[!t]
     \centering
     \begin{subfigure}[b]{0.48\linewidth}
         \centering
         \includegraphics[width=\textwidth]{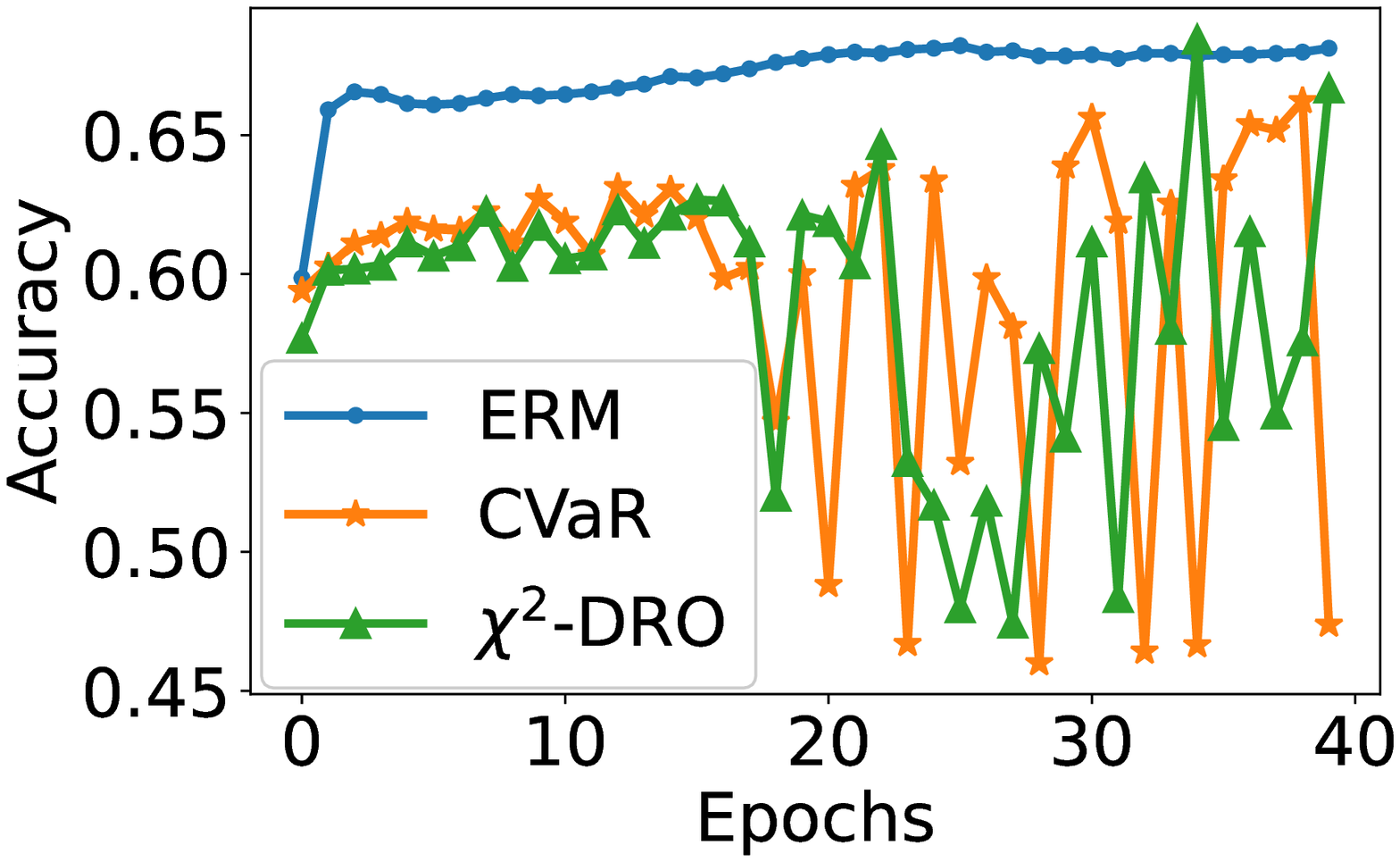}
         \caption{Average (Original)}
     \end{subfigure}
     \hfill
     \begin{subfigure}[b]{0.48\linewidth}
         \centering
         \includegraphics[width=\textwidth]{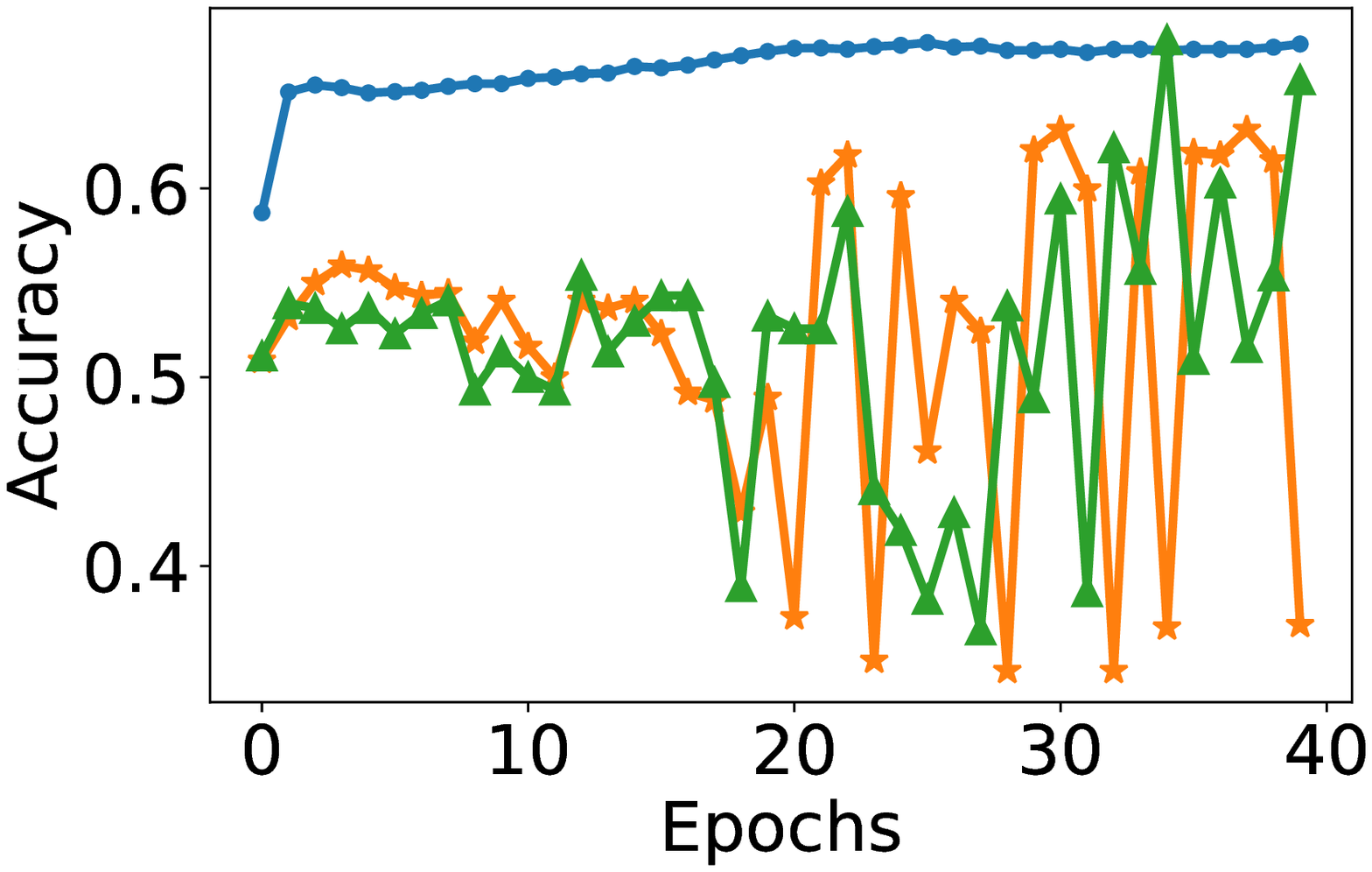}
         \caption{Worst (Original)}
     \end{subfigure}
      \begin{subfigure}[b]{0.48\linewidth}
     \centering
     \includegraphics[width=\textwidth]{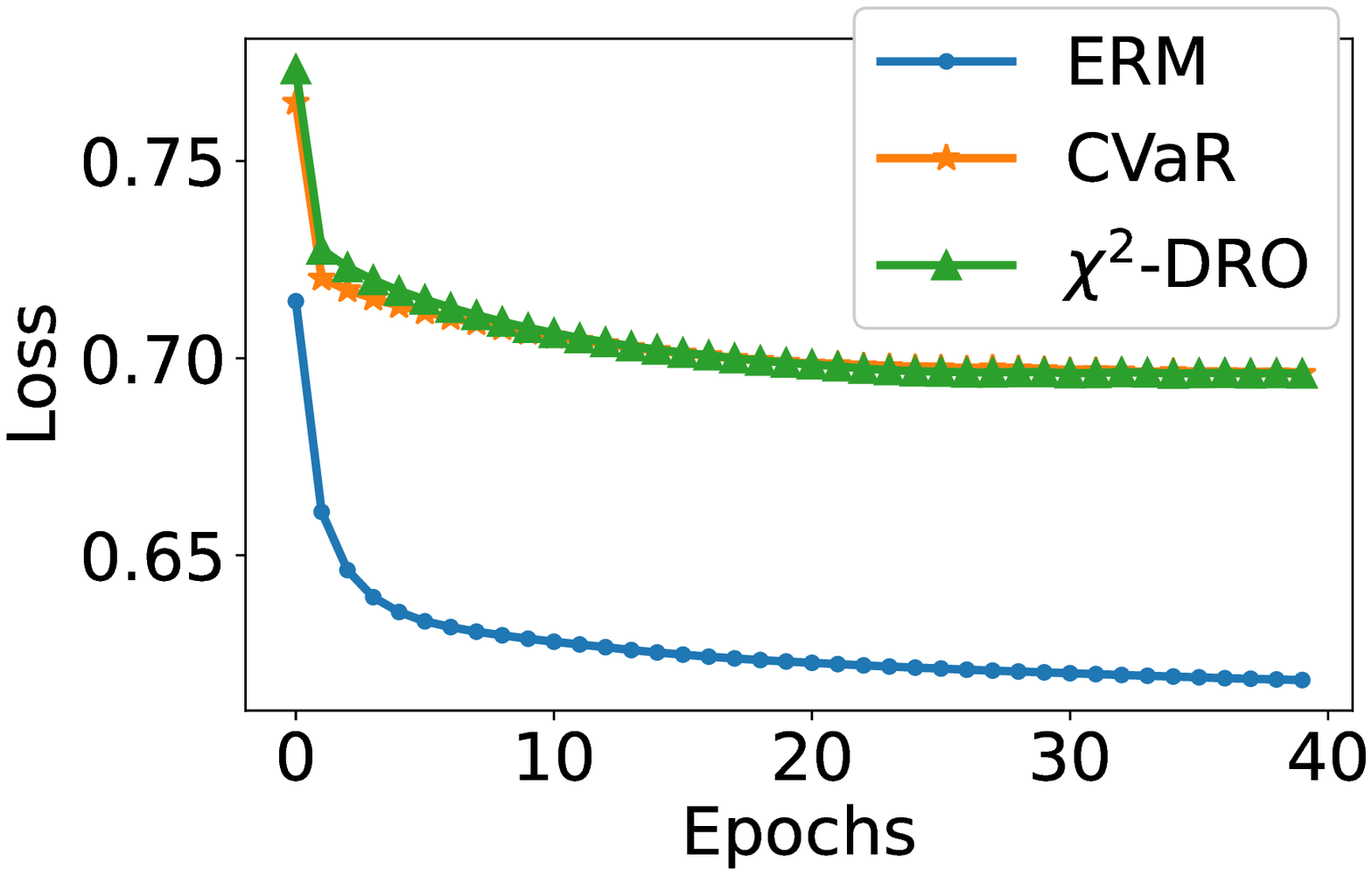}
    \caption{\camready{Train Loss (Original)}}
     \end{subfigure}
     \hfill
     \begin{subfigure}[b]{0.48\linewidth}
     \centering
     \includegraphics[width=\textwidth]{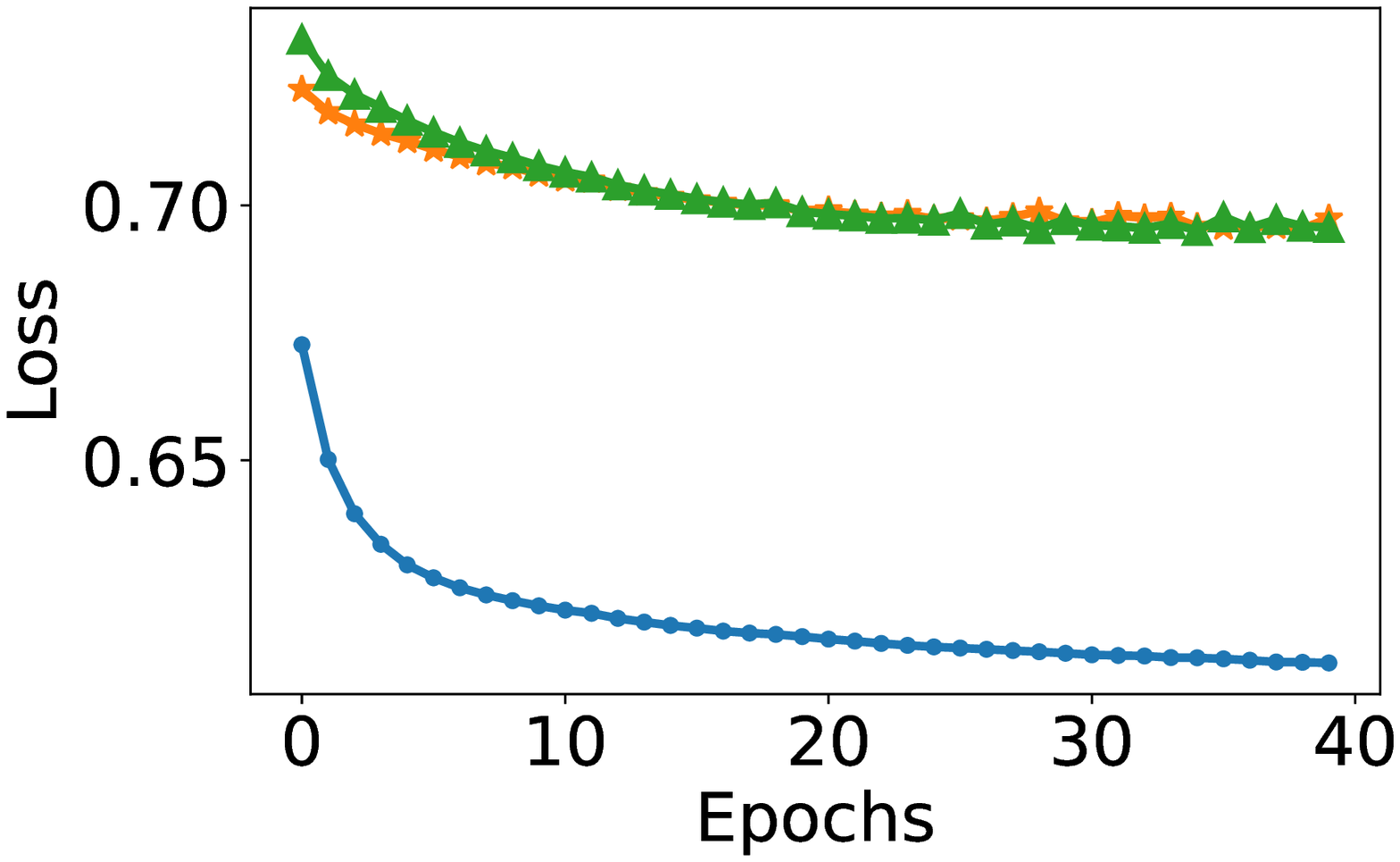}
     \caption{\camready{Test Loss (Original)}}
     \end{subfigure}
     \begin{subfigure}[b]{0.48\linewidth}
         \centering
         \includegraphics[width=\textwidth]{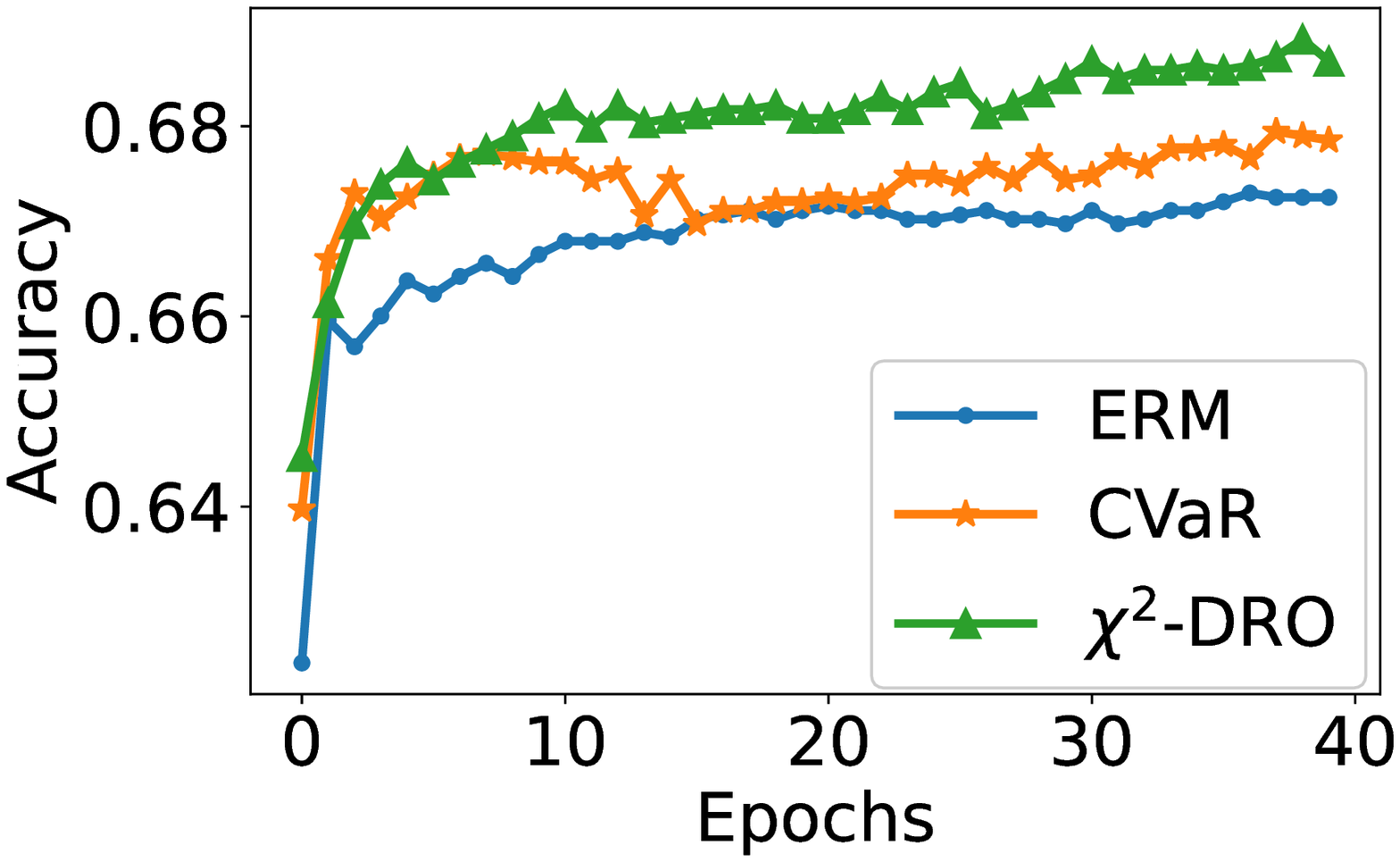}
        \caption{Average (Outliers removed)}
     \end{subfigure}
     \hfill
     \begin{subfigure}[b]{0.48\linewidth}
         \centering
         \includegraphics[width=\textwidth]{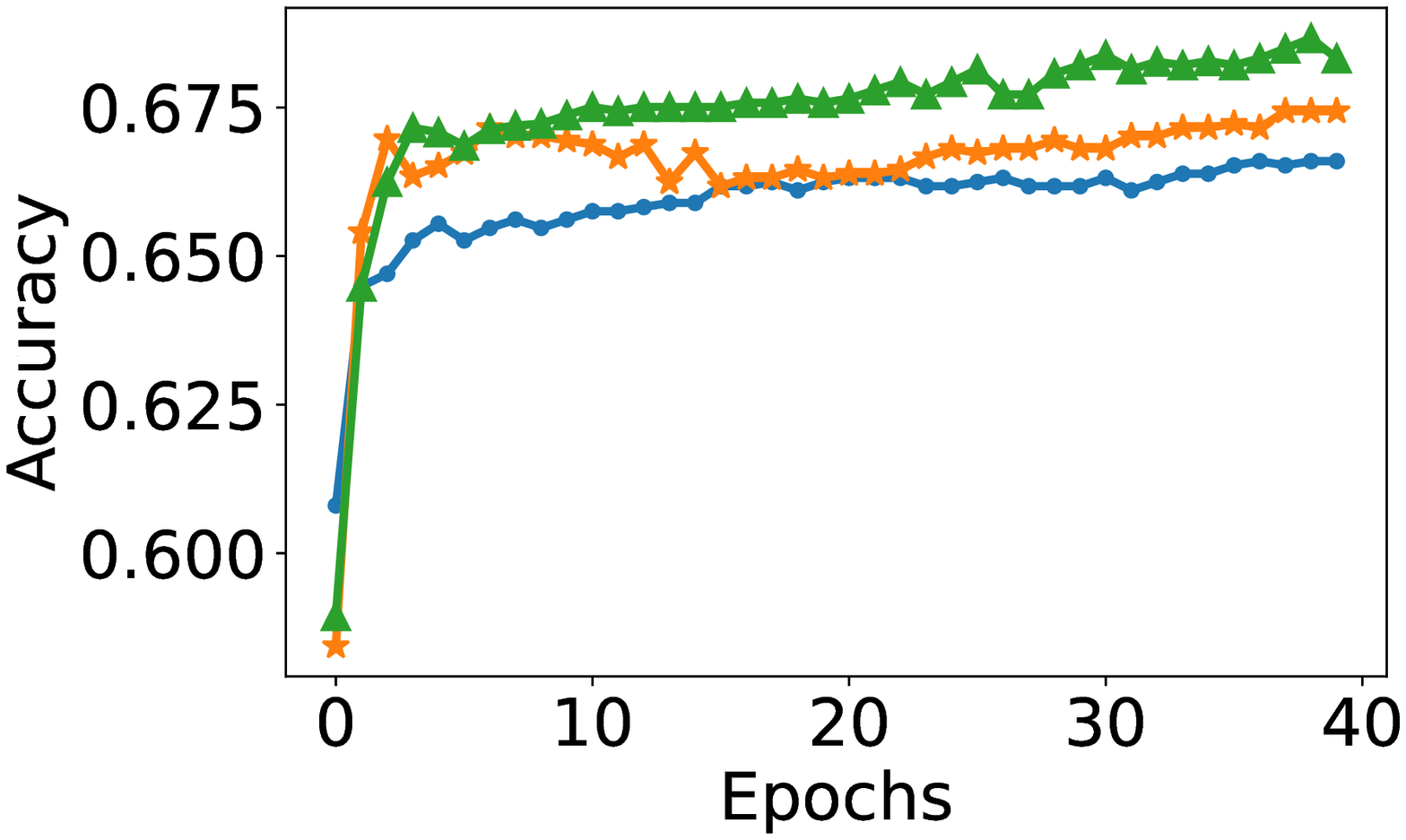}
         \caption{Worst (Outliers removed)}
     \end{subfigure}
     \begin{subfigure}[b]{0.48\linewidth}
         \centering
         \includegraphics[width=\textwidth]{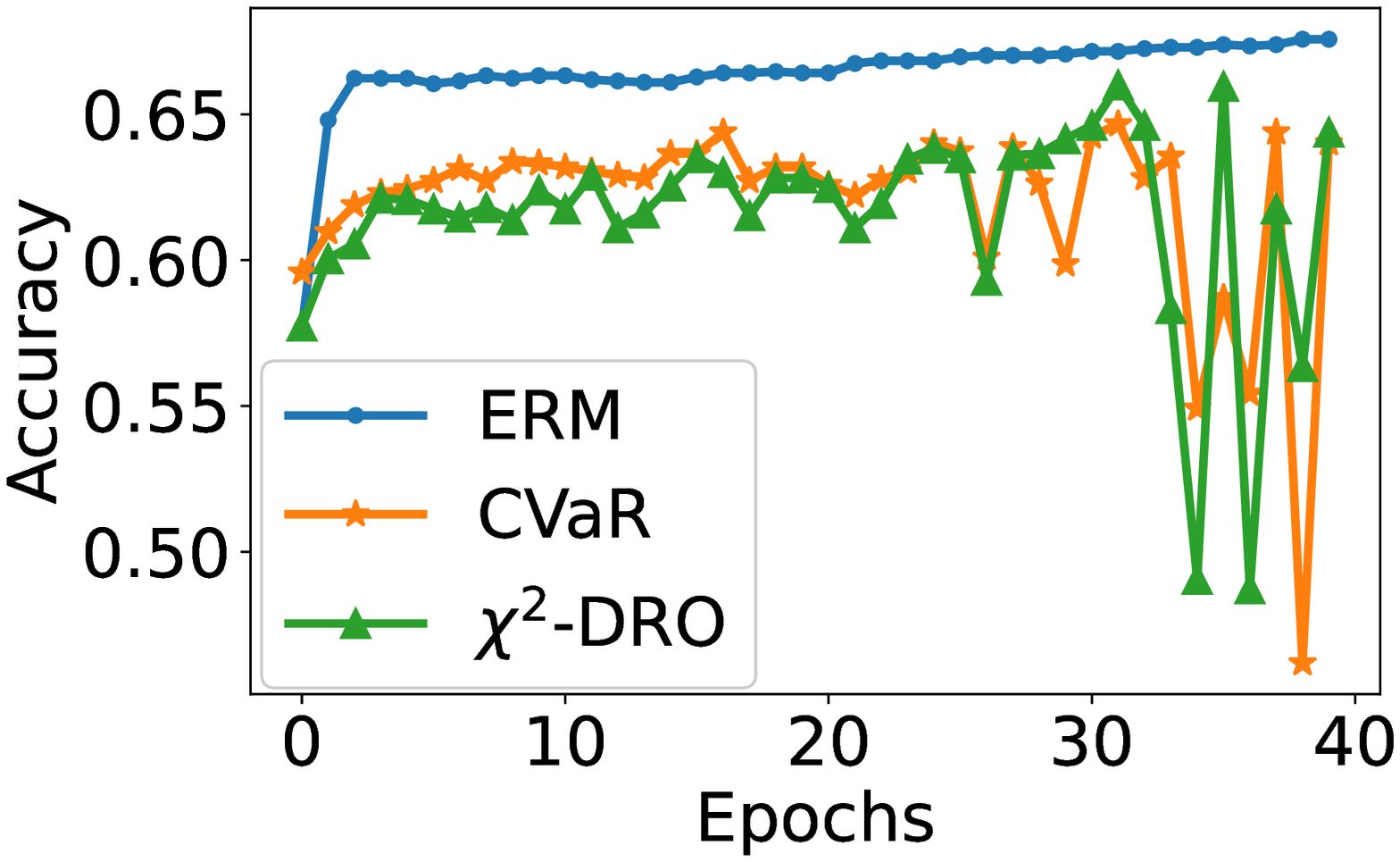}
        \caption{Average (Labels flipped)}
     \end{subfigure}
     \hfill
     \begin{subfigure}[b]{0.48\linewidth}
         \centering
         \includegraphics[width=\textwidth]{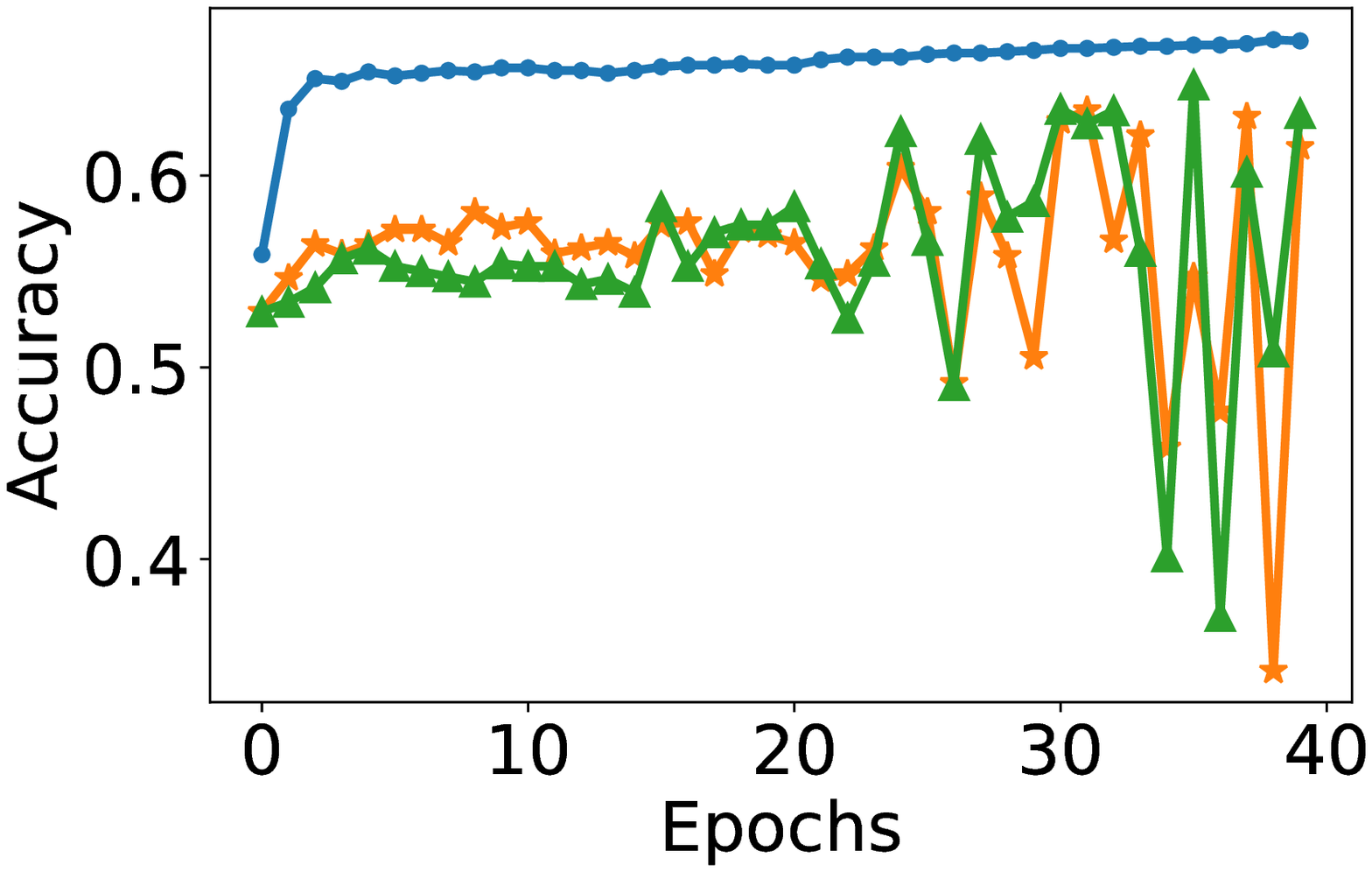}
         \caption{Worst (Labels flipped)}
     \end{subfigure}
     \begin{subfigure}[b]{0.48\linewidth}
         \centering
         \includegraphics[width=\textwidth]{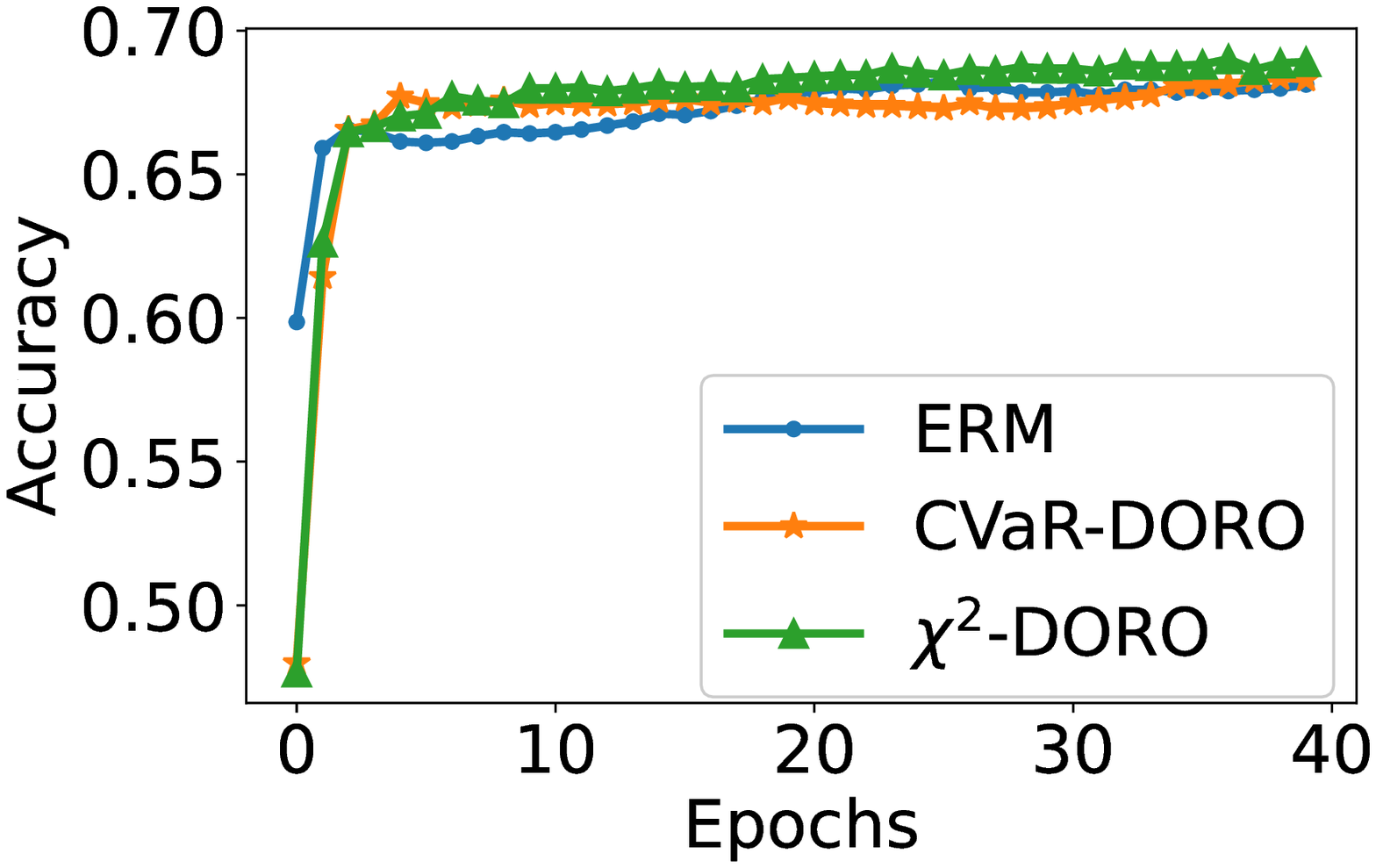}
        \caption{Average (Original)}
     \end{subfigure}
     \hfill
     \begin{subfigure}[b]{0.48\linewidth}
         \centering
         \includegraphics[width=\textwidth]{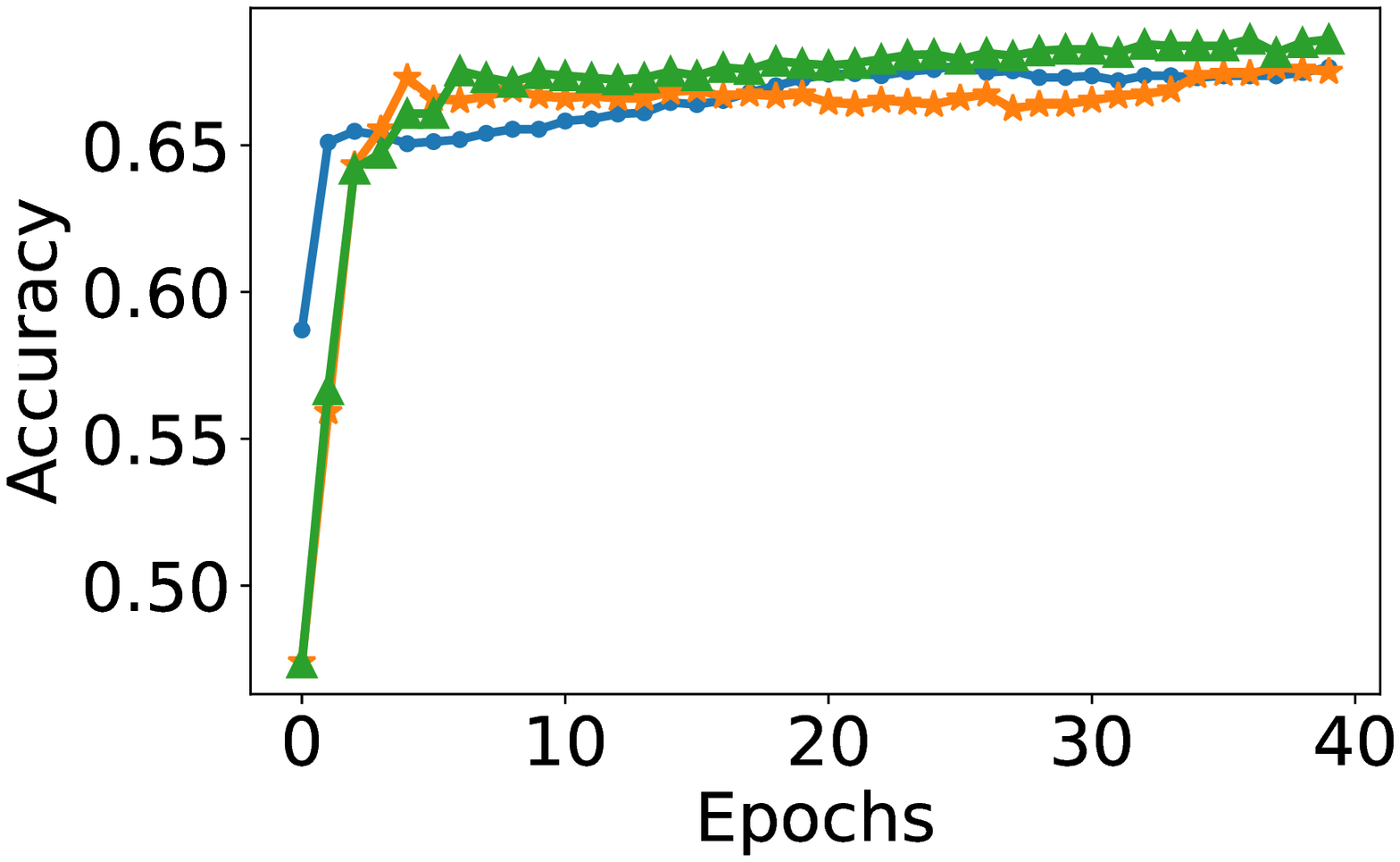}
         \caption{Worst (Original)}
     \end{subfigure}
    \caption{Average/Worst-case test accuracies on the COMPAS dataset (Original, ``clean'' with the outliers removed, and ``clean with label noise'' with 20\% of the labels flipped). \camready{The second row shows the train/test loss of ERM and DRO on the original dataset (average over all samples).} The last row shows the performance of DORO on the original dataset.}
    \label{fig:acc-compas}
    \vskip -0.1in
\end{figure}

We plot the test accuracies (average and worst across four demographic groups) of the models achieved by the three methods in Figure \ref{fig:acc-compas}. The first row shows the results on the original dataset, and the second row shows the results on the ``clean'' dataset with the outliers removed. We can see that in the first row, for both average and worst-case test accuracies, the DRO curves are below the ERM curves and jumping up and down, which implies that DRO has lower performance than ERM and is very unstable on the original dataset. However, the third row shows that DRO becomes good and stable after the outliers are removed. \camready{For comparison, in the second row we plot the train/test loss on the original dataset of the three methods (for ERM we plot the ERM loss, and for DRO we plot the corresponding DRO loss). The train and test losses of DRO descend steadily while the average and worst-case accuracies jump up and down, which indicates that the instability is not an optimization issue, but rather stems from the existence of outliers.} It should also be emphasized that these outliers naturally exist in the original dataset since no outliers have been manually added yet.

To further substantiate our conclusion, we consider another common source of outliers: incorrect labels. We randomly flip 20\% of the labels of the ``clean'' COMPAS dataset with the outliers removed, and run the three training methods again. The results are plotted in the fourth row of Figure \ref{fig:acc-compas}, which shows that while the label noise just slightly influences ERM, it significantly downgrades the performance and stability of the two DRO methods.

Likewise, \cite{hu2018does} also found in their experiments that DRO had even lower performance than ERM (see their Table 1). Essentially, DRO methods minimize the expected risk on the worst portion of the training data, which contains a higher density of outliers than the whole population. Training on these instances naturally result in the observed bad performance of DRO.

In the next section we will propose DORO as a solution to the problem revealed by the experiments in this section. We plot the performances of the two DORO algorithms we implement in the last row of Figure \ref{fig:acc-compas}, which compared to the first row shows that DORO improves the performance and stability of DRO on the original dataset.

\section{DORO}
\label{sec:method}

\paragraph{Problem Setting} 
The goal is to train a model on a dataset with outliers to achieve high tail performance on the clean underlying data distribution $P$. Denote the observed contaminated training distribution by $\ptrain$. We formulate $\ptrain$ with Huber's $\epsilon$-contamination model \cite{huber1992robust}, in which the training instances are i.i.d. sampled from
\begin{equation}
\label{eqn:huber-eps}
\ptrain = (1-\epsilon)P + \epsilon \tilde{P}
\end{equation}
where $\tilde{P}$ is an \textit{arbitrary} outlier distribution, and $0 < \epsilon < \frac{1}{2} $ is the noise level. The objective is to minimize $\gR_{\max}(\theta;P)$, the worst-case risk over the clean distribution $P$.

\paragraph{DORO Risk}
We propose to minimize the following \textit{expected $\epsilon$-DORO risk}:
\begin{equation}
\label{eqn:robust-dro-risk}
\begin{aligned}
&\gR_{D,\rho,\epsilon}(\theta;\ptrain) = \\
&\inf_{P'} \{ \gR_{D,\rho}(\theta;P'): \exists \tilde{P'} \textnormal{ s.t. } \ptrain = (1-\epsilon)P' + \epsilon \tilde{P'} \}
\end{aligned}
\end{equation}
The DORO risk is motivated by the following intuition: we would like the algorithm to avoid the ``hardest'' instances that are likely to be outliers, and the optimal $P'$ of (\ref{eqn:robust-dro-risk}) consists of the ``easiest'' $(1-\epsilon)$-portion of the training set given the current model parameters $\theta$. The $\epsilon$ in DORO is a hyperparameter selected by the user since the real noise level of the dataset is unknown. Let the real noise level of $\ptrain$ be $\epsilon_0$. For any $\epsilon \geq \epsilon_0$, there exist $\tilde{P}_0$ and $\tilde{P}$ such that $\ptrain =(1-\epsilon_0)P + \epsilon_0 \tilde{P}_0=(1-\epsilon)P+\epsilon \tilde{P}$, so we only need to make sure that $\epsilon$ is not less than the real noise level.

The following proposition provides the formula for computing the DORO risk for the Cressie-Read family (See the proof in Appendix \ref{proof:prop-robust-dro-cressie}):
\begin{prop}
\label{prop:robust-dro-cressie}
Let $\ell$ be a continuous non-negative loss function, and suppose $\ptrain$ is a continuous distribution. Then the formula for computing the DORO risk with $D_\beta$ is
\begin{equation}
\label{eqn:robust-dro-cressie}
\begin{aligned}
& \gR_{D_\beta, \rho, \epsilon}(\theta;\ptrain) =  \\
& \quad \inf_\eta \{ c_\beta(\rho) \E_{Z \sim \ptrain}[(\ell(\theta;Z)-\eta)_+^{\beta_*} \mid \\
& \quad P_{Z' \sim \ptrain} (\ell(\theta;Z')>\ell(\theta;Z))\geq\epsilon]^{\frac{1}{\beta_*} } + \eta \}
\end{aligned}
\end{equation}
\end{prop}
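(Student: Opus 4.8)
The plan is to combine the definition (\ref{eqn:robust-dro-risk}) of the DORO risk with the dual characterization (\ref{eqn:dro_dual}) of the Cressie--Read DRO risk, turning the problem into a single one-dimensional optimization over $\eta$ whose integrand is an inner minimization over all ways of splitting $\ptrain$ as $(1-\epsilon)P' + \epsilon\tilde{P'}$. Concretely, after substituting (\ref{eqn:dro_dual}) into (\ref{eqn:robust-dro-risk}) we get a double infimum over $P'$ and over $\eta\in\R$; since these infima are unconstrained relative to each other we may exchange them, and since $c_\beta(\rho)>0$ and $t\mapsto t^{1/\beta_*}$ is continuous and nondecreasing on $[0,\infty)$ (as $\beta_*\ge 1$, including $\beta_*=1$ when $\beta=\infty$), the infimum over $P'$ can be pulled inside the power. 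This reduces everything to evaluating, for each fixed $\eta$, the quantity $\inf_{P'}\E_{P'}[g(Z)]$ where $g(z) = (\ell(\theta;z)-\eta)_+^{\beta_*}$ is a nonnegative function that is nondecreasing in the loss $\ell(\theta;z)$, and $P'$ ranges over probability measures admitting a decomposition $\ptrain = (1-\epsilon)P'+\epsilon\tilde{P'}$.

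Next I would characterize the feasible set: $P'$ is feasible if and only if $(1-\epsilon)P' \le \ptrain$ as measures, because then $\tilde{P'} := \epsilon^{-1}(\ptrain - (1-\epsilon)P')$ is automatically a probability measure, and conversely the decomposition forces this domination. Writing $\mu = (1-\epsilon)P'$, the inner problem becomes: minimize $\int g\,d\mu$ over sub-measures $0\le\mu\le\ptrain$ of total mass $1-\epsilon$. This is a bathtub / Neyman--Pearson problem: since $g$ is monotone in the loss, the optimizer keeps the mass of $\ptrain$ on the sublevel set $A = \{\ell(\theta;\cdot)\le\gamma\}$, where $\gamma$ is chosen so that $\ptrain(A) = 1-\epsilon$ (such a $\gamma$ exists because $\ptrain$, hence the law of $\ell(\theta;Z)$, is non-atomic). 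Optimality of $\mu^\star := \ptrain|_A$ follows from a one-line exchange argument: for any feasible $\mu$, equality of total masses gives $\mu(A^c) = (\ptrain-\mu)(A)$, and using $g\ge(\gamma-\eta)_+^{\beta_*}$ on $A^c$ and $g\le(\gamma-\eta)_+^{\beta_*}$ on $A$ we get $\int g\,d\mu - \int g\,d\mu^\star \ge 0$. Hence $\inf_{P'}\E_{P'}[g] = (1-\epsilon)^{-1}\int_A g\,d\ptrain = \E_{Z\sim\ptrain}[g(Z)\mid Z\in A]$.

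Finally I would rewrite the event $A$ in the distribution-free form used in the statement: since the loss law is non-atomic, $A = \{\ell(\theta;\cdot)\le\gamma\}$ coincides, up to a $\ptrain$-null set, with $\{z : P_{Z'\sim\ptrain}(\ell(\theta;Z')>\ell(\theta;Z))\ge\epsilon\}$ — both events have $\ptrain$-probability $1-\epsilon$ and are nested in the obvious way. Substituting $g = (\ell(\theta;\cdot)-\eta)_+^{\beta_*}$ back into the reduced one-dimensional problem from the first step (and noting no attainment in $\eta$ is needed, since the claim is an equality of infima) yields precisely (\ref{eqn:robust-dro-cressie}).

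I expect the main obstacle to be the inner minimization of the second step (the bathtub principle) together with the measure-theoretic bookkeeping around it: correctly identifying the feasible set as $\{P' : (1-\epsilon)P'\le\ptrain\}$, and using the non-atomicity hypothesis to guarantee an exact threshold $\gamma$ with $\ptrain(\{\ell(\theta;\cdot)\le\gamma\}) = 1-\epsilon$. Without non-atomicity the optimal split would have to distribute mass partially over the level set $\{\ell(\theta;Z)=\gamma\}$, and the clean conditional-expectation formula (\ref{eqn:robust-dro-cressie}) would acquire a boundary correction; the continuity assumption on $\ptrain$ is exactly what rules this out. The remaining steps — exchanging and collapsing the infima, and the rewriting of the conditioning event — are routine.
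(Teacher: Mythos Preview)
Your proposal is correct and follows essentially the same route as the paper: substitute the dual (\ref{eqn:dro_dual}) into (\ref{eqn:robust-dro-risk}), swap the two infima, and then for each fixed $\eta$ identify the optimal $P'$ as $\frac{1}{1-\epsilon}\ptrain$ restricted to the sublevel set $\{\ell(\theta;\cdot)\le \ell^*\}$ with $\ptrain(\ell>\ell^*)=\epsilon$. The only presentational difference is that the paper proves optimality of this $P^*$ via the CDF bound $P'(\ell\le\ell_0)\le\min\{1,\frac{1}{1-\epsilon}\ptrain(\ell\le\ell_0)\}$ together with the tail--integral representation (and then uses an integration by parts to match the conditional--expectation form), whereas you invoke the bathtub/Neyman--Pearson exchange argument directly on the expectation---but these are the same optimality statement in different clothing.
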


\camready{
\paragraph{Remark}
In Proposition \ref{prop:robust-dro-cressie}, we assume the continuity of $\ptrain$ to keep the formula simple. For an arbitrary distribution $\ptrain$, we can obtain a similar formula, but the formula is much more complex than (\ref{eqn:robust-dro-cressie}). The general formula can be found in Appendix \ref{proof:extend-formula}.
}

\begin{algorithm}[!t]
\caption{DORO with $D_\beta$ Divergence}
\label{alg:robust-dro-cressie}
\begin{algorithmic}
\STATE \textbf{Input:} Batch size $n$, outlier fraction $\epsilon$, minimal group size $\alpha$
\FOR{each iteration}
\STATE Sample a batch $z_1,\cdots,z_n \sim \ptrain$
\STATE Compute losses: $\ell_i = \ell(\theta,z_i)$ for $i=1,\cdots,n$
\STATE Sort the losses: $\ell_{i_1} \geq \cdots \geq \ell_{i_n}$
\STATE Find $\eta^* = \argmin_{\eta} F(\theta,\eta)$ where $F(\theta,\eta) = c_\beta(\rho) \cdot [ \frac{1}{n-\lfloor \epsilon n \rfloor} \sum_{j=\lfloor \epsilon n \rfloor+1}^{n} (\ell(\theta;z_{i_j})-\eta)_+^{\beta_*}]^{1/{\beta_*}} +\eta$
\STATE Update $\theta$ by one step to minimize $\ell(\theta)= F(\theta,\eta^*)$ with some gradient method
\ENDFOR
\end{algorithmic}
\end{algorithm}

With this formula, we develop Algorithm \ref{alg:robust-dro-cressie}. In the algorithm, we first order the batch samples according to their training losses, then find the optimal $\eta^*$ using some numerical method (we use Brent's method \cite{brent1971algorithm} in our implementation), and finally update $\theta$ with some gradient method. \camready{Note that generally it is difficult to find the minimizer of the DORO risk for neural networks, and our algorithm is inspired by the ITLM algorithm \cite{shen2019learning}, in which they proved that the optimization converges to ground truth for a few simple problems.} Particularly, using the quantities listed in Table \ref{tab:cressie-read}, we can implement CVaR-DORO and $\chi^2$-DORO. In the sections that follow, we will focus on the performances of CVaR-DORO and $\chi^2$-DORO in particular. We denote the CVaR-DORO risk by $\cvar_{\alpha,\epsilon}(\theta;\ptrain)$, and the $\chi^2$-DORO risk by $\gR_{D_{\chi^2},\rho,\epsilon}(\theta;\ptrain)$.

\section{Theoretical Analysis}
\label{sec:theory}

Having the DORO algorithms implemented, in this section we prove that DORO can effectively handle subpopulation shift in the presence of outliers. The proofs to the results in this section can be found in Appendix \ref{proof:sec-theory}. We summarize our theoretical results as follows:
\begin{enumerate}
    \item The minimizer of DORO over the contaminated distribution $\ptrain$ achieves a DRO risk close to the minimum over the clean distribution $P$ (Theorem \ref{thm:thm5}). We complement our analysis with information-theoretical lower bounds (Theorem \ref{thm:main_LB}) implying that the optimality gaps given by Theorem \ref{thm:thm5} are optimal.
    \item The worst-case risk $\gR_{\max}$ over $P$ is upper bounded by the DORO risk over $\ptrain$ times a constant factor (Theorem \ref{thm:effectiveness}). This result parallels Corollary \ref{prop:dro-fwod} in the uncontaminated setting and guarantees that minimizing the DORO risk over $\ptrain$ effectively minimizes $\gR_{\max}$ over $P$.
\end{enumerate}



Our results are based on the following lemma which lower bounds the DORO risk over $\ptrain$ by the infimum of the original DRO risk in a TV-ball centered at $P$:
\begin{lem}
\label{lem:robust-dro}
Let $\tv(P,Q)=\frac{1}{2}\int_{\gX \times \gY} |P(z) - Q(z)|dz$ be the total variation,  and $\ptrain$ be defined by (\ref{eqn:huber-eps}). Then the DORO risk can be lower bounded by:
\begin{equation}
\label{eqn:lem1}
\begin{aligned}
&\gR_{D,\rho,\epsilon}(\theta;\ptrain) \geq \\
& \quad \inf_{P''}\{ \gR_{D,\rho}(\theta;P''):  \tv (P, P'') \leq \frac{\epsilon}{1-\epsilon} \}
\end{aligned}
\end{equation}
\end{lem}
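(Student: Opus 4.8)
The plan is to prove the lemma by a feasibility-set comparison: I will show that \emph{every} distribution $P'$ that is feasible for the infimum defining $\gR_{D,\rho,\epsilon}(\theta;\ptrain)$ in (\ref{eqn:robust-dro-risk}) is also feasible for the infimum on the right-hand side of (\ref{eqn:lem1}). Once that inclusion of feasible sets is established, for each such $P'$ we have $\gR_{D,\rho}(\theta;P') \geq \inf_{P''}\{\gR_{D,\rho}(\theta;P''): \tv(P,P'')\leq \tfrac{\epsilon}{1-\epsilon}\}$, and taking the infimum of the left-hand side over all feasible $P'$ yields (\ref{eqn:lem1}) immediately.

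So I would fix any $P'$ with $\ptrain = (1-\epsilon)P' + \epsilon\tilde{P'}$ for some probability distribution $\tilde{P'}$. Since $\ptrain$ also equals $(1-\epsilon)P + \epsilon\tilde{P}$ by (\ref{eqn:huber-eps}), subtracting the two representations gives the pointwise identity $(1-\epsilon)\bigl(P(z) - P'(z)\bigr) = \epsilon\bigl(\tilde{P'}(z) - \tilde{P}(z)\bigr)$. Applying $\frac{1}{2}\int_{\gX\times\gY} |\cdot|\,dz$ to both sides and using the definition of total variation, this becomes $(1-\epsilon)\,\tv(P,P') = \epsilon\,\tv(\tilde{P'},\tilde{P})$. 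Since the total variation distance between two probability distributions is at most $1$, we conclude $\tv(P,P') \leq \frac{\epsilon}{1-\epsilon}$, which is exactly the constraint in the right-hand infimum. This establishes the feasibility inclusion and hence the lemma.

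There is essentially no hard step here; the only points needing a little care are bookkeeping. First, one must check that the constraint in the DORO infimum genuinely forces both $P'$ and $\tilde{P'}$ to be bona fide probability distributions (nonnegative, total mass one), since this is what licenses the bound $\tv(\tilde{P'},\tilde{P}) \leq 1$ used above; this is immediate from Huber's contamination model. Second, I would emphasize that the inequality is one-directional by design: a generic $P''$ in the TV-ball around $P$ need not arise from a valid contamination decomposition of $\ptrain$, so (\ref{eqn:lem1}) is a genuine lower bound rather than an equality — which is all that the downstream arguments (Theorems \ref{thm:thm5} and \ref{thm:effectiveness}) require.
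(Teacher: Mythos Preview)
Your proof is correct and follows the same overall strategy as the paper: show that any $P'$ feasible for the DORO infimum (\ref{eqn:robust-dro-risk}) lies in the TV-ball of radius $\tfrac{\epsilon}{1-\epsilon}$ around $P$, so the infimum over this larger set is automatically a lower bound. The only difference is in how the TV bound is obtained: the paper sets $U = P \land P'$, observes pointwise that $(1-\epsilon)U + \epsilon\tilde{P} + \epsilon\tilde{P'} \geq \ptrain$, and integrates to get $\int U \geq \tfrac{1-2\epsilon}{1-\epsilon}$ (equivalently $\tv(P,P') \leq \tfrac{\epsilon}{1-\epsilon}$), whereas your direct subtraction giving $(1-\epsilon)\,\tv(P,P') = \epsilon\,\tv(\tilde{P},\tilde{P'}) \leq \epsilon$ is a slightly more streamlined route to the same inequality.
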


The main results we are about to present only require very mild assumptions. For the first result, we assume that $\ell$ has a bounded $(2k)$-th moment on $P$, a standard assumption in the robust statistics literature:

\begin{thm}	
	\label{thm:thm5}
	Let $\ptrain$ be defined by (\ref{eqn:huber-eps}). Denote the minimizer of the DORO risk by $\hat{\theta}$. If $\ell$ is non-negative, and $\ell(\hat{\theta};Z)$ has a bounded $(2k)$-th moment: $\E_{Z\sim P}[l(\hat{\theta}; Z)^{2k}] = \sigma_{2k}^{2k} < +\infty$, then we have:
	\begin{equation}
	\label{eqn:thm5}
	\begin{aligned}
	\cvar_\alpha(\hat{\theta};P) - \inf_\theta \cvar_\alpha (\theta;P) \leq 
	 O_{\alpha, k}(1) \sigma_{2k}  \epsilon^{1 - \frac{1}{2k}}
	\end{aligned}
	\end{equation}
	and if $k>1$, then we have:
	\begin{equation}
	\label{eqn:thm5-chi2}
	\begin{aligned}
	\gR_{D_{\chi^2},\rho}(\hat{\theta};P) - \inf_\theta 
	\gR_{D_{\chi^2},\rho}(\theta;P) \leq  O_{\rho, k}(1) \sigma_{2k}  \epsilon^{\left(\frac{1}{2} - \frac{1}{2k}\right)}
	\end{aligned}
	\end{equation}
\end{thm}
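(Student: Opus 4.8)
I would run a three-step reduction: (1) a one-sided comparison of the DORO and DRO risks plus Lemma~\ref{lem:robust-dro} to land at a DRO risk over a distribution $P''$ that is TV-close to $P$; (2) a moment-based estimate of how much a TV perturbation can \emph{decrease} the DRO risk of the fixed parameter $\hat\theta$; (3) specialization to CVaR and $\chi^2$. For Step~1: since $\ptrain=(1-\eps)P+\eps\tilde P$ by (\ref{eqn:huber-eps}), the pair $(P,\tilde P)$ is feasible in the infimum defining $\gR_{D,\rho,\eps}(\cdot;\ptrain)$, so $\gR_{D,\rho,\eps}(\theta;\ptrain)\le\gR_{D,\rho}(\theta;P)$ for every $\theta$. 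Evaluating at a clean DRO minimizer $\theta^*$, using optimality of $\hat\theta$ for the DORO risk, and then Lemma~\ref{lem:robust-dro} at $\hat\theta$,
\[
\inf_{\tv(P,P'')\le\eps'}\gR_{D,\rho}(\hat\theta;P'')\;\le\;\gR_{D,\rho,\eps}(\hat\theta;\ptrain)\;\le\;\gR_{D,\rho,\eps}(\theta^*;\ptrain)\;\le\;\inf_\theta\gR_{D,\rho}(\theta;P),
\]
with $\eps':=\eps/(1-\eps)\le 2\eps$. Hence for every $\delta>0$ there is a $P''$ with $\tv(P,P'')\le\eps'$ and $\gR_{D,\rho}(\hat\theta;P'')\le\inf_\theta\gR_{D,\rho}(\theta;P)+\delta$; combining this with the Step~2 bound and letting $\delta\to0$ will finish the proof.

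\textbf{Step 2 (the crux).} I would establish that for any $P''$ with $\tv(P,P'')\le\eps'$ and any $2k\ge\beta_*$,
\[
\gR_{D_\beta,\rho}(\hat\theta;P)-\gR_{D_\beta,\rho}(\hat\theta;P'')\;\le\;c_\beta(\rho)\,\bigl\|(\ell(\hat\theta;\cdot)-\eta'')_+\bigr\|_{L^{2k}(P)}\,(\eps')^{\frac1{\beta_*}-\frac1{2k}},
\]
where $\eta''$ attains the dual infimum in (\ref{eqn:dro_dual}) for $P''$. To see this, plug $\eta''$ into the $P$-side of (\ref{eqn:dro_dual}) as a suboptimal point; the additive $\eta''$ cancels, leaving $c_\beta(\rho)\bigl(\E_P[(\ell-\eta'')_+^{\beta_*}]^{1/\beta_*}-\E_{P''}[(\ell-\eta'')_+^{\beta_*}]^{1/\beta_*}\bigr)$, which by $s^{1/\beta_*}-t^{1/\beta_*}\le(s-t)_+^{1/\beta_*}$ (valid for $\beta_*\ge1$, $s,t\ge0$) is at most $c_\beta(\rho)\bigl(\int(\ell-\eta'')_+^{\beta_*}\,d(P-P'')\bigr)_+^{1/\beta_*}$. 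Writing $P-P''$ through its Hahn decomposition as $\Delta^+-\Delta^-$ with $\Delta^+\le P$ and $\Delta^+(\gX\times\gY)=\tv(P,P'')\le\eps'$, one gets $\int(\ell-\eta'')_+^{\beta_*}\,d(P-P'')\le\int(\ell-\eta'')_+^{\beta_*}\,d\Delta^+=\E_P[(\ell-\eta'')_+^{\beta_*}h]$ for a weight $h=d\Delta^+/dP\in[0,1]$ with $\E_P[h]\le\eps'$; Hölder with exponents $(2k/\beta_*,\,2k/(2k-\beta_*))$ together with $h\le1$ then yields $\E_P[(\ell-\eta'')_+^{2k}]^{\beta_*/(2k)}(\eps')^{1-\beta_*/(2k)}$, i.e.\ the displayed bound. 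The important feature is that only \emph{clean} moments enter, which is exactly what the hypothesis controls.

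\textbf{Step 3 (specialization).} It remains to bound $\|(\ell(\hat\theta;\cdot)-\eta'')_+\|_{L^{2k}(P)}=O(\sigma_{2k})$. For CVaR ($\beta_*=1$, $c_\beta(\rho)=\alpha^{-1}$): on $\{\eta<0\}$ the dual objective is strictly decreasing because $\ell\ge0$ forces $P''(\ell>\eta)=1$ there, so $\eta''\ge0$, hence $(\ell-\eta'')_+\le\ell$ and $\|(\ell-\eta'')_+\|_{L^{2k}(P)}\le\sigma_{2k}$; this gives (\ref{eqn:thm5}) with exponent $1-\tfrac1{2k}$ and constant $O_{\alpha,k}(1)$ after $\eps'\le2\eps$, and needs only $2k>1$. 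For $\chi^2$-DRO ($\beta_*=2$, $c_\beta(\rho)=\sqrt{1+2\rho}$): the Hölder step now requires $2k>\beta_*=2$, i.e.\ $k>1$, and produces exponent $\tfrac12-\tfrac1{2k}$; here $\eta''$ may be negative, but the closed form of the $\chi^2$ dual optimum gives $|\eta''|\le\gR_{D_{\chi^2},\rho}(\hat\theta;P'')/(2\rho)\le(\inf_\theta\gR_{D_{\chi^2},\rho}(\theta;P)+\delta)/(2\rho)$, and the dual at $\eta=0$ with Jensen ($\E_P[\ell^2]^{1/2}\le\sigma_{2k}$ since $k\ge1$) gives $\inf_\theta\gR_{D_{\chi^2},\rho}(\theta;P)\le\gR_{D_{\chi^2},\rho}(\hat\theta;P)\le c_\beta(\rho)\sigma_{2k}$; hence $|\eta''|=O_\rho(\sigma_{2k})$ and $\|(\ell-\eta'')_+\|_{L^{2k}(P)}\le\sigma_{2k}+|\eta''|=O_\rho(\sigma_{2k})$, which yields (\ref{eqn:thm5-chi2}) with constant $O_{\rho,k}(1)$.

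\textbf{Main obstacle.} The substance is entirely in Steps~2--3: converting the total-variation constraint into the ``$h\in[0,1]$ with $\E_P[h]\le\eps'$'' representation (so that only the moments we control appear) and, for $\chi^2$-DRO, pinning down the sign and size of the dual optimum $\eta''$. The fact that $\eta''$ can be negative in the $\chi^2$ case — and must therefore be bounded a priori via the DORO risk itself — is precisely what forces the hypothesis $k>1$ and the weaker exponent $\tfrac12-\tfrac1{2k}$, as opposed to the clean $\eta''\ge0$ and exponent $1-\tfrac1{2k}$ for CVaR.
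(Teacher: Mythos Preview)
Your proposal is correct and follows the same three-step skeleton as the paper (compare DORO to DRO on $P$, invoke Lemma~\ref{lem:robust-dro}, then control the TV-sensitivity of the DRO risk at $\hat\theta$), but two of your technical tools differ from the paper's in ways worth noting.

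\textbf{Step 2.} The paper proves its key lemma by writing $\E[(\ell-\eta)_+^{\beta_*}]$ as a tail integral via integration by parts, splitting the integral at a cutoff $M$, bounding the low part by $\tv(P,P'')$ (since tail probabilities differ by at most TV) and the high part by Markov's inequality, and optimizing over $M$. Your Hahn-decomposition-plus-H\"older argument is a cleaner route to the same exponent $\tfrac{1}{\beta_*}-\tfrac{1}{2k}$: it directly represents the positive part of $P-P''$ as a weight $h\in[0,1]$ with $\E_P[h]\le\eps'$, which makes the appearance of only \emph{clean} moments transparent and avoids the cutoff optimization. Both give constants of the same order.

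\textbf{The $\chi^2$ case.} The paper does an explicit case split on whether $\eta^*(P')<-C_\rho\sigma_{2k}$ (with $C_\rho=\sqrt{1+2\rho}/(2\rho)$): in the ``very negative'' case it shows $\gR_{D_{\chi^2},\rho}(\hat\theta;P')\ge\sqrt{1+2\rho}\,\sigma_{2k}\ge\gR_{D_{\chi^2},\rho}(\hat\theta;P)$ so the sensitivity bound is trivially nonpositive; otherwise it bounds $\|(\ell-\eta^*)_+\|_{L^{2k}(P)}\le(1+C_\rho)\sigma_{2k}$. You instead feed the Step~1 conclusion $\gR_{D_{\chi^2},\rho}(\hat\theta;P'')\le\inf_\theta\gR_{D_{\chi^2},\rho}(\theta;P)+\delta\le\sqrt{1+2\rho}\,\sigma_{2k}+\delta$ back into the closed form for $\eta''$ to get $|\eta''_-|\le\gR_{D_{\chi^2},\rho}(\hat\theta;P'')/(2\rho)=O_\rho(\sigma_{2k})$. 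These are contrapositives of each other and rest on the same structural fact about the $\chi^2$ dual; your version works only for the \emph{specific} near-optimal $P''$ produced by Step~1 (whereas the paper's bound holds uniformly over the TV-ball), but that is all you need. One small tightening: your inequality $\|(\ell-\eta'')_+\|_{L^{2k}(P)}\le\sigma_{2k}+|\eta''|$ is fine, but when $\eta''\ge0$ you should use the sharper $\|(\ell-\eta'')_+\|_{L^{2k}(P)}\le\sigma_{2k}$ directly rather than invoking the $|\eta''|\le\gR/(2\rho)$ bound, which need not hold for positive $\eta''$ when $\rho>1/2$.
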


Furthermore, the above optimality gaps are optimal:
\begin{thm} \label{thm:main_LB}
	There exists a pair of $(P,\ptrain)$ where $\ptrain = (1-\epsilon) P + \epsilon P'$ and $P$ has uniformly bounded $2k$-th moment: $ \forall \theta \in \Theta$, $\E_{P}[l(\theta, Z)^{2k}] \leq \sigma_{2k}^{2k} $ such that for any learner with only access to $\ptrain$, the best achievable error in DRO over $P$ is lower bounded by 
	\begin{align}
	\label{eqn:thm7}
	  &\cvar_\alpha(\hat{\theta};P) - \inf_{\theta \in \Theta} \cvar_\alpha (\theta;P) \geq \Omega_{\alpha,k}(1) \sigma_{2k}  \epsilon^{1 - \frac{1}{2k}} \\ 
	  \label{eqn:thm7-chi2}
	  &\gR_{D_{\chi^2},\rho}(\hat{\theta};P) - \inf_{\theta \in \Theta}
	\gR_{D_{\chi^2},\rho}(\theta;P) \geq   \Omega_{\rho,k}(1) \sigma_{2k}  \epsilon^{\left(\frac{1}{2} - \frac{1}{2k}\right)}
	\end{align}
\end{thm}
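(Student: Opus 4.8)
The plan is a standard Le Cam two–point argument, arranged so that the two hypotheses become \emph{identical} after $\epsilon$–contamination. It suffices to produce a single distribution $\ptrain$ and two clean distributions $P_1,P_2$ over $\gX\times\gY$, together with a parameter set $\Theta$ and a loss $\ell$, such that: (a) $\ptrain\ge(1-\epsilon)P_j$ pointwise for $j=1,2$, so that $(P_1,\ptrain)$ and $(P_2,\ptrain)$ are both valid instances of Huber's model --- this is possible exactly when $\tv(P_1,P_2)\le\frac{\epsilon}{1-\epsilon}$, the very budget appearing in Lemma~\ref{lem:robust-dro}, since $\int\max(P_1,P_2)=1+\tv(P_1,P_2)$; (b) $\sup_{\theta\in\Theta}\E_{P_j}[\ell(\theta;Z)^{2k}]\le\sigma_{2k}^{2k}$ for $j=1,2$; and (c) for each $j$, the DRO objective $\theta\mapsto\gR_{D,\rho}(\theta;P_j)$ is minimised (value $0$) at $\theta=j$, while $\gR_{D,\rho}(3-j;P_j)\ge\Delta$ for a separation $\Delta$ of the advertised order. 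Granting (a)--(c): any learner sees only samples from $\ptrain$, whose law is world–independent, so its (possibly randomised) output $\hat\theta\in\Theta=\{1,2\}$ has a world–independent distribution; in world $j$ the excess DRO risk is $0$ if $\hat\theta=j$ and $\ge\Delta$ if $\hat\theta=3-j$, hence $\max_{j}\E\big[\gR_{D,\rho}(\hat\theta;P_j)-\inf_\theta\gR_{D,\rho}(\theta;P_j)\big]\ge\frac{1}{2}\Delta\big(\Pr[\hat\theta=1]+\Pr[\hat\theta=2]\big)=\frac{1}{2}\Delta$. This is the claimed bound; since the excess risk is $\{0,\Delta\}$–valued it also holds with probability at least $\frac{1}{2}$.

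\emph{The construction.} Take $\Theta=\{1,2\}$ and $Z=(S,V)$ with $S\in\{1,2\}$, $V\ge 0$, and $\ell(\theta;(S,V))=V\cdot\1[S=\theta]$. Put $\epsilon'=\frac{\epsilon}{2(1-\epsilon)}$, which is of order $\epsilon$, and $L=c\,\sigma_{2k}\,(\epsilon')^{-1/(2k)}$ for a small constant $c=c(\alpha,\rho)\le 1$. Under $P_j$: let $S$ be uniform on $\{1,2\}$; conditionally on $S=j$ set $V\equiv 0$; conditionally on $S=3-j$ set $V=L$ with probability $2\epsilon'$ and $V=0$ otherwise. Then under $P_j$ one has $\ell(j;\cdot)\equiv 0$, whereas $\ell(3-j;\cdot)$ equals $L$ with probability $\epsilon'$ and $0$ otherwise; hence $\E_{P_j}[\ell(\theta;Z)^{2k}]\in\{0,\ \epsilon'L^{2k}\}=\{0,\ c^{2k}\sigma_{2k}^{2k}\}$, giving (b), and $P_1,P_2$ differ only in whether the $\epsilon'$–mass atom sits on the $S=1$ or the $S=2$ side, so $\tv(P_1,P_2)=2\epsilon'\le\frac{\epsilon}{1-\epsilon}$, giving (a) (take any $\ptrain\ge(1-\epsilon)\max(P_1,P_2)$ with $\int\ptrain=1$).

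\emph{The separation.} Since $\ell(j;\cdot)\equiv 0$ under $P_j$, we have $\cvar_\alpha(j;P_j)=\gR_{D_{\chi^2},\rho}(j;P_j)=0=\inf_\theta(\cdot)$. For $\theta=3-j$: in the regime $\epsilon'\le\alpha$ (i.e.\ $\epsilon$ small relative to $\alpha$, the only regime in which Theorem~\ref{thm:thm5} is non-vacuous), the $\alpha$–quantile of $\ell(3-j;Z)$ under $P_j$ is $0$, so by the CVaR dual~\eqref{eqn:cvar-dual}, $\cvar_\alpha(3-j;P_j)=\alpha^{-1}\E_{P_j}[\ell(3-j;Z)]=\frac{\epsilon'}{\alpha}L=\Omega_{\alpha,k}(1)\,\sigma_{2k}\,\epsilon^{1-1/(2k)}$. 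For $\chi^2$–DRO, use the primal form~\eqref{eqn:dro-risk}, $\gR_{D_{\chi^2},\rho}(3-j;P_j)=\sup\{\E_Q[\ell(3-j;Z)]:D_{\chi^2}(Q\parallel P_j)\le\rho\}$, and choose $Q$ reweighting the atom at $V=L$ (mass $\epsilon'$ under $P_j$) up to mass $\epsilon'+\sqrt{2\rho\,\epsilon'(1-\epsilon')}$ while renormalising the complement; this $Q$ meets the constraint with equality and yields $\gR_{D_{\chi^2},\rho}(3-j;P_j)\ge\sqrt{2\rho\,\epsilon'(1-\epsilon')}\,L=\Omega_{\rho,k}(1)\,\sigma_{2k}\,\epsilon^{1/2-1/(2k)}$. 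Taking $\Delta$ equal to these quantities in the two cases and invoking the two–point bound above gives \eqref{eqn:thm7} and \eqref{eqn:thm7-chi2}.

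\textbf{Main difficulty.} The crux is the joint budgeting in (b)--(c): the moment constraint forces $L\le\sigma_{2k}(\epsilon')^{-1/(2k)}$, so the displaced mass can be pushed out only so far, and one must verify that the induced gap is \emph{still} of the claimed order. This works because $\cvar_\alpha$ depends linearly on the displaced mass (exponent $1-\frac1{2k}$), whereas $\chi^2$–DRO, through its $L^{\beta_*}=L^{2}$ dependence in the dual~\eqref{eqn:dro_dual}, depends on the square root of it (exponent $\frac12-\frac1{2k}$); checking these against the upper bounds of Theorem~\ref{thm:thm5} is what pins down the exponents as tight. Two minor points: the constant $\Omega_{\rho,k}(1)$ for $\chi^2$ degenerates as $\rho\to0$ ($\alpha\to1$), which is harmless since it may depend on $\rho$; and one should verify that the common $\ptrain$ is a bona fide distribution and that the reduction also covers learners observing i.i.d.\ samples rather than $\ptrain$ itself --- immediate, since the sampling distribution is world–independent.
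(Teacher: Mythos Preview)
Your argument is correct and follows the same two-point (Le Cam) strategy as the paper: build two clean distributions that are indistinguishable after $\epsilon$-contamination, each carrying a loss spike of height $L=\Theta\bigl(\sigma_{2k}\epsilon^{-1/(2k)}\bigr)$ on a set of mass $\Theta(\epsilon)$ so as to saturate the $2k$-th moment bound, and read off the CVaR and $\chi^2$-DRO separations from the resulting Bernoulli loss.

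The realizations differ slightly. The paper uses an \emph{asymmetric} pair $P=P_{M,\Delta,\varepsilon}$ and $Q=P_{0,\Delta,0}$ in which one parameter ($\theta_1$) has constant loss $\Delta$ under both hypotheses while the other ($\theta_0$) carries the spike in one world and not the other; the auxiliary level $\Delta$ is then tuned (to $M\varepsilon/(2\alpha)$ for CVaR, to $\tfrac{M}{2}(\varepsilon+\sqrt{2\rho\varepsilon(1-\varepsilon)})$ for $\chi^2$) so that the optimal $\theta$ flips between worlds, and the exact DRO values are computed via a dedicated dual lemma. Your \emph{symmetric} construction, where in world $j$ the parameter $\theta=j$ has loss identically zero and $\theta=3-j$ carries the spike, is a bit cleaner: it removes the extra parameter $\Delta$ and the side check $\Delta\le\sigma_{2k}$, and your primal witness $Q$ for the $\chi^2$ case is a legitimate shortcut since only a lower bound is needed. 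These are cosmetic differences; the substantive content---spike height fixed by the moment budget, separation computed for a Bernoulli loss---is the same.
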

We make a few remarks on these theoretical results. The $O(\epsilon^{1-\frac{1}{2k}})$ and $ O(\epsilon^{\frac{1}{2} - \frac{1}{2k}})$ rates resemble the existing works on robust mean/moment estimation, see e.g. \cite{DBLP:conf/stoc/KothariSS18,pmlr-v108-prasad20a}. The robust mean estimation problem can be seen as a special case of CVaR when $\alpha=1$, where CVaR of any $\theta$ is just the mean of $l(\theta, Z)$. On the other hand, the connection between CVaR and robust moment estimation can be built with the dual characterization (\ref{eqn:dro_dual}): for any \emph{fixed} dual variable $\eta$, evaluating the dual is nothing but a robust ($\beta_*$-th) moment estimation of the random variable $(l(\theta, Z)-\eta)_+$. However, the problem we are trying to tackle in the above theorems is more challenging, in the sense that (1) DRO risk involves taking infimum over all $\eta \in \R$, but the moments of $(l(\theta, Z)- \eta)_+$ are not \emph{uniformly} bounded for all possible $\eta$'s; and (2) the optimal dual variable $\eta^*$ can be very different even for distributions extremely close in total-variation distance. In Appendix \ref{proof:sec-theory} we discuss how to overcome these difficulties in detail.


Our second result is a robust analogue to Corollary \ref{prop:dro-fwod}: we show that the worst-case risk $\gR_{\max}$ can be upper bounded by a constant factor times the DORO risk $\cvar_{\alpha,\epsilon}$, under the very mild assumption that $\ell$ has a uniformly bounded second moment on $P$ and $\gR_{\max}$ is not exceedingly small:
\begin{thm}
\label{thm:effectiveness}
Let $\ptrain$ be defined by (\ref{eqn:huber-eps}). Let $\alpha=\min_{k=1,\cdots,K} P(\gD_k)$, and $\rho = \frac{1}{2}(\frac{1}{\alpha}-1)^2$. If $\ell(\theta;Z)$ is a non-negative loss function with a uniformly bounded second moment: $\E_{Z\sim P}[\ell(\theta;Z)^2] \leq \sigma^2$ for all $\theta$, then we have:
\begin{equation}
\label{eqn:thm-effectiveness}
\begin{aligned} 
\gR_{\max}(\theta;P) &\leq \max \{ 3 \cvar_{\alpha,\epsilon}(\theta;\ptrain), 3\alpha^{-1} \sigma \sqrt{\frac{\epsilon}{1-\epsilon}} \} \\ 
& \leq \max \{ 3 D_{\chi^2,\rho,\epsilon}(\theta;\ptrain), 3\alpha^{-1} \sigma \sqrt{\frac{\epsilon}{1-\epsilon}} \}
\end{aligned}
\end{equation}
\end{thm}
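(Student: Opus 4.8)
The plan is to combine Lemma~\ref{lem:robust-dro} with a total-variation stability estimate for $\cvar_\alpha$, using Corollary~\ref{prop:dro-fwod} to pass from $\gR_{\max}$ to $\cvar_\alpha$ on the clean distribution. Write $\gamma=\frac{\epsilon}{1-\epsilon}$. By Lemma~\ref{lem:robust-dro}, $\cvar_{\alpha,\epsilon}(\theta;\ptrain)\ge\inf\{\cvar_\alpha(\theta;P''):\tv(P,P'')\le\gamma\}$, so it suffices to prove that for \emph{every} $P''$ with $\tv(P,P'')\le\gamma$,
\[
\gR_{\max}(\theta;P)\le\max\{3\cvar_\alpha(\theta;P''),\ 3\alpha^{-1}\sigma\sqrt\gamma\}.
\]
Taking the infimum over such $P''$ and using Lemma~\ref{lem:robust-dro} then gives the first inequality of~\eqref{eqn:thm-effectiveness} (the constant and $\gR_{\max}$ do not depend on $P''$, and $\max$ with a constant commutes with $\inf$). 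The second inequality is immediate: by~(\ref{eqn:robust-dro-risk}), $\cvar_{\alpha,\epsilon}(\theta;\ptrain)$ and $\gR_{D_{\chi^2},\rho,\epsilon}(\theta;\ptrain)$ are infima of $\cvar_\alpha(\theta;P')$ and $\gR_{D_{\chi^2},\rho}(\theta;P')$ over the \emph{same} family of components $P'$ of $\ptrain$, and $\cvar_\alpha(\theta;P')\le\gR_{D_{\chi^2},\rho}(\theta;P')$ pointwise by Corollary~\ref{prop:dro-fwod}.

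For the displayed inequality I would first record two elementary facts about the clean distribution: $\gR_{\max}(\theta;P)\le\cvar_\alpha(\theta;P)=:c$ by Corollary~\ref{prop:dro-fwod}, and $c\le\sigma/\sqrt\alpha$ by Cauchy--Schwarz applied to the primal form $\cvar_\alpha(\theta;P)=\sup\{\E_P[\ell w]:0\le w\le\alpha^{-1},\ \E_P[w]=1\}$ (since $w^2\le\alpha^{-1}w$ gives $\E_P[w^2]\le\alpha^{-1}$). Next, pick a maximal-loss set $D$ with $P(D)=\alpha$, so that $\E_P[\ell(\theta;Z)\1_D]=\alpha c$ (splitting an atom of $\ell(\theta;Z)$ at the $\alpha$-quantile if necessary). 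Writing $\mu=\min(P,P'')$ as measures, we have $P-\mu\ge 0$, $P-\mu\le P$, and $\|P-\mu\|=\tv(P,P'')\le\gamma$; since $\ell\ge 0$, $\E_{P''}[\ell\1_D]\ge\E_\mu[\ell\1_D]=\E_P[\ell\1_D]-\int\ell\,d(P-\mu)$, and Cauchy--Schwarz with the uniform second-moment bound gives $\int\ell\,d(P-\mu)\le\sigma\sqrt\gamma$, hence $\E_{P''}[\ell\1_D]\ge\alpha c-\sigma\sqrt\gamma$. Finally $P''(D)\le P(D)+\gamma=\alpha+\gamma$, so testing the primal form of $\cvar_\alpha(\theta;P'')$ against $w=\1_D/\max(P''(D),\alpha)$ — extended to a feasible weight with density $\le\alpha^{-1}$ when $P''(D)<\alpha$, which is possible because $\alpha\le1$ — yields
\[
\cvar_\alpha(\theta;P'')\ \ge\ \frac{\alpha c-\sigma\sqrt\gamma}{\alpha+\gamma}.
\]

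It remains to combine these. If $c\le 3\alpha^{-1}\sigma\sqrt\gamma$ we are done, since $\gR_{\max}(\theta;P)\le c$. Otherwise $\sigma\sqrt\gamma<\alpha c/3$, and together with $c\le\sigma/\sqrt\alpha$ this forces $\gamma<\alpha/9$; plugging both into the last display gives $\cvar_\alpha(\theta;P'')>\frac{\alpha c-\alpha c/3}{\alpha+\alpha/9}=\frac{3c}{5}$, so $\gR_{\max}(\theta;P)\le c<3\cvar_\alpha(\theta;P'')$, as required. I expect the stability step to be the main obstacle: one must localize to a set of $P$-mass exactly $\alpha$ while the loss is unbounded, so the moment assumption enters only through Cauchy--Schwarz on the ``moved'' mass, and one must carefully track how the normalization constraint $\E_{P''}[w]=1$ interacts with the fact that $P''(D)$ can lie slightly above or below $\alpha$. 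Realizing a level-$\alpha$ set exactly when $\ell(\theta;Z)$ has atoms is a routine technicality handled by atom-splitting (or a limiting argument), and the $\chi^2$ conclusion requires no extra work beyond the pointwise comparison already invoked.
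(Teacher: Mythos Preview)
Your proof is correct, and the overall architecture (reduce via Lemma~\ref{lem:robust-dro} to a TV-stability bound for $\cvar_\alpha$, then do a case split against the threshold $3\alpha^{-1}\sigma\sqrt\gamma$, and finally lift to $\chi^2$ by the pointwise comparison of Corollary~\ref{prop:dro-fwod}) matches the paper exactly. The difference is in how the stability bound is obtained.

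The paper works through the \emph{dual} and simply invokes Lemma~\ref{lem:key-technical} with $\beta_*=1$, $k=1$: for every $\eta\ge 0$,
\[
\E_P[(\ell-\eta)_+]-\E_{P'}[(\ell-\eta)_+]\ \le\ 2\,\E_P[(\ell-\eta)_+^2]^{1/2}\,\tv(P,P')^{1/2}\ \le\ 2\sigma\sqrt\gamma,
\]
whence $\cvar_\alpha(\theta;P)-\cvar_\alpha(\theta;P')\le 2\alpha^{-1}\sigma\sqrt\gamma$. The case split then finishes in one line: if $\cvar_\alpha(\theta;P)>3\alpha^{-1}\sigma\sqrt\gamma$ the ratio $\cvar_\alpha(\theta;P')/\cvar_\alpha(\theta;P)$ is at least $1-2/3=1/3$. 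Your route instead attacks the \emph{primal} form directly: you localize to a top-$\alpha$ set $D$ under $P$, move mass via $\mu=P\wedge P''$, and test the primal of $\cvar_\alpha(\theta;P'')$ against $\1_D$. This yields the slightly messier bound $\cvar_\alpha(\theta;P'')\ge(\alpha c-\sigma\sqrt\gamma)/(\alpha+\gamma)$, which forces you to bring in the auxiliary estimate $c\le\sigma/\sqrt\alpha$ to tame the denominator. The payoff is that your argument is self-contained (it does not rely on Lemma~\ref{lem:key-technical}) and in fact gives a sharper constant ($c<\tfrac{5}{3}\cvar_\alpha(\theta;P'')$ rather than $c\le 3\cvar_\alpha(\theta;P')$); the paper's version is shorter because it reuses machinery already developed for Theorem~\ref{thm:thm5}.
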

Note that a similar result can be derived under the bounded $2k$-th moment condition with different constants.
\section{Experiments}
\label{sec:experiments}
In this section, we conduct large-scale experiments on modern datasets. Our results show that DORO improves the performance and stability of DRO. We also analyze the effect of hyperparameters on DRO and DORO.

\subsection{Setup}

\paragraph{Datasets} 
Our goal is to apply DRO to real tasks with subpopulation shift on modern datasets. While many previous work used small tabular datasets such as COMPAS, these datasets are insufficient for our purpose. Therefore, apart from COMPAS, we use two large datasets: CelebA \cite{liu2015deep} and CivilComments-Wilds \cite{borkan2019nuanced,koh2020wilds}. CelebA is a widely used vision dataset with 162,770 training instances, and CivilComments-Wilds is a recently released language dataset with 269,038 training instances. Both datasets are captured in the wild and labeled by potentially biased humans, so they can reveal many challenges we need to face in practice.

We summarize the datasets we use as follows: (i) COMPAS: recidivism prediction, where the target is whether the person will reoffend in two years; (ii) CelebA: human face recognition, where the target is whether the person has blond hair; (iii) CivilComments-Wilds: toxicity identification, where the target is whether the user comment contains toxic contents. All targets are binary. For COMPAS, we randomly sample 70\% of the instances to be the training data (with a fixed random seed) and the rest is the validation/testing data. Both CelebA and CivilComments-Wilds have official train-validation-test splits, so we use them directly. 

\paragraph{Domain Definition}
On COMPAS we define 4 domains (subpopulations), and on CelebA and CivilComments-Wilds we define 16 domains for each. Our domain definitions cover several types of subpopulation shift, such as different demographic groups, class imbalance, labeling biases, confounding variables, etc. See Appendix \ref{app:domain-def} for details.


\paragraph{Training}
We use a two-layer feed-forward neural network activated by ReLU on COMPAS, a ResNet18 \cite{he2016deep} on CelebA, and a BERT-base-uncased model \cite{devlin-etal-2019-bert} on CivilComments-Wilds. On each dataset, we run ERM, CVaR, $\chi^2$-DRO, CVaR-DORO and $\chi^2$-DORO. Each algorithm is run 300 epochs on COMPAS, 30 epochs on CelebA and 5 epochs on CivilComments-Wilds. For each method we collect the model achieved at the end of every epoch, and select the best model through validation. (On CivilComments-Wilds we collect 5 models each epoch, one for every $\sim$20\% of the training instances.)

\paragraph{Model Selection}
To select the best model, we assume that the domain membership of each instance is available in the validation set, and select the model with the highest worst-case validation accuracy. This is an oracle strategy since it requires a domain-aware validation set. Over the course of our experiments, we have realized that model selection with no group labels during validation is a very hard problem. On the other hand, model selection has a huge impact on the performance of the final model. We include some preliminary discussions on this issue in Appendix \ref{app:model-select}. Since model selection is not the main focus of this paper, we pose it as an open question.

\subsection{Results}
\label{sec:exp-results}

\begin{table*}[!t]
\caption{The average and worst-case test accuracies of the best models achieved by different methods. (\%)}
\label{tab:results-acc}
\begin{center}
\vskip 0.06in
\begin{small}
\begin{tabular}{ cccc } 
 \toprule
 \textbf{Dataset} & \textbf{Method} & \textbf{Average Accuracy}  & \textbf{Worst-case Accuracy}\\ 
 \midrule
 \multirow{5}{*}{COMPAS} & ERM & $69.31 \pm 0.19$  & $68.83 \pm 0.18$ \\ 
 & CVaR & $68.52 \pm 0.31$ & $68.22 \pm 0.30$ \\ 
 & CVaR-DORO & $69.38 \pm 0.10$ & $69.11 \pm 0.05$ \\ 
 & $\chi^2$-DRO & $67.93 \pm 0.40$ & $67.32 \pm 0.60$  \\ 
 & $\chi^2$-DORO & $69.62 \pm 0.16$  & $69.22 \pm 0.11$ \\ 
 \midrule
 \multirow{5}{*}{CelebA} & ERM & $95.01 \pm 0.38$ & $53.94 \pm 2.02$ \\ 
 & CVaR & $82.83 \pm 1.33$ & $66.44 \pm 2.34$ \\ 
 & CVaR-DORO & $92.91 \pm 0.48$ & $72.17 \pm 3.14$  \\ 
 & $\chi^2$-DRO & $83.85 \pm 1.42$ & $67.76 \pm 3.22$ \\ 
 & $\chi^2$-DORO & $82.18 \pm 1.17$ & $68.33 \pm 1.79$ \\ 
 \midrule
 \multirow{5}{*}{CivilComments-Wilds} & ERM & $92.04 \pm 0.24$ & $64.62 \pm 2.48$ \\ 
 & CVaR & $89.11 \pm 0.76$ & $63.90 \pm 4.42$ \\ 
 & CVaR-DORO & $90.45 \pm 0.70$ & $68.00 \pm 2.10$ \\ 
 & $\chi^2$-DRO & $90.08 \pm 0.92$ & $65.55 \pm 1.51$ \\ 
 & $\chi^2$-DORO & $90.11 \pm 1.09$ & $67.19 \pm 2.51$ \\ 
 \bottomrule
\end{tabular}
\end{small}
\end{center}
\vskip -0.15in
\end{table*}

\begin{table}[!t]
\vskip -0.1in
\caption{Standard deviations of average/worst-case test accuracies during training on CelebA. ($\alpha=0.1$ for CVaR/CVaR-DORO; $\alpha=0.3$ for $\chi^2$-DRO/$\chi^2$-DORO. $\epsilon=0.01$) (\%)}
\label{tab:acc-std-celeba}
\vskip 0.06in
\begin{center}
\begin{small}
\begin{tabular}{c|cc}
\toprule
\textbf{Method} & \textbf{Average} & \textbf{Worst-case}   \\
\midrule
ERM & $0.73 \pm 0.06$ & $8.59 \pm 0.90$  \\ 
CVaR & $11.53 \pm 1.72$ & $21.47 \pm 0.71$ \\ 
CVaR-DORO & $4.03 \pm 1.57$ & $16.84 \pm 0.91$ \\ 
$\chi^2$-DRO & $8.88 \pm 2.98$ & $19.06 \pm 1.18$ \\ 
$\chi^2$-DORO & $1.60 \pm 0.34$ & $13.01 \pm 1.40$ \\ 
\bottomrule
\end{tabular}
\end{small}
\end{center}
\vskip -0.2in
\end{table}

\begin{figure}[!t]
     \centering
     \begin{subfigure}[b]{0.48\linewidth}
         \centering
         \includegraphics[width=\textwidth]{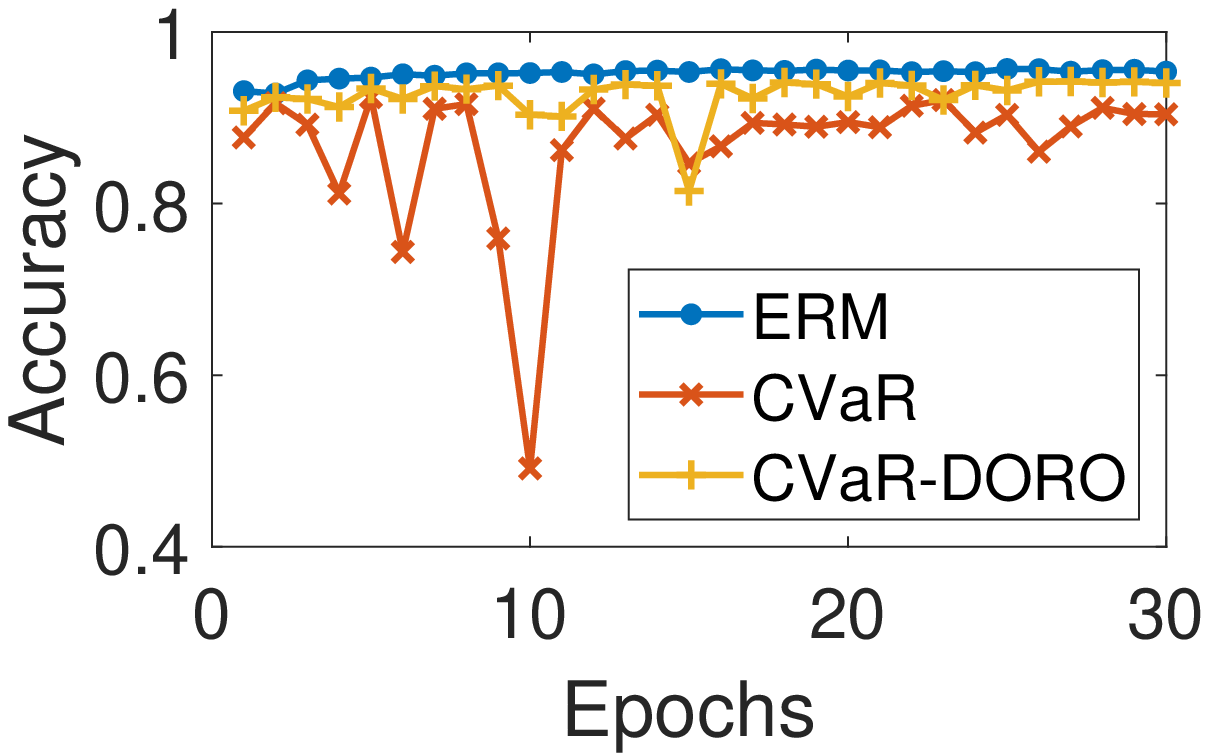}
         \caption{Average Accuracy}
         \label{fig:epochs-celeba-cvar-a}
     \end{subfigure}
     \hfill
     \begin{subfigure}[b]{0.48\linewidth}
         \centering
         \includegraphics[width=\textwidth]{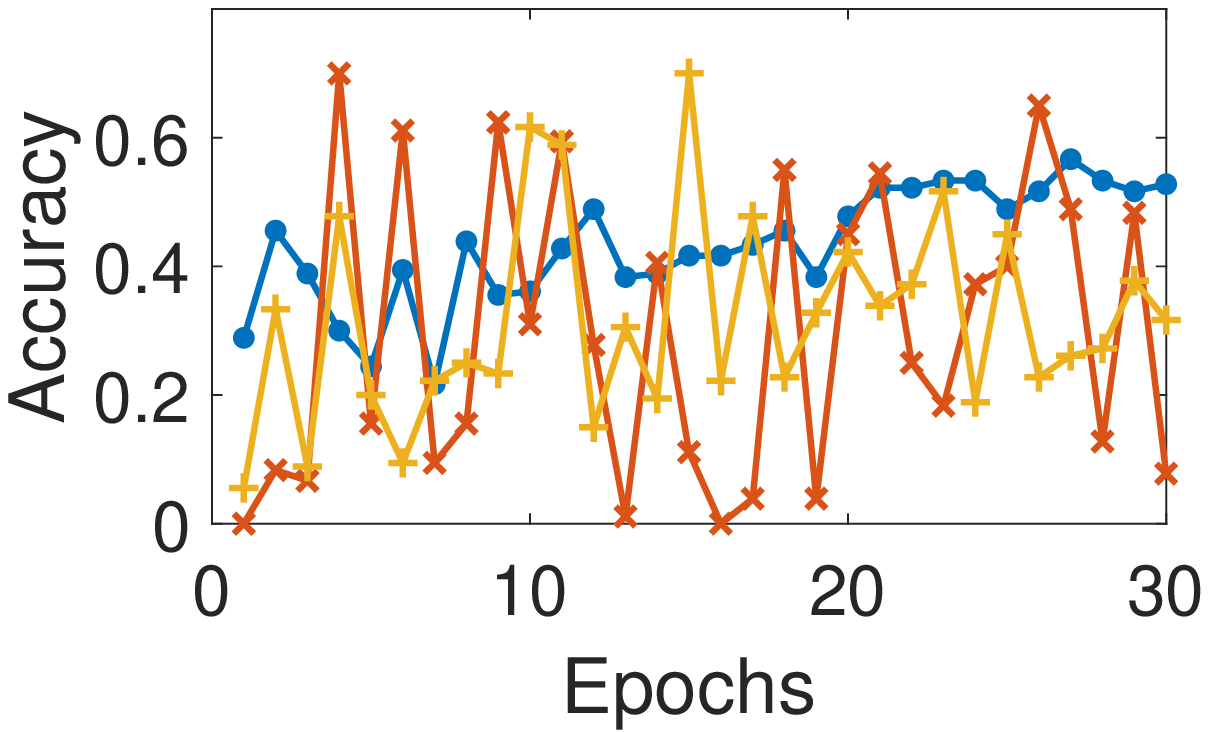}
         \caption{Worst-case Accuracy}
     \end{subfigure}
     \vskip -0.1in
    \caption{Test accuracies of CVaR and CVaR-DORO on CelebA ($\alpha=0.1$, $\epsilon = 0.01$).}
    \label{fig:epochs-celeba-cvar}
    \vskip -0.1in
\end{figure}

\begin{figure}[!t]
     \centering
     \begin{subfigure}[b]{0.48\linewidth}
         \centering
         \includegraphics[width=\textwidth]{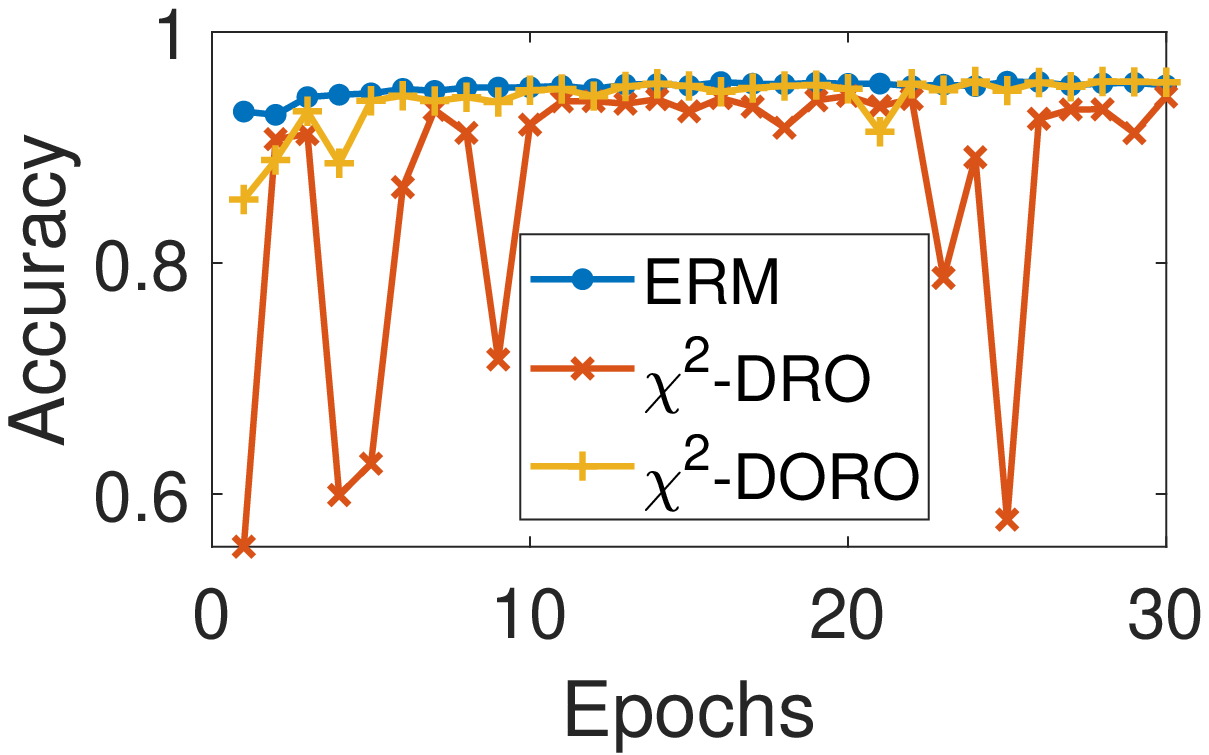}
         \caption{Average Accuracy}
         \label{fig:epochs-celeba-chisq-a}
     \end{subfigure}
     \hfill
     \begin{subfigure}[b]{0.48\linewidth}
         \centering
         \includegraphics[width=\textwidth]{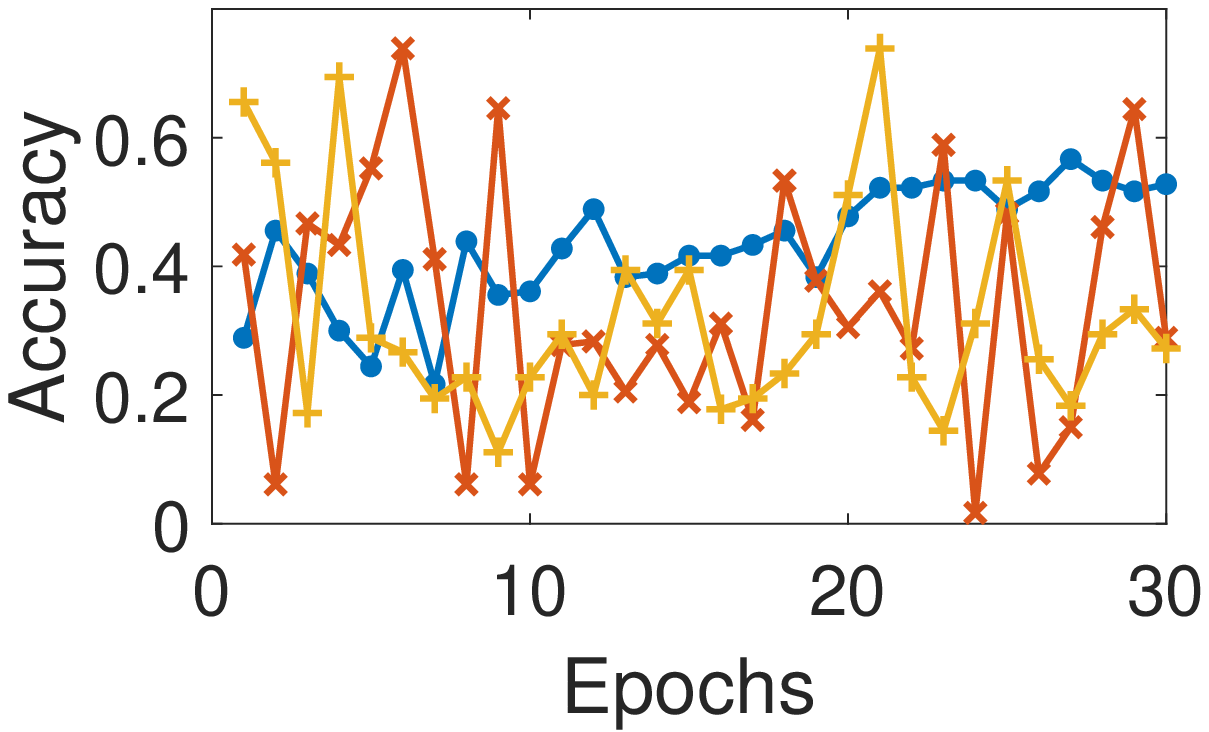}
         \caption{Worst-case Accuracy}
     \end{subfigure}
     \vskip -0.1in
    \caption{Test accuracies of $\chi^2$-DRO and $\chi^2$-DORO on CelebA ($\alpha=0.3$, $\epsilon = 0.01$).}
    \label{fig:epochs-celeba-chisq}
    \vskip -0.2in
\end{figure}

The 95\% confidence intervals of the mean test accuracies on each dataset are reported in Table \ref{tab:results-acc}. For every DRO and DORO method, we do a grid search to pick the best $\alpha$ and $\epsilon$ that achieve the best worst-case accuracy (see the optimal hyperparameters in Appendix \ref{app:best-param}). Each experiment is repeated 10 times on COMPAS and CelebA, and 5 times on CivilComments-Wilds with different random seeds. Table \ref{tab:results-acc} clearly shows that on all datasets, DORO consistently improves the average and worst-case accuracies of DRO. 

Next, we analyze the stability of the algorithms on the CelebA dataset. We use the $\alpha$ that achieves the optimal DRO performance for each of CVaR and $\chi^2$-DRO, and compare them to DORO with the same value of $\alpha$ and $\epsilon=0.01$. $\chi^2$-DRO achieves its optimal performance with a bigger $\alpha$ than CVaR because it is less stable. To quantitatively compare the stability, we compute the standard deviations of the test accuracies across epochs and report the results in Table \ref{tab:acc-std-celeba}. To further visualize the training dynamics, we run all algorithms with one fixed random seed, and plot the test accuracies during training in Figures \ref{fig:epochs-celeba-cvar} and \ref{fig:epochs-celeba-chisq}.
Table \ref{tab:acc-std-celeba} shows that the standard deviation of the test accuracy of DORO is smaller and in Figures \ref{fig:epochs-celeba-cvar-a} and \ref{fig:epochs-celeba-chisq-a} the DORO curves are flatter than the DRO curves, which implies that DORO improves the stability of DRO. Although it is hard to tell whether DORO has a more stable worst-case accuracy from the figures, our quantitative results in Table \ref{tab:acc-std-celeba} confirm that DORO has more stable worst-case test accuracies.

\subsection{Effect of Hyperparameters}
\label{sec:exp-effect}
\begin{figure}[!t]
     \centering
     \begin{subfigure}[b]{0.48\linewidth}
         \centering
         \includegraphics[width=\textwidth]{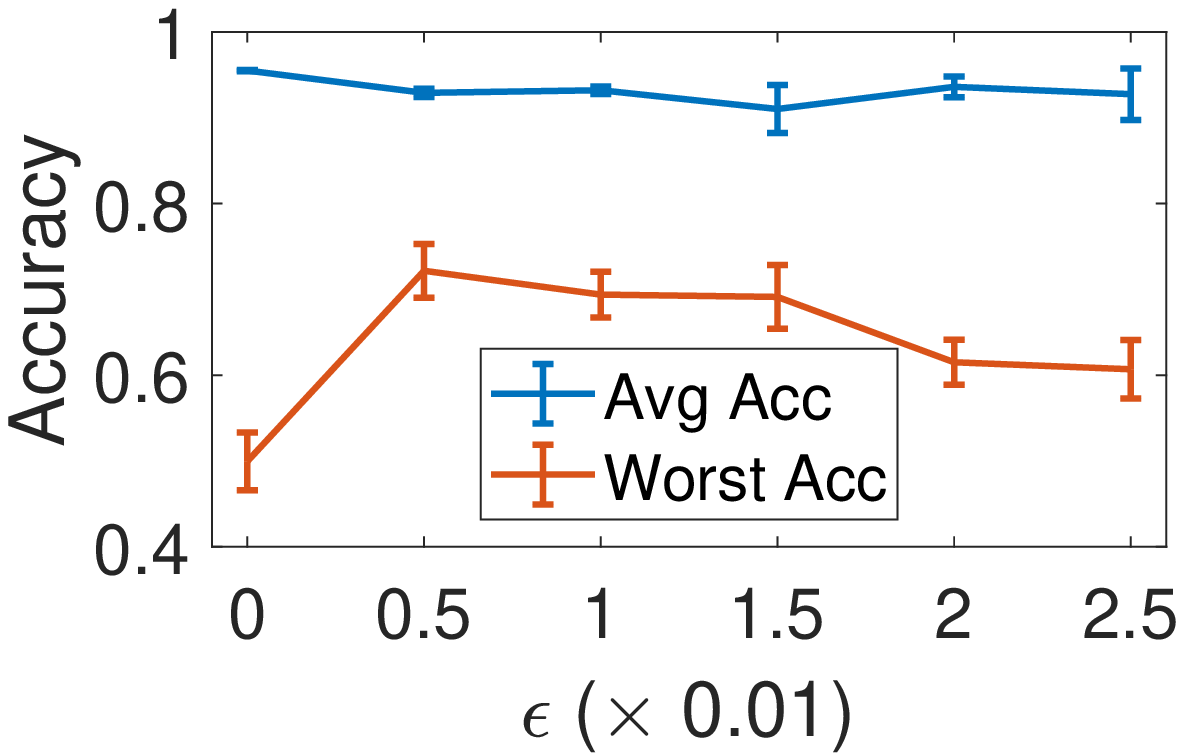}
         \caption{CVaR-DORO}
     \end{subfigure}
     \hfill
     \begin{subfigure}[b]{0.48\linewidth}
         \centering
         \includegraphics[width=\textwidth]{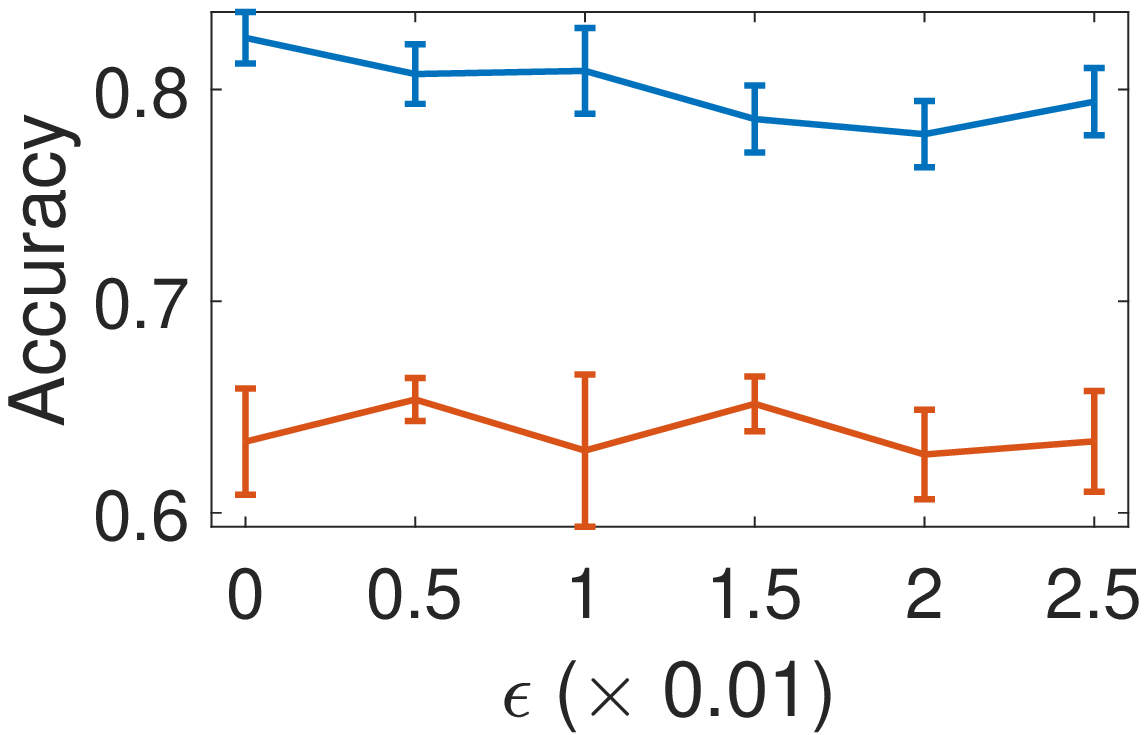}
         \caption{$\chi^2$-DORO}
     \end{subfigure}
    \caption{Effect of $\epsilon$ on the test accuracies of CVaR/$\chi^2$-DORO on CelebA ($\alpha=0.2$). DORO with $\epsilon=0$ is equivalent to DRO.}
    \label{fig:eps-celeba}
    \vskip -0.1in
\end{figure}

\begin{figure}[!t]
     \centering
     \begin{subfigure}[b]{0.48\linewidth}
         \centering
         \includegraphics[width=\textwidth]{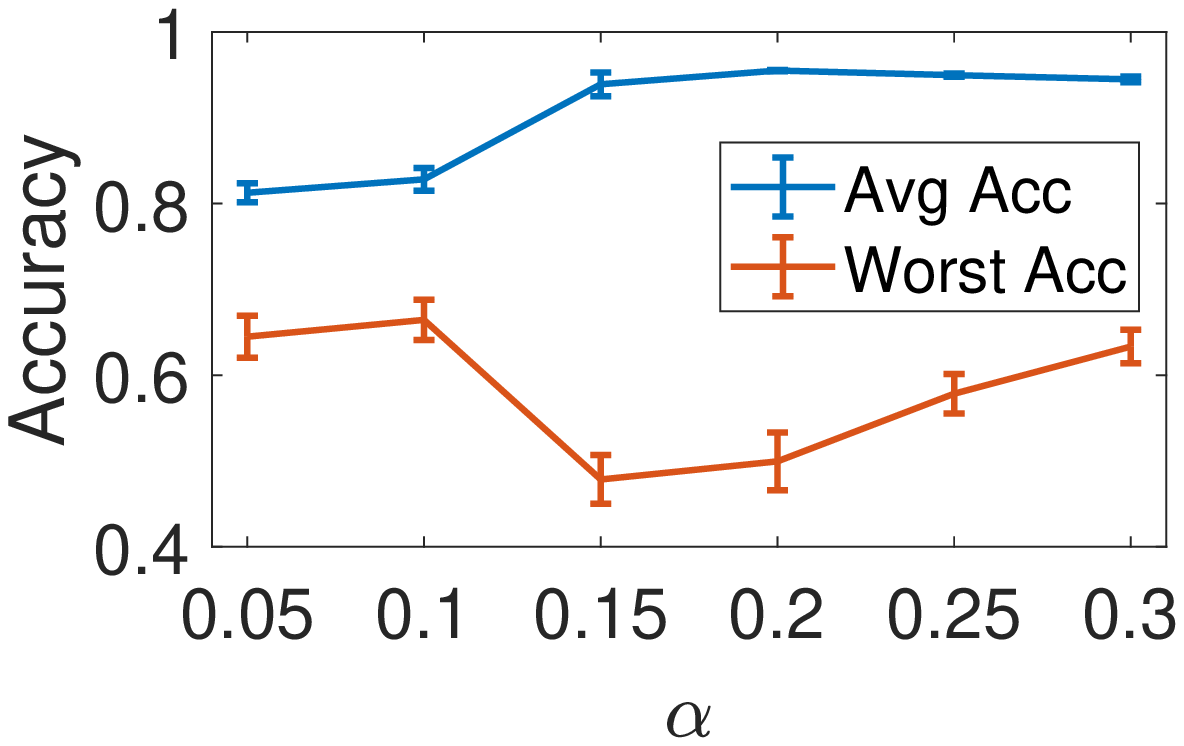}
         \caption{CVaR}
     \end{subfigure}
     \hfill
     \begin{subfigure}[b]{0.48\linewidth}
         \centering
         \includegraphics[width=\textwidth]{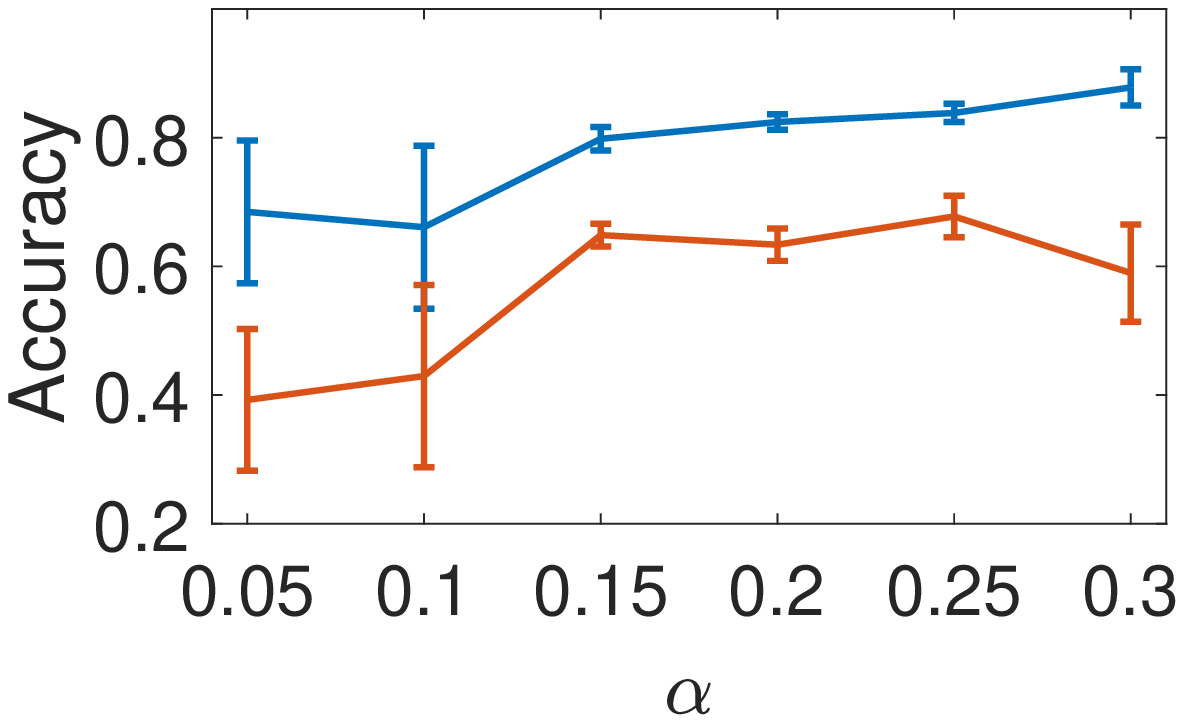}
         \caption{$\chi^2$-DRO}
     \end{subfigure}
     \begin{subfigure}[b]{0.48\linewidth}
         \centering
         \includegraphics[width=\textwidth]{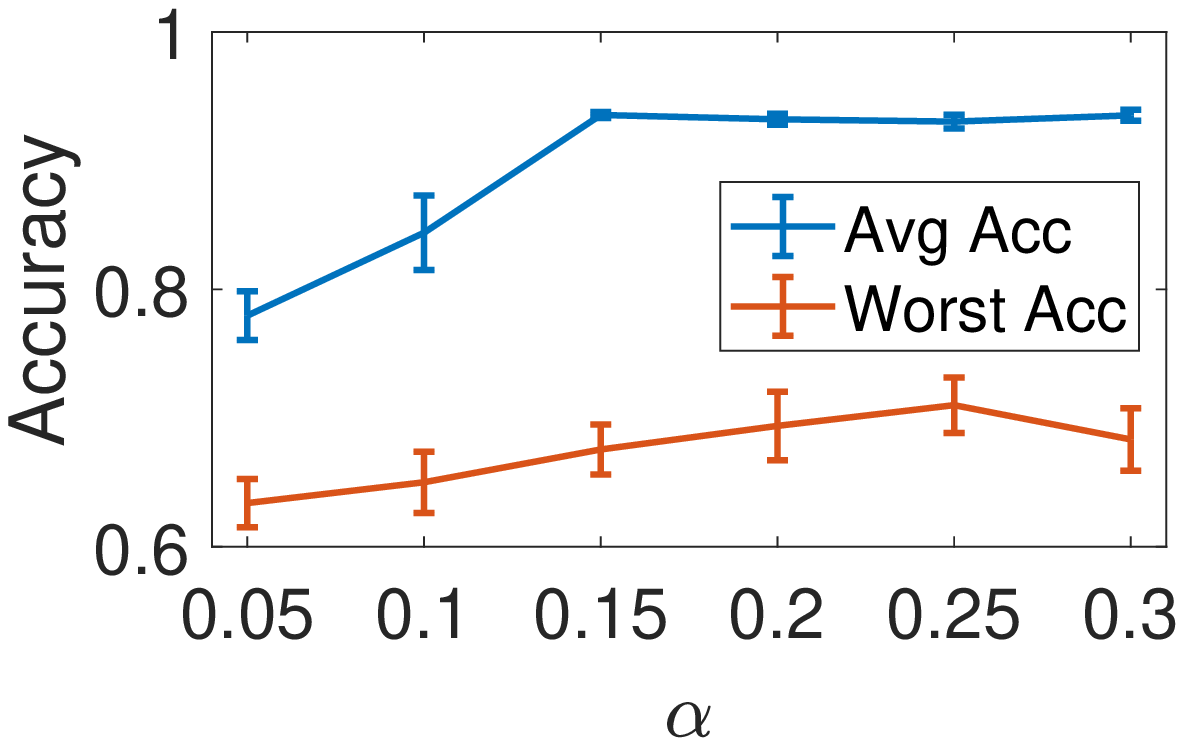}
         \caption{CVaR-DORO}
     \end{subfigure}
     \hfill
     \begin{subfigure}[b]{0.48\linewidth}
         \centering
         \includegraphics[width=\textwidth]{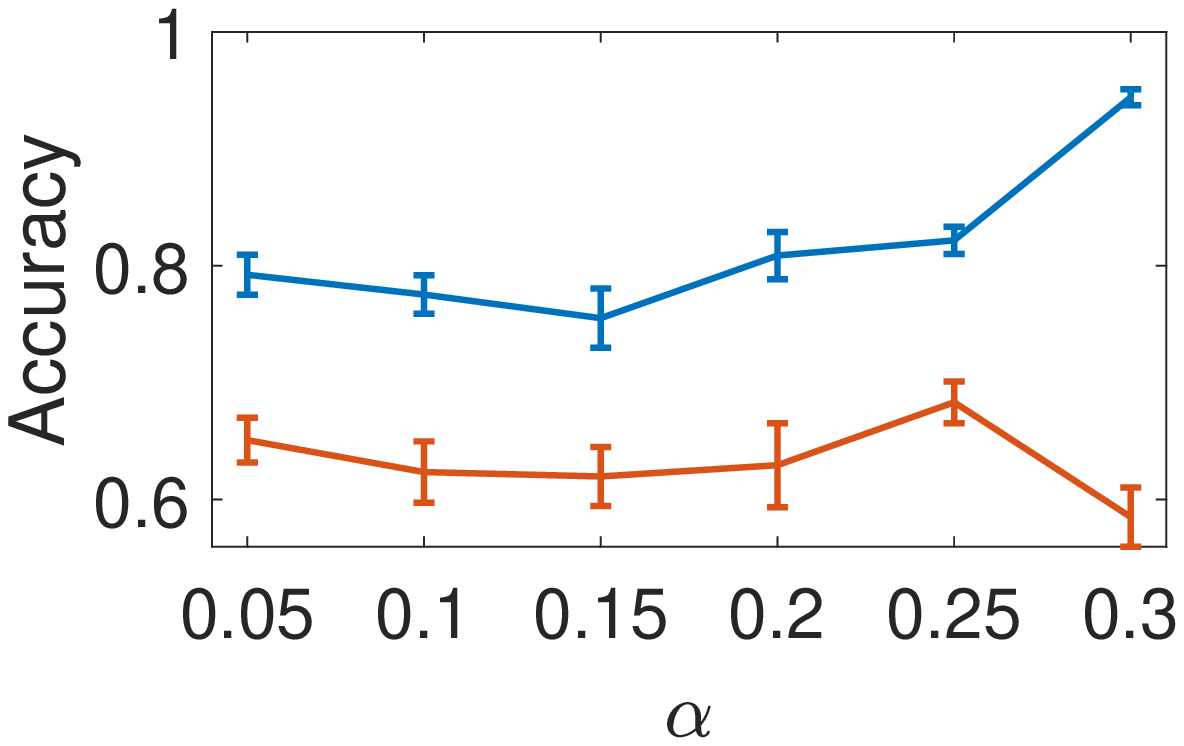}
         \caption{$\chi^2$-DORO}
     \end{subfigure}
    \caption{Effect of $\alpha$ on the test accuracies of DRO and DORO on CelebA ($\epsilon=0.01$).}
    \label{fig:alpha-celeba}
    \vskip -0.1in
\end{figure}

In this part, we study how $\alpha$ and $\epsilon$ affect the test accuracies of DORO with two experiments on CelebA, providing insight into how to select the optimal hyperparameters.

In the first experiment, we fix $\alpha=0.2$, and run the two DORO algorithms with different values of $\epsilon$. The results are plotted in Figure \ref{fig:eps-celeba}. We can see that for both methods, as $\epsilon$ increases, the average accuracy slightly decreases, while the worst-case accuracy first rises and then drops. Both average and worst-case accuracies will drop if $\epsilon$ is too big. Moreover, both methods achieve the optimal worst-case accuracy at $\epsilon=0.005$. We conjecture that the real noise level of the CelebA dataset is around 0.005, and that the optimal $\epsilon$ should be close to the real noise level.

In the second experiment, we run DRO and DORO ($\epsilon=0.01$) with different values of $\alpha$. The results are plotted in Figure \ref{fig:alpha-celeba}. First, we observe that for all methods, the optimal $\alpha$ is much bigger than the real $\alpha$ of the dataset. The real $\alpha$ of the CelebA dataset is around 0.008 (see Appendix \ref{app:domain-def}, Table \ref{tab:celeba-domains}), much smaller than those achieving the highest worst-case accuracies in the figures. Second, in all four figures the overall trend of the average accuracy is that it grows with $\alpha$. Third, both CVaR-DORO and $\chi^2$-DORO achieve the optimal worst-case accuracy at $\alpha=0.25$, but the worst-case accuracy drops as $\alpha$ goes to 0.3.

\section{Discussion}

In this work we pinpointed one direct cause of the performance drop and instability of DRO: the sensitivity of DRO to outliers in the dataset. We proposed DORO as an outlier robust refinement of DRO, and implemented DORO for the Cressie-Read family of R\'enyi divergence. We made a positive response to the open question raised by \cite{pmlr-v80-hashimoto18a} by demonstrating the effectiveness of DORO both theoretically and empirically.

One alternative approach to making DRO robust to outliers is removing the outliers from the dataset via preprocessing. In Section \ref{sec:dro-not-robust} we used a simple version of iterative trimming \cite{shen2019learning} to remove outliers from the training set. Compared to iterative trimming, DORO does not require retraining the model and does not throw away any data. In addition, preprocessing methods such as iterative trimming cannot cope with online data (where new instances are received sequentially), but DORO is still feasible.

The high-level idea of DORO can be extended to other algorithms that deal with subpopulation shift, such as static reweighting \cite{shimodaira2000improving}, adversarial reweighting \cite{hu2018does,lahoti2020fairness} and group DRO \cite{sagawa2019distributionally}. The implementations might be different, but the basic ideas are the same: to prevent the algorithm from overfitting to potential outliers. We leave the design of such algorithms to future work.

There is one large open question from this work. In our experiments, we found that model selection without domain information in the validation set is very hard. In Appendix \ref{app:model-select} we study several strategies, such as selecting the model with the lowest CVaR risk or the lowest CVaR-DORO risk, but none of them is satisfactory. A recent paper \cite{michel2021modeling} proposed two selection methods Minmax and Greedy-Minmax, but their performances are still much lower than the oracle's (see their Table 2a). \cite{gulrajani2021in} also pointed out the difficulty of model selection in domain-oblivious distributional shift tasks. Thus, we believe this question to be fairly non-trivial.

\camready{
\section*{Acknowledgements}

We acknowledge the support of DARPA via HR00112020006, and NSF via IIS-1909816, OAC-1934584.
}

\newpage
\onecolumn
\appendix
\section{Proofs}



\subsection{Proof of Proposition \ref{prop:dro-fwod-generalized}}
\label{proof:dro-fwod-generalized}

Since $P_k$ is a mixture component of $P$ with probability mass at least $\alpha$, we can see that 
\begin{equation}
\frac{d P_k}{d P} \leq \frac{1}{\alpha}
\end{equation}
Notice that when $t \ge 1$, \begin{equation}
f_\beta'(t) = \frac{1}{\beta - 1}(t^{\beta - 1} -1) >0
\end{equation}
Hence, $f_\beta'(t)$ is an increasing function when $t >1$, therefore, 
\begin{align}
D_\beta(P_k || P)  &=\int  f_\beta(\frac{dP_k}{dP}) dP \\
& \leq \int  f_\beta(\frac{1}{\alpha}) dP  \\
& = f_\beta(\frac{1}{\alpha})
\end{align}
Therefore, by the definition of $\beta$-DRO risk, we have completed the proof. \qed

\subsection{Proof of Corollary \ref{prop:dro-fwod}}
\label{proof:dro-fwod}
For any $k$, let $p_k = P(\gD_k)$, then $P(z)=p_kP(z|\gD_k)+(1-p_k)P(z|\overline{\gD_k})$ holds for all $x$. Let $Q=P_k$ and $Q'(z) = \frac{p_k-\alpha}{1-\alpha}P(z|\gD_k) + \frac{1-p_k}{1-\alpha}P(z|\overline{\gD_k})$. Then $P = \alpha Q + (1-\alpha) Q'$, which implies that $\E_{P_k}[\ell(\theta;Z)] \leq \cvar_\alpha(\theta;P)$. Thus, $\gR_{\max}(\theta;P) \leq \cvar_\alpha(\theta;P)$. On the other hand, for any $Q$ such that there exists $Q'$ satisfying $P=\alpha Q + (1-\alpha) Q'$, we have $\frac{dQ}{dP}(z) \leq \frac{1}{\alpha}$ a.e., so that $D_{\chi^2}(Q \parallel P) \leq \frac{1}{2}(\frac{1}{\alpha}-1)^2 = \rho$. Thus, $\cvar_{\alpha}(\theta;P) \leq \gR_{D_{\chi^2},  \rho}(\theta; P)$. \qed



\subsection{Proposition \ref{prop:robust-dro-cressie}}

\subsubsection{Proof of Proposition \ref{prop:robust-dro-cressie}}
\label{proof:prop-robust-dro-cressie}

By (\ref{eqn:dro-fwod-generalized}) and (\ref{eqn:robust-dro-risk}) we have
\begin{equation}
\label{eqn:prop3-chisq}
\begin{aligned} 
\gR_{D_\beta,\rho,\epsilon}(\theta;\ptrain) &= \inf_{P'}\left \lbrace \gR_{D_\beta,\rho}(\theta;P') : \exists \tilde{P'} \text{ s.t. } \ptrain = (1-\epsilon) P' + \epsilon \tilde{P'} \right \rbrace \\ 
&= \inf_{P',\eta} \left \lbrace c_\beta(\rho) \E_{P'} [(\ell(\theta;Z) - \eta)_+^{\beta_*}]^{\frac{1}{\beta_*}} + \eta \right \rbrace \\ 
&= \inf_{\eta} \left \lbrace c_\beta(\rho) \inf_{P'} \{ [ \int_{\R_+} P'( (\ell(\theta;Z)-\eta)_+^{\beta_*} >u)du]^{\frac{1}{\beta_*}} \} + \eta \right \rbrace
\end{aligned}
\end{equation}

By $\ptrain = (1-\epsilon) P' + \epsilon \tilde{P'}$ we have for all $\ell_0$,
\begin{equation}
\label{eqn:prop2-l0}
P'(\ell(\theta;Z) \leq \ell_0) \leq \min \left \lbrace 1, \frac{1}{1-\epsilon}\ptrain (\ell(\theta;Z) \leq \ell_0) \right \rbrace
\end{equation}
and we can also show that there exists a $P^* = P'$ such that the equality is achieved in (\ref{eqn:prop2-l0}) for all $\ell_0$: Since both $\ell$ and $\ptrain$ are continuous, $\ptrain(\ell(\theta;z))$ is a continuous function of $z$ for any fixed $\theta$, so there exists an $\ell^*$ such that $\ptrain(\ell(\theta;Z) > \ell^*) = \epsilon$. Define
\begin{equation}
\label{eqn:prop2-pstar-1}
P^*(z) = \left \{
\begin{array}{cc}
\frac{1}{1-\epsilon} \ptrain(z) & ,\ell(\theta;z) \leq \ell^* \\ 
0 &, \ell(\theta;z) > \ell^*
\end{array}
\right .
\end{equation}


For (\ref{eqn:prop2-pstar-1}), we have $\int_{\gX \times \gY}P^*(z)dz = \frac{1}{1-\epsilon}\int_{\ell(\theta;z) < \ell^*} \ptrain(z)dz = \frac{1}{1-\epsilon}\ptrain(\ell(\theta;Z) < \ell^*) = 1$ because $\ptrain(\ell(\theta;Z)=\ell^*)=0$, so (\ref{eqn:prop2-pstar-1}) is a distribution function. 

Let $v=u^{\frac{1}{\beta_*}}$. Plugging $P^*(\ell(\theta;Z) \leq \ell_0) = \min \left \lbrace 1, \frac{1}{1-\epsilon}\ptrain (\ell(\theta;Z) \leq \ell_0) \right \rbrace$ into (\ref{eqn:prop3-chisq}) produces
\begin{equation}
\label{eqn:prop4-expand}
\begin{aligned} 
\gR_{D_\beta,\rho,\epsilon}(\theta;\ptrain) &= \inf_{\eta} \left \lbrace c_\beta(\rho) \left [ \int_{\R_+} [1-P^*( (\ell(\theta;Z)-\eta)_+^{\beta_*} \leq v^{\beta_*})]dv^{\beta_*} \right ]^{\frac{1}{\beta_*}}  + \eta \right \rbrace \\
&= \inf_\eta \left \lbrace c_\beta(\rho) \left [ \int_{\R_+} [1-\frac{1}{1-\epsilon}\ptrain (\ell(\theta;Z) \leq \eta+v)]_+ dv^{\beta_*} \right]^{\frac{1}{\beta_*}} + \eta \right \rbrace \\ 
&= \inf_\eta \left \lbrace  c_\beta(\rho)  \left [\int_{0}^{(\ell^*-\eta)_+} \frac{1}{1-\epsilon}[(1-\epsilon) - \ptrain (\ell(\theta;Z) \leq \eta+v)]_+ dv^{\beta_*} \right ]^{\frac{1}{\beta_*}} +\eta \right \rbrace
\end{aligned}
\end{equation}

On the other hand, we have
\begin{equation}
\begin{aligned}  
&\E_{\ptrain}[(\ell - \eta)_+^{\beta_*} \mid P_{Z' \sim \ptrain}(\ell(\theta;Z') > \ell(\theta;Z)) \geq \epsilon] \\
=& \frac{1}{1-\epsilon}\int_0^{\ell^*} (u - \eta)_+^{\beta_*} d(\ptrain(\ell \leq u)) \\ 
=& \frac{1}{1-\epsilon} \left \lbrace  \left [ (u-\eta)_+^{\beta_*} \ptrain(\ell \leq u) \right ]_{0}^{\ell^*} - \int_0^{\ell^*} \ptrain(\ell \leq u) d((u-\eta)_+^{\beta_*}) \right \rbrace \\
=& \frac{1}{1-\epsilon} \left \lbrace  (\ell^*-\eta)_+^{\beta_*}(1-\epsilon) - \int_0^{\ell^*} \ptrain(\ell \leq u) d((u-\eta)_+^{\beta_*}) \right \rbrace \\ 
=& \frac{1}{1-\epsilon} \left \lbrace \int_{0}^{(\ell^*-\eta)_+} (1-\epsilon) dv^{\beta_*} - \int_{0}^{(\ell^*-\eta)_+} \ptrain (\ell \leq \eta + w) dw^{\beta_*} \right \rbrace
\end{aligned} 
\end{equation}
where $w = (u-\eta)_+$. Thus, (\ref{eqn:prop4-expand}) is equal to the right-hand side of (\ref{eqn:robust-dro-cressie}). \qed

\camready{

\subsubsection{Extension to Arbitrary $\ptrain$}
\label{proof:extend-formula}
For any distribution $\ptrain$, we can obtain a similar but more complex formula (\ref{eqn:extend-formula}). For any $\ptrain$, there exists an $\ell^*$ such that $\ptrain(\ell(\theta;Z) > \ell^*) \le \epsilon$ and $\ptrain(\ell(\theta;Z) < \ell^*) \le 1 - \epsilon$. If $\ptrain(\ell(\theta;Z) = \ell^*) = 0$, then the proof above is still correct, so the formula is still (\ref{eqn:robust-dro-cressie}).

Now assume that $\ptrain(\ell(\theta;Z) = \ell^*) > 0$. Similar to (\ref{eqn:prop2-pstar-1}), define

\begin{equation}
P^*(z) = \left \{
\begin{array}{cc}
\frac{1}{1-\epsilon} \ptrain(z) & ,\ell(\theta;z) < \ell^* \\ 
\left[1 - \frac{1}{1-\epsilon} \ptrain(\ell(\theta;Z) < \ell^*) \right] / \ptrain(\ell(\theta;Z) = \ell^*) &, \ell(\theta;z) = \ell^* \\ 
0 &, \ell(\theta;z) > \ell^*
\end{array}
\right .
\end{equation}

Then we still have $P^*(\ell(\theta;Z) \leq \ell_0) = \min \left \lbrace 1, \frac{1}{1-\epsilon}\ptrain (\ell(\theta;Z) \leq \ell_0) \right \rbrace$, so (\ref{eqn:prop4-expand}) still holds. On the other hand, we have
\begin{equation}
\begin{aligned}
&E_{\ptrain}[(\ell - \eta)_+^{\beta_*} \mid P_{Z' \sim \ptrain}(\ell(\theta;Z') > \ell(\theta;Z)) > \epsilon]  \\ 
=& \frac{1}{\ptrain(\ell(\theta;Z) < \ell^*)} \left \lbrace \int_{0}^{(\ell^*-\eta)_+} (1-\epsilon) dv^{\beta_*} - \int_{0}^{(\ell^*-\eta)_+} \ptrain (\ell \leq \eta + w) dw^{\beta_*} \right \rbrace
\end{aligned}
\end{equation}

Thus, the formula becomes
\begin{equation}
\label{eqn:extend-formula}
\begin{aligned}
\gR_{D_\beta, \rho, \epsilon}(\theta;\ptrain) =& \inf_\eta \{ c_\beta(\rho) ( \frac{\ptrain(\ell < \ell^*)}{1-\epsilon} \E_{Z}[(\ell(\theta;Z) -\eta)_+^{\beta_*} \mid  P_{Z'} (\ell(\theta;Z')>\ell(\theta;Z)) > \epsilon] \\ 
&+\frac{1-\ptrain(\ell < \ell^*)}{1-\epsilon} (\ell^* - \eta)_+^{\beta_*} )^{\frac{1}{\beta_*} } + \eta \}  
\end{aligned}
\end{equation}

}

\subsection{Proofs of Results in Section \ref{sec:theory}}
\label{proof:sec-theory}

\subsubsection{A Key Technical Lemma}
The following lemma will be useful in the analysis of CVaR-DORO and $\chi^2$-DORO: it controls the difference of dual objective in two distributions $P, P'$ by their total variation distance, with the assumption that loss function $l$ has bounded $2k$-th moment under $P$.

\begin{lem}\label{lem:key-technical}
	For any distributions $P, P'$, non-negative loss function $l(\cdot, Z)$ and $ 1\le \beta_* <2k$, such that $\E_P[l(\theta, Z)^{2k}] <\infty$, we have
	\begin{equation}
	\E_{P}[(\ell - \eta)^{\beta_*}_+]^{\frac{1}{\beta_*}} \leq 	\E_{P'}[(\ell - \eta)^{\beta_*}_+]^{\frac{1}{\beta_*}} + \E_P[(l(\theta, Z)-\eta)_+^{2k}]^{\frac{1}{2k}} \tv(P, P')^{\left(\frac{1}{\beta_*} - \frac{1}{2k}\right)} \beta_*^{-\frac{1}{2k}} \cdot \left(\frac{2k}{2k-\beta_*}\right)^{\frac{1}{\beta_*}}
	\end{equation}
	%
\end{lem}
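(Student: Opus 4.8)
The plan is to reduce the inequality between $L^{\beta_*}$-norms to an inequality between $\beta_*$-th moments, and then control the latter by a layer-cake/truncation argument driven by the bounded $2k$-th moment. Write $\phi(Z):=(\ell(\theta;Z)-\eta)_+\ge 0$ for brevity, and set $\mu:=\tv(P,P')$ and $m:=\E_P[\phi^{2k}]$. Since $\beta_*\ge 1$, the map $t\mapsto t^{1/\beta_*}$ is subadditive on $[0,\infty)$, so it suffices to prove a bound of the form $\E_P[\phi^{\beta_*}]\le \E_{P'}[\phi^{\beta_*}]+C^{\beta_*}\, m^{\beta_*/2k}\mu^{1-\beta_*/2k}$ and then take $\beta_*$-th roots: this moves the constant to $C\cdot m^{1/2k}\mu^{1/\beta_*-1/2k}$ and produces the claimed total-variation exponent $\tfrac1{\beta_*}-\tfrac1{2k}$ automatically.

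For the moment bound I would use the layer-cake representation
\[
\E_P[\phi^{\beta_*}]-\E_{P'}[\phi^{\beta_*}]=\int_0^\infty\big(P(\phi^{\beta_*}>u)-P'(\phi^{\beta_*}>u)\big)\,du .
\]
Two elementary facts bound the integrand: for every measurable event $P(A)-P'(A)\le\mu$ (definition of total variation), and trivially $P(A)-P'(A)\le P(A)$; hence the integrand is at most $\min\{\mu,\,P(\phi^{\beta_*}>u)\}$. Markov's inequality applied to the $2k$-th moment gives $P(\phi^{\beta_*}>u)=P(\phi>u^{1/\beta_*})\le m\,u^{-2k/\beta_*}$. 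Splitting the $u$-integral at the crossover point $u^\star$ where $\mu=m\,u^{-2k/\beta_*}$, bounding the lower range by $\mu$ and the upper range by $m\,u^{-2k/\beta_*}$, and integrating (this uses $2k/\beta_*>1$, i.e.\ the hypothesis $\beta_*<2k$, and is where the exponents $\beta_*$ and $2k$ enter the constant) yields a bound of the stated form. An equivalent route truncates $\phi$ at a free level $\tau$: write $\phi=\min(\phi,\tau)+(\phi-\tau)_+$, bound $\E_P[\min(\phi,\tau)^{\beta_*}]-\E_{P'}[\min(\phi,\tau)^{\beta_*}]\le\tau^{\beta_*}\mu$ since the integrand is nonnegative and bounded by $\tau^{\beta_*}$, bound $\E_P[(\phi-\tau)_+^{\beta_*}]\le\tau^{\beta_*-2k}m$ via $\phi^{\beta_*}\mathbf{1}[\phi>\tau]\le\tau^{\beta_*-2k}\phi^{2k}$, and optimize over $\tau$.

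I expect the main obstacle to be purely the bookkeeping needed to land on the \emph{exact} multiplicative constant $\beta_*^{-1/2k}\,(2k/(2k-\beta_*))^{1/\beta_*}$ rather than a cruder admissible one: one must carry the $\beta_*$- and $2k$-dependent factors through the integration, and to obtain the sharper form with the $\beta_*^{-1/2k}$ factor one likely has to retain (rather than discard) the contribution of $P'$ on the low range of the integral. A secondary but necessary technical point is the measure-theoretic justification of ``$P(A)-P'(A)\le\min\{\mu,P(A)\}$'' and of $\E_P[g]-\E_{P'}[g]\le\|g\|_\infty\mu$ for nonnegative bounded $g$: both rest on the fact that the positive part $(P-P')_+$ of the signed measure $P-P'$ satisfies $(P-P')_+\le P$ with total mass exactly $\mu$, so the whole argument goes through for arbitrary $P,P'$ without assuming $P'\ll P$ or any continuity. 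Everything else — subadditivity of $t\mapsto t^{1/\beta_*}$, Markov's inequality, and the single-variable optimization — is routine.
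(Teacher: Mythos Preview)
Your approach is essentially identical to the paper's: represent $\E[(\ell-\eta)_+^{\beta_*}]$ via the layer-cake formula (the paper writes it equivalently as $\int_0^\infty \beta_* u^{\beta_*-1}P(\ell-\eta\ge u)\,du$), bound the tail-probability difference by $\tv(P,P')$ on a low range and by Markov's inequality on $P$ (discarding the $P'$ term) on a high range, optimize over the split point, and finish with subadditivity of $t\mapsto t^{1/\beta_*}$.

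On the constant you flag: the paper does \emph{not} retain the $P'$ contribution on the low range or use any device sharper than the two bounds you list. The extra factor $\beta_*^{-1/2k}$ in the stated lemma actually comes from an arithmetic slip in the paper's tail computation---when evaluating $\int_M^\infty \beta_* u^{\beta_*-1}(s_{2k}/u)^{2k}\,du$ the leading $\beta_*$ is silently dropped, after which optimizing over $M$ gives $M=s_{2k}(\beta_*\,\tv)^{-1/2k}$ and the stated constant. Carrying the $\beta_*$ through correctly, the paper's own argument yields exactly the constant your crossover split produces, namely $\bigl(\tfrac{2k}{2k-\beta_*}\bigr)^{1/\beta_*}$, which coincides with the lemma's constant only at $\beta_*=1$. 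So you need not search for a refinement to recover $\beta_*^{-1/2k}$; your bookkeeping is the correct outcome of this method, and the downstream applications (CVaR uses $\beta_*=1$; for $\chi^2$ only the order $O_{\rho,k}(1)$ is claimed) are unaffected either way.
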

\begin{proof}
	By the definition of total variation distance, we have
	\begin{equation}
	\label{eqn:tv-loss-bound}
	P(\ell(\theta;Z) > u) - P'(\ell(\theta;Z') > u) \leq  \tv(P,P')
	\end{equation}
	holds for any $u \geq 0$.	
	
	By Markov's Inequality and the non-negativity of $\ell$, we have for any $\eta \geq 0$,
	\begin{equation} \label{eqn:markov}
	P(\ell - \eta > u) \le  \frac{\E[(\ell - \eta)_+^{2k}]}{u^{2k}} := (\frac{s_{2k}}{u})^{2k} 
	\end{equation}
	where we introduced the shorthand $s_{2k} := \E[(\ell - \eta)_+^{2k}]^{\frac{1}{2k}}$
	Using integration by parts, we can see that:
	\begin{align}
	\E_{P}[(\ell - \eta)^{\beta_*}_+] &= \int_\eta^\infty \beta_* (t - \eta)^{(\beta_*-1)} P(\ell \ge t) dt \\
	&= \int_0^\infty \beta_* u^{(\beta_*-1)} P(\ell -\eta \ge u) du
	\end{align} 
	Thus,
	\begin{equation}
	\label{eqn:thm6-ineq}
	\begin{aligned} 
	\E_{P}[(\ell - \eta)^{\beta_*}_+] - \E_{P'}[(\ell - \eta)^{\beta_*}_+] &=  \int_0^\infty \beta_* u^{(\beta_*-1)} \left( P(\ell -\eta \ge u) - P'(\ell -\eta \ge u)\right) du \\
	&=  \left(\int_0^M +\int_M^\infty\right)  \left(\beta_* u^{(\beta_*-1)} \left( P(\ell -\eta \ge u) - P'(\ell -\eta \ge u)\right) du \right)
	\end{aligned} 
	\end{equation}
	Here, $M$ is a positive parameter whose value will be determined later. Next, we will upper bound each of the two integrals separately.
	By \eqref{eqn:thm6-ineq}, 
	\begin{align}
	\int_0^M \beta_* u^{(\beta_*-1)} \left( P(\ell -\eta \ge u) - P'(\ell -\eta \ge u)\right) du  &\leq \int_0^M \beta_* u^{(\beta_*-1)} \tv(P, P')du \\
	&= M^{\beta_*} \tv(P, P'),
	\end{align}
	which gives an upper bound for the first integral. For the second integral, notice that $P'(\ell -\eta \ge u)$ is non-negative and use \eqref{eqn:markov}, we have:
	\begin{align}
	\int_M^{\infty} \beta_* u^{(\beta_*-1)} \left( P(\ell -\eta \ge u) - P'(\ell -\eta \ge u)\right) du  &\leq \int_M^{\infty} \beta_* u^{(\beta_*-1)} P(\ell -\eta \ge u)du \\
	&\leq  \int_M^{\infty} \beta_* u^{(\beta_*-1)} \left( \frac{s_{2k}}{u}\right)^{2k} \\
	&= \frac{s_{2k}^{2k}}{2k - \beta_*} \cdot \frac{1}{M^{2k-\beta_*}}
	\end{align}
	Therefore, by setting $M =s_{2k} (\tv(P, P') \beta_*)^{-1/2k}$ which minimizes the sum of two terms, we have
	\begin{equation}
	\E_{P}[(\ell - \eta)^{\beta_*}_+] - \E_{P'}[(\ell - \eta)^{\beta_*}_+] \leq \inf_{M>0} \left(M^{\beta_*} \tv(P, P') + \frac{s_{2k}^{2k}}{2k - \beta_*} \cdot \frac{1}{M^{2k-\beta_*}}\right) = s_{2k}^{\beta_*} \tv(P, P')^{1 - \frac{\beta_*}{2k}} \beta_*^{-\frac{\beta_*}{2k}} \cdot \frac{2k}{2k-\beta_*}
	\end{equation}
	
	Using the inequality $(A+B)^{\frac{1}{\beta_*}} \leq A^{\frac{1}{\beta_*}} + B^{\frac{1}{\beta_*}}$ when $\beta_* \geq 1$, we have:
	\begin{equation}
	\label{eqn:general-moment-ineq}
	\E_{P}[(\ell - \eta)^{\beta_*}_+]^{\frac{1}{\beta_*}} \leq 	\E_{P'}[(\ell - \eta)^{\beta_*}_+]^{\frac{1}{\beta_*}} + s_{2k} \tv(P, P')^{\left(\frac{1}{\beta_*} - \frac{1}{2k}\right)} \beta_*^{-\frac{1}{2k}} \cdot \left(\frac{2k}{2k-\beta_*}\right)^{\frac{1}{\beta_*}}
	\end{equation}
\end{proof}
\subsubsection{Proof of Lemma \ref{lem:robust-dro}}
For any $P'$ such that $\ptrain = (1-\epsilon) P' + \epsilon \tilde{P'}$ for some $\tilde{P'}$, let $U = P \land P'$, i.e. $U(z) = \min \{P(z), P'(z)\}$ for any $z \in \gX \times \gY$. We have
\begin{equation}
(1-\epsilon) U(z) + \epsilon \tilde{P}(z) + \epsilon \tilde{P'}(z) \geq \ptrain(z) \quad \text{for any } z \in \gX \times \gY
\end{equation}
because both $\tilde{P}(z)$ and $\tilde{P'}(z)$ are non-negative. Integrating both sides produces
\begin{equation}
\int_{\gX \times \gY} U(z) dz \geq \frac{1-2\epsilon}{1-\epsilon}
\end{equation}
which implies $\tv (P,P') \leq \frac{\epsilon}{1-\epsilon}$. Thus,
\begin{equation}
\gR_{D,\rho}(\theta;P') \geq \inf_{P''}\{ \gR_{D,\rho}(\theta,P''):  \tv (P, P'') \leq \frac{\epsilon}{1-\epsilon} \}
\end{equation}
which together with (\ref{eqn:robust-dro-risk}) proves (\ref{eqn:lem1}). \qed

\subsubsection{Proof of Theorem \ref{thm:thm5}, Analysis of CVaR-DORO}
\begin{proof}[Proof of Theorem \ref{thm:thm5}, CVaR-DORO]
	For any $\theta$, by Lemma \ref{lem:robust-dro} we have
	\begin{equation}
	\label{eqn:thm5-1}
	\cvar_{\alpha,\epsilon}(\theta;\ptrain) \geq \cvar_{\alpha,\epsilon}(\hat{\theta};\ptrain) \geq \inf_{P'}\{ \cvar_\alpha (\hat{\theta};P'): \tv(P,P') \leq \frac{\epsilon}{1-\epsilon} \}
	\end{equation}
	By Lemma \ref{lem:key-technical}, we have for any $\eta \ge 0$ and $\tv(P,P') \leq \frac{\epsilon}{1-\epsilon}$, 	
	\begin{equation}
	\label{eqn:thm6-ineq2}
	\begin{aligned} 
	\E_{P}[(\ell - \eta)_+] - \E_{P'}[(\ell - \eta)_+] & \leq  \left(1+\frac{1}{2k-1}\right) \E_P[(\ell - \eta)_+^{2k}]^{\frac{1}{2k}} \tv(P,P')^{1 - \frac{1}{2k}} \\
	&\leq \left(1+\frac{1}{2k-1}\right) \sigma_{2k} \tv(P,P')^{1 - \frac{1}{2k}}
	\end{aligned} 
	\end{equation}
	Here, we used the fact that $0\leq (\ell - \eta)_+^{2k} \leq \ell^{2k} $
	Moreover, for any $\eta < 0$, $\E_{P}[(\ell - \eta)_+] - \E_{P'}[(\ell - \eta)_+] = \E_{P}[(\ell - 0)_+] - \E_{P'}[(\ell - 0)_+]$ because $\ell$ is non-negative. So (\ref{eqn:thm6-ineq2}) holds for all $\eta \in \R$. Thus, by (\ref{eqn:cvar-dual}) we have for any $\eta \in \R$,
	\begin{equation}
	\cvar_{\alpha}(\hat{\theta};P) \leq \alpha^{-1}\E_P [(\ell-\eta)_+] + \eta \leq \alpha^{-1} \left\{ \E_{P'}[(\ell-\eta)_+] + \left(1+\frac{1}{2k-1}\right) \left(\frac{\epsilon}{1-\epsilon}\right)^{1 - \frac{1}{2k}} \right\} + \eta
	\end{equation}
	and taking the infimum over $\eta$, we have the following inequality holds for any $\theta$:
	\begin{equation}
	\cvar_{\alpha}(\hat{\theta};P) \leq \cvar_\alpha (\hat{\theta};P') + \left(1+\frac{1}{2k-1}\right)  \alpha^{-1} \sigma_{2k} \left(\frac{\epsilon}{1-\epsilon}\right)^{1 - \frac{1}{2k}} 
	\end{equation}
	
	By (\ref{eqn:robust-dro-risk}) we have $\cvar_{\alpha,\epsilon}(\theta;\ptrain) \leq \cvar_\alpha (\theta;P)$. Thus, by (\ref{eqn:thm5-1}), taking the infimum over $P'$ yields
	\begin{align}
	\cvar_{\alpha}(\hat{\theta};P) &\leq \cvar_{\alpha,\epsilon}(\theta;\ptrain) + \left(1+\frac{1}{2k-1}\right) \alpha^{-1} \sigma_{2k} \left(\frac{\epsilon}{1-\epsilon}\right)^{1 - \frac{1}{2k}} \\
	&\leq \cvar_\alpha (\theta;P) + \left(1+\frac{1}{2k-1}\right) \alpha^{-1} \sigma_{2k} \left(\frac{\epsilon}{1-\epsilon}\right)^{1 - \frac{1}{2k}}
	\end{align}
	
	Taking the infimum over $\theta$ completes the proof. 
\end{proof}

\subsubsection{Proof of Theorem \ref{thm:thm5}, Analysis of $\chi^2$-DORO}
We begin with a structral lemma about the optimal dual variable $\eta$ in the dual formulation \eqref{eqn:dro_dual}. Recall that $\beta = \beta_* = 2$ for $\chi^2$ divergence.
\begin{lem}\label{lem:chi2_structral}
	Let $\eta^*(P)$ be the minimizer of \eqref{eqn:dro_dual}. We have the following characterization about $\eta^*(P)$:
	\begin{enumerate}
		\item When $\rho \leq \frac{\Var_P[l(\theta, Z)]}{2\E[l(\theta, Z)]^2}$, we have $\eta^* \leq 0$; \\
		Furthermore, the DRO risk and optimal dual variable $\eta^*$ can be formulated as:
		\begin{align}
		\gR_{D_{\chi^2},\rho}(\theta;P) &= \E_P[l(\theta, Z)] + \sqrt{2\rho \Var_P[l(\theta, Z)]} \\
		\eta^* &= \E_P[l(\theta, Z)] - \sqrt{\frac{ \Var_P[l(\theta, Z)]}{2\rho}}
		\end{align}
		\item When $\rho \ge \frac{\Var_P[l(\theta, Z)]}{2\E[l(\theta, Z)]^2}$, we have $\eta^* \ge 0$.
	\end{enumerate}
\end{lem}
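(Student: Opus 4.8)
The plan is to analyze directly the one-dimensional convex dual objective from \eqref{eqn:dro_dual}. Specializing to $\beta=\beta_*=2$ and writing $\ell:=\ell(\theta;Z)$, $\mu:=\E_P[\ell]$, $v:=\Var_P[\ell]$, $c:=c_2(\rho)=\sqrt{1+2\rho}$, set
\[
F(\eta):=c\,\E_P[(\ell-\eta)_+^2]^{1/2}+\eta ,
\]
so that $\gR_{D_{\chi^2},\rho}(\theta;P)=\inf_\eta F(\eta)$ and $\eta^*(P)\in\argmin_\eta F(\eta)$. The first step is two structural facts. $F$ is convex: $\eta\mapsto(\ell-\eta)_+$ is convex and valued in the nonnegative functions, $t\mapsto\|t\|_{L^2(P)}$ is convex and nondecreasing on that cone, so their composition is convex, and adding $\eta$ preserves convexity. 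And for $\rho>0$, $F$ is coercive: $F(\eta)\to+\infty$ as $\eta\to+\infty$ because the first term vanishes, and as $\eta\to-\infty$ because $\E_P[(\ell-\eta)_+^2]^{1/2}\sim|\eta|$ while $c>1$; hence the minimum is attained. For such an $F$ the minimizer satisfies $\eta^*(P)\le0$ precisely when $F'(0^+)\ge0$ and $\eta^*(P)\ge0$ precisely when $F'(0^-)\le0$, so everything reduces to the one-sided derivatives of $F$ at $0$.

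The second step computes these. Because $\ell\ge0$, for all $\eta\le0$ we have $(\ell-\eta)_+=\ell-\eta$ pointwise, hence $F(\eta)=c\sqrt{v+(\mu-\eta)^2}+\eta$ on $(-\infty,0]$, a smooth expression giving $F'(0^-)=1-c\mu/\sqrt{\E_P[\ell^2]}$ (assuming $\E_P[\ell^2]>0$; otherwise $\ell\equiv0$ and the statement is trivial). A dominated-convergence argument, using $\frac{d}{d\eta}\E_P[(\ell-\eta)_+^2]=-2\E_P[(\ell-\eta)_+]$ and its continuity at $0$, shows $F'(0^+)$ equals the same value, so $F$ is differentiable at $0$ with $F'(0)=1-c\mu/\sqrt{\E_P[\ell^2]}$. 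Then $F'(0)\ge0\iff\sqrt{\E_P[\ell^2]}\ge c\mu\iff\E_P[\ell^2]\ge(1+2\rho)\mu^2\iff v\ge2\rho\mu^2\iff\rho\le\Var_P[\ell]/(2\E_P[\ell]^2)$, and the reversed chain holds simultaneously. This is exactly the dichotomy of parts 1 and 2, the boundary $\rho=v/(2\mu^2)$ forcing $\eta^*=0$.

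For the closed forms in part 1, suppose $\rho\le v/(2\mu^2)$, so $\eta^*\le0$ and we may minimize $F$ over $(-\infty,0]$ using $F(\eta)=c\sqrt{v+(\mu-\eta)^2}+\eta$. Setting $s:=\mu-\eta$ and minimizing $\phi(s):=c\sqrt{v+s^2}-s+\mu$ over $\R$, the stationarity condition $cs=\sqrt{v+s^2}$ forces $s>0$ and $s^2(c^2-1)=v$, i.e.\ $s=\sqrt{v/(2\rho)}$ since $c^2-1=2\rho$; hence $\eta^*=\mu-\sqrt{v/(2\rho)}=\E_P[\ell]-\sqrt{\Var_P[\ell]/(2\rho)}$, and the constraint $s\ge\mu$ (equivalently $\eta^*\le0$) is again exactly $\rho\le v/(2\mu^2)$. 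Substituting back, $v+s^2=vc^2/(2\rho)$ gives $\phi(s)=\mu+(1+2\rho)\sqrt{v/(2\rho)}-\sqrt{v/(2\rho)}=\mu+2\rho\sqrt{v/(2\rho)}=\mu+\sqrt{2\rho v}$, i.e.\ $\gR_{D_{\chi^2},\rho}(\theta;P)=\E_P[\ell]+\sqrt{2\rho\,\Var_P[\ell]}$.

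The computations are elementary; the only delicate part is the bookkeeping around edge cases --- ensuring the infimum over $\eta$ is attained (uses $\rho>0$ and the finite second moment), verifying $F$ is genuinely differentiable at $0$ rather than merely one-sided differentiable (where dominated convergence and $\ell\ge0$ enter), and checking the degenerate situations $\Var_P[\ell]=0$ or $\E_P[\ell]=0$ stay consistent with the stated formulas. That last item is the mild obstacle; everything else follows from the convex-analysis principle that a coercive convex function is minimized on whichever side of the origin its derivative $F'(0)$ points, together with a one-variable minimization of $\phi$.
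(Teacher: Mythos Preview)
Your proposal is correct and takes a different route from the paper's. You handle both parts uniformly via convex analysis of the dual objective $F$: compute $F'(0)$, observe that its sign determines on which side of $0$ the minimizer lies, and then for part~(1) minimize the explicit smooth formula valid on $(-\infty,0]$. The paper instead treats the two parts separately. For part~(1) it works on the \emph{primal} side: by Cauchy--Schwarz it proves $\gR_{D_{\chi^2},\rho}(\theta;P)\le\mu+\sqrt{2\rho v}$ for \emph{all} $\rho$, and then verifies by direct substitution that $\eta=\mu-\sqrt{v/(2\rho)}$ achieves this value in the dual when $\rho\le v/(2\mu^2)$. For part~(2) it proves the algebraic inequality $g(\eta)\ge g(0)$ for all $\eta<0$ by squaring and using $(1+2\rho)\mu^2\ge\E_P[\ell^2]$. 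Your approach is more unified and economical; the paper's approach buys the side observation that $\mu+\sqrt{2\rho v}$ is an upper bound on the $\chi^2$-DRO risk for every $\rho$ (the ``variance regularization'' interpretation), a fact of independent interest that your argument does not directly surface.
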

\begin{proof}
	(1) We will prove that for any $\rho >0$,
	\begin{equation}\label{eqn:variance_regularization}
	\gR_{D_{\chi^2},\rho}(\theta;P) \leq \E_P[l(\theta, Z)] + \sqrt{2\rho \Var_P[l(\theta, Z)]}
	\end{equation}
	and the equality is achievable when $\rho \leq \frac{\Var_P[l(\theta, Z)]}{2\E[l(\theta, Z)]^2}$.
	
	By the definition of $\chi^2$-DRO risk, 
	\begin{equation}
	\gR_{D_{\chi^2},\rho}(\theta;P) = \sup_{Q: D_{\chi^2}(Q||P) \leq \rho} E_Q [l(\theta, Z)]
	\end{equation}
	Let $\mu := \E_P [l(\theta, Z)]$, notice that
	\begin{align} \label{eqn:ub_EQL}
	\E_Q [l(\theta, Z)] &= \E_P [l(\theta, Z) \frac{dQ}{dP}] \\
	&= \E_P [l(\theta, Z)] + \E_P [l(\theta, Z) \left(\frac{dQ}{dP}-1\right)]\\
	&= \mu + \E_P [\left(l(\theta, Z)-\mu\right) \left(\frac{dQ}{dP}-1\right)]
	\end{align}
	where in the last step we used the fact that $E_P \frac{dQ}{dP} = 1$. 
	
	By the definition of $\chi^2$ divergence,  
	\begin{equation}
	\E_P [\left(\frac{dQ}{dP}-1\right)^2] = 2  D_{\chi^2}(Q||P) \leq 2 \rho,
	\end{equation}
	Therefore, by Cauchy-Schwarz inequality,
	\begin{align}
	\E_P [\left(l(\theta, Z)-\mu\right) \left(\frac{dQ}{dP}-1\right)] &\leq \left( \E_P [\left(l(\theta, Z)-\mu\right)]\right)^{1/2} \left(\E_P [\left(\frac{dQ}{dP}-1\right)^2]\right)^{1/2} \\
	&\leq \sqrt{\Var_P[l(\theta, Z)] \cdot 2\rho}
	\end{align}
	Plug in this upper bound to \eqref{eqn:ub_EQL} completes the proof of \eqref{eqn:variance_regularization}.
	
	To see that the equality can be achieved when $\rho \leq \frac{\Var_P[l(\theta, Z)]}{\E[l(\theta, Z)]^2}$, we only need to verify that $\eta = \eta^*$ gives the same dual objective $ \E_P[l(\theta, Z)] + \sqrt{2\rho \Var_P[l(\theta, Z)]}$. Since $\eta^* <0$, we have 
	\begin{align}
	&\E_P[(l(\theta, Z)-\eta^*)_+^2] \\
		 =& \E_P[(l(\theta, Z)-\eta^*)^2]\\
		=& \E_P[(l(\theta, Z) - \E_P[l(\theta, Z)] + \sqrt{\frac{1}{2\rho} \Var_P[l(\theta, Z)]})^2] \\
		=& \E_P[(l(\theta, Z) - \E_P[l(\theta, Z)])^2]  + 2  \sqrt{\frac{1}{2\rho}\Var_P[l(\theta, Z)]} \E[(l(\theta, Z) - \E_P[l(\theta, Z) ])] + \frac{1}{2\rho} \Var_P[l(\theta, Z)] \\
		=& \Var_P[l(\theta, Z)] + 0 + + \frac{1}{2\rho} \Var_P[l(\theta, Z)] = \frac{1+2\rho}{2\rho} \Var_P[l(\theta, Z)]
	\end{align}
	Therefore, 
	\begin{align}
		\sqrt{1+2\rho}\left(\E_P[(l(\theta, Z)-\eta^*)_+^2]\right)^{1/2} + \eta^* &= \frac{1+2\rho}{\sqrt{2\rho}} \sqrt{\Var_P[l(\theta, Z)]} + \E_P[l(\theta, Z)] - \frac{1}{\sqrt{2\rho}} \sqrt{\Var_P[l(\theta, Z)]}\\
		&=  \E_P[l(\theta, Z)] + \sqrt{2\rho\Var_P[l(\theta, Z)] }
	\end{align}
	and we have completed the proof.
	
	(2) 
	Let $g(\eta, P) = \sqrt{1+2\rho} \left(\E_P[(l(\theta, Z) - \eta)_+^2]\right)^{\frac{1}{2}} + \eta$ and recall that $\gR_{D_{\chi^2},\rho}(\theta;P) = \inf_{\eta \in \R} g(\eta, P) $. To show that $\eta^* \leq 0$, we only need to prove that $g(\eta) \leq g(0)$ whenever $\eta <0$, which is equivalent to:
	\begin{equation}
	\sqrt{1+2\rho} \left(\E_P[(l(\theta, Z) - \eta)^2]\right)^{\frac{1}{2}}  \geq \sqrt{1+2\rho} \left(\E_P[l(\theta, Z)^2]\right)^{\frac{1}{2}} - \eta
	\end{equation}
	Since both sides are non-negative, this inequality is equivalent to:
	\begin{equation}
	(1+2\rho) \E_P[(l(\theta, Z) - \eta)^2 \geq (1+2\rho) \E_P[l(\theta, Z)^2 + \eta^2 - 2\eta \sqrt{1+2\rho} \left(\E_P[l(\theta, Z)^2]\right)^{\frac{1}{2}}
	\end{equation}
	After re-organizing terms, it remains to prove
	\begin{equation}\label{eqn:ineq_neg_eta}
	2\rho \eta^2 - 2(1+2\rho) \eta E_P[l(\theta, Z)] + 2\eta \sqrt{1+2\rho} \left(\E_P[l(\theta, Z)^2]\right)^{\frac{1}{2}} \geq 0
	\end{equation}
	Since $\rho \ge \frac{\Var_P[l(\theta, Z)]}{2\E[l(\theta, Z)]^2}$, we have $(1+2\rho) \ge \frac{\E[l(\theta, Z)^2]}{\E[l(\theta, Z)]^2}$. Therefore, 
	\begin{align}
	LHS &\geq  2\eta \sqrt{1+2\rho} \left(\E_P[l(\theta, Z)^2]\right)^{\frac{1}{2}}- 2(1+2\rho) \eta E_P[l(\theta, Z)] \\
	&= 2\eta \sqrt{1+2\rho} \left(\left(\E_P[l(\theta, Z)^2]\right)^{\frac{1}{2}} - \sqrt{1+2\rho}  E_P[l(\theta, Z)] \right) \\
	& \geq 0
	\end{align}
	where in the last step we used the assumption that $\eta \leq 0$. Therefore we have completed the proof.
	
\end{proof}
Having prepared with Lemma \ref{lem:chi2_structral}, we are now ready to prove the $\chi^2$-DORO part of Theorem \ref{thm:thm5}.
\begin{proof}[Proof of Theorem \ref{thm:thm5}, $\chi^2$-DORO]
	We will first show that 
	\begin{equation}\label{eqn:ineq_chi2_dro}
	\gR_{D_{\chi^2},\rho}(\hat{\theta};P) \leq  \gR_{D_{\chi^2},\rho}(\hat{\theta};P') + \sqrt{1+2\rho} (1+C_{\rho}) \sigma_{2k} \tv(P, P')^{\left(\frac{1}{2} - \frac{1}{2k}\right)} 2^{-\frac{1}{2k}} \cdot \left(\frac{k}{k-1}\right)^{\frac{1}{2}}
	\end{equation}
	This inequality will be proved by combining two different strategies: when $\eta^*(P')$ is relatively large, we will use an argument based on Lemma \ref{lem:key-technical}, similar to what we did in the analysis of CVaR-DORO. Otherwise, when $\eta^*(P')$ is small, we need a different proof which builds upon the structral result Lemma \ref{lem:chi2_structral}.
	
	Define $C_{\rho} = \frac{\sqrt{1+2\rho}}{2\rho}$. Below we discuss two cases: $\eta^*(P') < -C_{\rho} \sigma_{2k}$ and $\eta^*(P') \ge -C_{\rho}  \sigma_{2k}$.
	
	\paragraph{Case 1: $\eta^*(P') < -C_{\rho} \sigma_{2k}$.}
	
	When $\eta^*(P') < -C_{\rho} \sigma_{2k}$, by Lemma \ref{lem:chi2_structral}, we have
	\begin{align}
	\gR_{D_{\chi^2},\rho}(\hat{\theta};P') &= \E_{P'}[l(\hat{\theta}, Z)] + \sqrt{2\rho \Var_{P'}[l(\hat{\theta}, Z)]} \\
	\eta^*(P') &= \E_{P'}[l(\hat{\theta}, Z)] - \sqrt{\frac{ \Var_{P'}[l(\hat{\theta}, Z)]}{2\rho}} < -C_{\rho} \sigma_{2k}
	\end{align}
	
	Therefore, we can lower bound $\sqrt{ \Var_{P'}[l(\hat{\theta}, Z)]}$ as:
	\begin{equation}
	\sqrt{ \Var_{P'}[l(\hat{\theta}, Z)]} \geq \sqrt{2\rho} \E_{P'}[l(\hat{\theta}, Z)] + \sqrt{2\rho} C_{\rho} \sigma_{2k} \ge \sqrt{2\rho} C_{\rho} \sigma_{2k},
	\end{equation}
	and consequently, we have a lower bound for $\gR_{D_{\chi^2},\rho}(\theta;P')$:
	\begin{align}\label{eqn:lb_p'}
	\gR_{D_{\chi^2},\rho}(\hat{\theta};P') &= \E_{P'}[l(\hat{\theta}, Z)] + \sqrt{2\rho \Var_{P'}[l(\hat{\theta}, Z)]} \\
	&\ge \sqrt{2\rho \Var_{P'}[l(\hat{\theta}, Z)]}  \ge 2 \rho C_{\rho} \sigma_{2k} = \sqrt{1+2\rho}\sigma_{2k}
	\end{align}
	On the other hand, by setting the dual variable $\eta = 0$, we have a simple upper bound for $\gR_{D_{\chi^2},\rho}(\hat{\theta};P)$:
	\begin{equation}\label{eqn:ub_p}
	\gR_{D_{\chi^2},\rho}(\hat{\theta};P) \leq \sqrt{1+2\rho} \E_{P}[l(\hat{\theta}, Z)^2]^{1/2} \leq \sqrt{1+2\rho} \sigma_{2k}
	\end{equation}
	Combining \eqref{eqn:lb_p'} and \eqref{eqn:ub_p}, we conclude that $ \gR_{D_{\chi^2},\rho}(\hat{\theta};P') \ge \gR_{D_{\chi^2},\rho}(\hat{\theta};P) $ and the inequality is trivially true.
	
	\paragraph{Case 2: $\eta^*(P') \ge -C_{\rho} \sigma_{2k}$}
	
	By Lemma \ref{lem:key-technical}, we have
	\begin{equation}
	\E_{P}[(\ell - \eta)^{2}_+]^{\frac{1}{2}} \leq 	\E_{P'}[(\ell - \eta)^{2}_+]^{\frac{1}{2}} + \E_Z[(l(\theta, Z)-\eta)_+^{2k}]^{\frac{1}{2k}} \tv(P, P')^{\left(\frac{1}{2} - \frac{1}{2k}\right)} 2^{-\frac{1}{2k}} \cdot \left(\frac{k}{k-1}\right)^{\frac{1}{2}}
	\end{equation}
	holds for any $\eta \in \R$. Since $\eta^*(P') \ge-C_{\rho} \sigma_{2k} $, we can upper bound the $2k$-th moment $E_Z[(l(\theta, Z)-\eta^*(P'))_+^{2k}]^{\frac{1}{2k}}$ as:
	\begin{align}
	E_Z[(l(\theta, Z)-\eta^*(P') )_+^{2k}]^{\frac{1}{2k}} &\leq E_Z[(l(\theta, Z)+ C_{\rho} \sigma_{2k})_+^{2k}]^{\frac{1}{2k}} \\
	&\leq E_Z[(l(\theta, Z)]^{\frac{1}{2k}}+ C_{\rho} \sigma_{2k} = (1+C_{\rho}) \sigma_{2k}
	\end{align}
	Hence, 
	\begin{align}
	\gR_{D_{\chi^2},\rho}(\hat{\theta};P) &\leq \sqrt{1+2\rho} \E_{P}[(\ell - \eta^*(P'))^{2}_+]^{\frac{1}{2}} + \eta^*(P') \\
	&\leq \sqrt{1+2\rho} \E_{P'}[(\ell - \eta^*(P'))^{2}_+]^{\frac{1}{2}} + \eta^*(P') + \sqrt{1+2\rho} (1+C_{\rho}) \sigma_{2k} \tv(P, P')^{\left(\frac{1}{2} - \frac{1}{2k}\right)} 2^{-\frac{1}{2k}} \cdot \left(\frac{k}{k-1}\right)^{\frac{1}{2}} \\
	&= \gR_{D_{\chi^2},\rho}(\hat{\theta};P') + \sqrt{1+2\rho} (1+C_{\rho}) \sigma_{2k} \tv(P, P')^{\left(\frac{1}{2} - \frac{1}{2k}\right)} 2^{-\frac{1}{2k}} \cdot \left(\frac{k}{k-1}\right)^{\frac{1}{2}}
	\end{align}
	
	Hence, we have proved the inequality \eqref{eqn:ineq_chi2_dro}. The rest of proof mimics CVaR-DORO. For any $\theta$, by Lemma \ref{lem:robust-dro} we have
	\begin{equation}
	\gR_{D_{\chi^2},\rho, \varepsilon}(\theta;\ptrain) \geq \gR_{D_{\chi^2},\rho, \varepsilon} \geq \inf_{P'}\{ \gR_{D_{\chi^2},\rho} (\hat{\theta};P'): \tv(P,P') \leq \frac{\epsilon}{1-\epsilon} \}
	\end{equation}
	By (\ref{eqn:robust-dro-risk}) we have $\gR_{D_{\chi^2},\rho, \varepsilon}(\theta;\ptrain) \leq \gR_{D_{\chi^2},\rho}(\theta;P)$. Thus, by (\ref{eqn:thm5-1}), taking the infimum over $P'$ yields
	\begin{align}
	\gR_{D_{\chi^2},\rho}(\hat{\theta};P) &\leq \gR_{D_{\chi^2},\rho,\epsilon}(\theta;\ptrain) +  \sqrt{1+2\rho}(1+C_{\rho}) \sigma_{2k} \left( \frac{\epsilon}{1-\epsilon}\right)^{\left(\frac{1}{2} - \frac{1}{2k}\right)} 2^{-\frac{1}{2k}} \cdot \left(\frac{k}{k-1}\right)^{\frac{1}{2}}  \\
	&\leq \gR_{D_\beta,\rho} (\theta;P) + \sqrt{1+2\rho} (1+C_{\rho}) \sigma_{2k} \left( \frac{\epsilon}{1-\epsilon}\right)^{\left(\frac{1}{2} - \frac{1}{2k}\right)} 2^{-\frac{1}{2k}} \cdot \left(\frac{k}{k-1}\right)^{\frac{1}{2}}
	\end{align}
	
	Taking the infimum over $\theta$ completes the proof.
\end{proof}

\subsubsection{Proof of Theorem \ref{thm:main_LB}}
We consider an optimization problem with the parameter space restricted to only two possible values $\Theta = \left\{\theta_0, \theta_1 \right\}$. Our proof is constructive, which relies on the following distribution $P_{M, \Delta, \varepsilon}$:
\begin{align}
l(\theta_0, Z) = 0, \quad & l(\theta_1, Z) = \Delta \quad w.p. \quad (1-\varepsilon) \\
l(\theta_0, Z) = M, \quad & l(\theta_1, Z) = \Delta \quad w.p. \quad \varepsilon
\end{align}
here $M, \Delta$ are some non-negative parameters whose value to be determined later and the probability is taken over the randomness of $Z$.

We have the following characterization of CVaR and $\chi^2$-DRO risk:
\begin{lem}[DRO Risk of $P_{M, \Delta, \varepsilon}$] 
	\label{lem:DRO_discrete_distribution} Assume that $\alpha \ge \varepsilon$ and $1+2\rho \leq \frac{1}{\varepsilon}$, we have the following closed-form expressions for CVaR and $\chi^2$-DRO risk:
	\begin{align}
	\cvar_{\alpha}(\theta_0;P_{M, \Delta, \varepsilon}) &= \frac{M \varepsilon}{\alpha}\\
	\cvar_{\alpha}(\theta_1;P_{M, \Delta, \varepsilon}) &= \Delta
	\end{align}
	and 
	\begin{align}
	\gR_{D_{\chi^2},\rho}(\theta_0;P_{M, \Delta, \varepsilon}) &= M \varepsilon + M \sqrt{2\rho\varepsilon(1-\varepsilon)}\\
	\gR_{D_{\chi^2},\rho}(\theta_1;P_{M, \Delta, \varepsilon}) &= \Delta
	\end{align}
\end{lem}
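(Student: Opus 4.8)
The plan is to treat $\theta_1$ and $\theta_0$ separately, and for $\theta_0$ to compute the CVaR directly from the dual form (\ref{eqn:cvar-dual}) while reading off the $\chi^2$-DRO value from the structural result Lemma~\ref{lem:chi2_structral}. The hypotheses $\alpha \ge \varepsilon$ and $1+2\rho \le \frac{1}{\varepsilon}$ are exactly what is needed to pin down the optimal dual variables.

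\textbf{The parameter $\theta_1$.} Here $l(\theta_1, Z) = \Delta$ is almost surely constant, so $\E_Q[l(\theta_1, Z)] = \Delta$ for every $Q \ll P_{M,\Delta,\varepsilon}$. Since $\cvar_\alpha(\theta_1;P_{M,\Delta,\varepsilon})$ and $\gR_{D_{\chi^2},\rho}(\theta_1;P_{M,\Delta,\varepsilon})$ are suprema of such expectations over divergence balls around $P_{M,\Delta,\varepsilon}$, both equal $\Delta$ (equivalently, in each dual form the minimizing $\eta$ equals $\Delta$, which annihilates the $(\cdot - \eta)_+$ term, leaving $\eta = \Delta$).

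\textbf{The parameter $\theta_0$, CVaR.} The loss $\ell := l(\theta_0, Z)$ is $0$ with probability $1-\varepsilon$ and $M$ with probability $\varepsilon$, so $\E_P[\ell] = \varepsilon M$, $\E_P[\ell^2] = \varepsilon M^2$, and $\Var_P[\ell] = \varepsilon(1-\varepsilon)M^2$. By (\ref{eqn:cvar-dual}), $\cvar_\alpha(\theta_0;P_{M,\Delta,\varepsilon}) = \inf_\eta F(\eta)$ with $F(\eta) = \alpha^{-1}\E_P[(\ell-\eta)_+] + \eta$, a convex function of $\eta$. A direct computation of its one-sided derivatives at $\eta = 0$ gives left derivative $1 - \alpha^{-1} \le 0$ (using $\alpha \le 1$) and right derivative $1 - \alpha^{-1}\varepsilon \ge 0$ (using $\alpha \ge \varepsilon$), so $\eta^\ast = 0$ is a global minimizer and $\cvar_\alpha(\theta_0;P_{M,\Delta,\varepsilon}) = F(0) = \alpha^{-1}\E_P[\ell] = \frac{M\varepsilon}{\alpha}$. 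This matches the intuition that CVaR averages $\ell$ over the worst $\alpha$-fraction, which consists of the whole $\varepsilon$-mass at $M$ plus an $(\alpha-\varepsilon)$-mass at $0$.

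\textbf{The parameter $\theta_0$, $\chi^2$-DRO.} The hypothesis of Lemma~\ref{lem:chi2_structral}(1), namely $\rho \le \frac{\Var_P[\ell]}{2\,\E_P[\ell]^2}$, reads $\rho \le \frac{\varepsilon(1-\varepsilon)M^2}{2\varepsilon^2 M^2} = \frac{1-\varepsilon}{2\varepsilon}$, which is precisely the standing assumption $1 + 2\rho \le \frac{1}{\varepsilon}$. Hence Lemma~\ref{lem:chi2_structral}(1) applies and yields $\gR_{D_{\chi^2},\rho}(\theta_0;P_{M,\Delta,\varepsilon}) = \E_P[\ell] + \sqrt{2\rho\,\Var_P[\ell]} = \varepsilon M + \sqrt{2\rho\,\varepsilon(1-\varepsilon)M^2} = M\varepsilon + M\sqrt{2\rho\varepsilon(1-\varepsilon)}$, as claimed. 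This disposes of all four identities.

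The argument is essentially bookkeeping once Lemma~\ref{lem:chi2_structral} is available, so there is no substantial obstacle; the only two points requiring care are verifying that $\eta^\ast = 0$ genuinely minimizes the CVaR dual for this discrete distribution (the derivative computation, which uses both inequalities $\varepsilon \le \alpha \le 1$) and checking that $1 + 2\rho \le \frac{1}{\varepsilon}$ is the exact translation of the hypothesis of the $\chi^2$ structural lemma into the parameters $M, \varepsilon$.
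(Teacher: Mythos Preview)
Your proof is correct. The treatment of $\theta_1$ and the CVaR computation for $\theta_0$ match the paper's approach in substance: the paper writes out the piecewise linear dual $h(\eta)$ on the three regions $\eta<0$, $0\le\eta\le M$, $\eta>M$ and reads off that the minimum is attained at $\eta=0$ when $\alpha\ge\varepsilon$, while you reach the same conclusion via the one-sided derivatives of the convex dual at $\eta=0$. These are essentially the same argument.

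Where you diverge from the paper is in the $\chi^2$-DRO computation for $\theta_0$. The paper proceeds by brute force: it writes the dual objective $g(\eta)=\sqrt{1+2\rho}\,\bigl(\E_P[(\ell-\eta)_+^2]\bigr)^{1/2}+\eta$ piecewise on the same three regions, then minimizes directly, finding the optimizer $\eta=\varepsilon M - M\sqrt{\varepsilon(1-\varepsilon)/(2\rho)}$ on the region $\eta<0$ (which is where the assumption $1+2\rho\le 1/\varepsilon$ enters, guaranteeing this candidate is indeed negative and that $g$ is nondecreasing on $[0,M]$). You instead observe that the assumption $1+2\rho\le 1/\varepsilon$ is exactly the hypothesis $\rho\le\Var_P[\ell]/(2\E_P[\ell]^2)$ of Lemma~\ref{lem:chi2_structral}(1), and invoke that lemma's variance-regularization formula directly. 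Your route is shorter and reuses machinery already in place; the paper's route is self-contained and makes the location of the optimal dual variable explicit. Both are fine.
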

\begin{proof}
	Since $l(\theta_1, Z) $ is always a constant $\Delta$, it's immediate to see $\cvar_{\alpha}(\theta_1;P_{M, \Delta, \varepsilon}) = \gR_{D_{\chi^2},\rho}(\theta_1;P_{M, \Delta, \varepsilon}) = \Delta$. Hence we only need to focus on $\theta_0$.
	
	By the dual formulation of DRO risk, we have $\cvar_{\alpha}(\theta_0;P_{M, \Delta, \varepsilon}) = \inf_{\eta \in \R} h(\eta)$ and $\gR_{D_{\chi^2},\rho}(\theta_0;P_{M, \Delta, \varepsilon}) = \inf_{\eta \in \R} g(\eta)$, where we use the shorthand $g(\eta)$ and $h(\eta)$ for
	\begin{align}
	g(\eta) &:= \sqrt{1+2\rho} \left(\E_P[(l(\theta, Z) - \eta)_+^2]\right)^{\frac{1}{2}} + \eta \\
	h(\eta) &= \frac{1}{\alpha}\E_P[(l(\theta, Z) - \eta)_+] + \eta
	\end{align}
	Direct calculation gives:
	\begin{equation}
	g(\eta) = \begin{cases}
	\sqrt{1+2\rho} \sqrt{(\eta - \epsilon M)^2 + \varepsilon(1-\varepsilon)M^2} + \eta, & \text{for } \eta <0 \\
	\sqrt{\varepsilon(1+2\rho)}(M- \eta) + \eta, & \text{for } 0 \leq \eta \leq M \\
	\eta, & \text{for } \eta >M
	\end{cases}
	\end{equation}
	and 
	\begin{equation}
	h(\eta) = \begin{cases}
	\frac{M \varepsilon - \eta}{\alpha} + \eta, & \text{for } \eta <0 \\
	\frac{\varepsilon (M-\eta)}{\alpha} +\eta , & \text{for } 0 \leq \eta \leq M \\
	\eta, & \text{for } \eta >M
	\end{cases}
	\end{equation}
	Therefore, when $\alpha \ge \varepsilon$ and $1+2\rho \le \frac{1}{\varepsilon}$, we have
	\begin{align}
		\cvar_{\alpha}(\theta_0;P_{M, \Delta, \varepsilon}) &= \inf_{\eta \in \R} h(\eta) &= h(0) &= \frac{M\varepsilon}{\alpha} \\
		\gR_{D_{\chi^2},\rho}(\theta_0;P_{M, \Delta, \varepsilon}) &= \inf_{\eta \in \R} g(\eta) &= g(\varepsilon M - \frac{M \sqrt{\varepsilon(1-\varepsilon)}}{\sqrt{2\rho}} ) &=  M \varepsilon + M \sqrt{2\rho\varepsilon(1-\varepsilon)}
	\end{align}
	and we have completed the proof.
\end{proof}

Equipped with Lemma \ref{lem:DRO_discrete_distribution}, we are now ready to prove the main lower bound Theorem \ref{thm:main_LB}.

\begin{proof}[Proof of Theorem \ref{thm:main_LB}]	
	Consider $\ptrain = P_{M, \Delta, \varepsilon}$. We have two different ways to decompose $\ptrain$ into mixture of two distributions:
	\begin{equation}
		\ptrain  = P_{M, \Delta, \varepsilon} = (1-\varepsilon) P_{M, \Delta, \varepsilon} + \varepsilon P_{M, \Delta, \varepsilon} = (1-\varepsilon) P_{0, \Delta, 0} + \varepsilon P_{M, \Delta, 0}
	\end{equation}
	 In other words, with only access to $\ptrain  = P_{M, \Delta, \varepsilon}$, the learner cannot distinguish the following two possibilities:
	 \begin{itemize}
	 	\item (a) The clean distribution is $P = P_{M, \Delta, \varepsilon} $, and the outlier distribution is $P' = P_{M, \Delta, \varepsilon}$.
	 	\item (b) The clean distribution is $Q = P_{0, \Delta, 1} $, and the outlier distribution is $Q' = P_{M, \Delta, 1}$.
	 \end{itemize}
 
 Furthermore, as long as $M \leq \sigma_{2k} \varepsilon^{-\frac{1}{2k}}$ and $\Delta \leq \sigma_{2k}$, both $P$ and $Q$ satisfy the bounded $2k$-th moment condition $\E[l(\theta, Z)^{2k}] \leq \sigma_{2k}^{2k}$. With our construction below, we can ensure that $\theta_1$ is $\Theta(\Delta)$-suboptimal under $P$, while $\theta_0$ is $\Theta(\Delta)$-suboptimal under $Q$. Therefore, in the worst case scenario, it's impossible for the learner to find a solution with $O(\Delta)$ sub-optimality gap under both $P$ and $Q$.
	
\paragraph{CVaR lower bound} Assume that $\alpha \ge \frac{1}{2} \varepsilon^{1-\frac{1}{2k}} $. Let $M = \sigma_{2k} \varepsilon^{- \frac{1}{2k}} , \Delta =  \sigma_{2k} \frac{\varepsilon^{1-\frac{1}{2k}}}{2\alpha} \leq \sigma_{2k} $. Recall that $P = P_{M, \Delta, \varepsilon} $, by Lemma \ref{lem:DRO_discrete_distribution}, we have:
 	\begin{align}
		\cvar_{\alpha}(\theta_0;P) &= \frac{M \varepsilon}{\alpha} = \frac{\sigma_{2k}}{\alpha}\varepsilon^{1 - \frac{1}{2k}} = 2 \Delta \\
		\cvar_{\alpha}(\theta_1;P) &= \Delta
 	\end{align}
 	Therefore, 
 	\begin{equation}\label{eqn:cvar_lb_ineq1}
 		\cvar_{\alpha}(\theta_0;P) - \inf_{\theta \in \Theta} \cvar_{\alpha}(\theta;P) = \Delta = \Omega(\frac{1}{\alpha} \sigma_{2k} \varepsilon^{1 - \frac{1}{2k}})
 	\end{equation}
 	For $Q = P_{0, \Delta, 1}$, both $l(\theta_0, Z)$ and $l(\theta_1, Z)$ are constants, and hence 
 	\begin{align}
 	\cvar_{\alpha}(\theta_0;Q) &= 0 \\
 	\cvar_{\alpha}(\theta_1;Q) &= \Delta
 	\end{align}
 	and 
 	\begin{equation}\label{eqn:cvar_lb_ineq2}
 	\cvar_{\alpha}(\theta_1;Q) - \inf_{\theta \in \Theta} \cvar_{\alpha}(\theta;Q) = \Delta = \Omega(\frac{1}{\alpha} \sigma_{2k} \varepsilon^{1 - \frac{1}{2k}})
 	\end{equation}
 	Combining \eqref{eqn:cvar_lb_ineq1} and \eqref{eqn:cvar_lb_ineq2} completes the proof.
 	
 \paragraph{$\chi^2$-DRO lower bound} Assume that $\rho = O(\varepsilon^{\frac{1}{k} -1})$. Let $M = \sigma_{2k} \varepsilon^{- \frac{1}{2k}} , \Delta =  \frac{M}{2} \left( \varepsilon +  \sqrt{2\rho\varepsilon(1-\varepsilon)}\right) \leq \sigma_{2k} $. Recall that $P = P_{M, \Delta, \varepsilon} $, by Lemma \ref{lem:DRO_discrete_distribution}, we have:
 \begin{align}
 \gR_{D_{\chi^2},\rho}(\theta_0;P) &= M \varepsilon + M \sqrt{2\rho\varepsilon(1-\varepsilon)} = 2 \Delta \\
 \gR_{D_{\chi^2},\rho}(\theta_1;P) &= \Delta
 \end{align}
 Therefore, 
 \begin{equation}\label{eqn:chi2_lb_ineq1}
 \gR_{D_{\chi^2},\rho}(\theta_0;P) - \inf_{\theta \in \Theta} \gR_{D_{\chi^2},\rho}(\theta;P) = \Delta = \Omega( \sigma_{2k} \sqrt{\rho}\varepsilon^{\frac{1}{2} - \frac{1}{2k}})
 \end{equation}
 For $Q = P_{0, \Delta, 1}$, both $l(\theta_0, Z)$ and $l(\theta_1, Z)$ are constants, and hence 
 \begin{align}
 \cvar_{\alpha}(\theta_0;Q) &= 0 \\
 \cvar_{\alpha}(\theta_1;Q) &= \Delta
 \end{align}
 and 
 \begin{equation}\label{eqn:chi2_lb_ineq2}
 \cvar_{\alpha}(\theta_0;Q) - \inf_{\theta \in \Theta} \cvar_{\alpha}(\theta;Q) = \Delta =  \Omega( \sigma_{2k} \sqrt{\rho}\varepsilon^{\frac{1}{2} - \frac{1}{2k}})
 \end{equation}
 Combining \eqref{eqn:chi2_lb_ineq1} and \eqref{eqn:chi2_lb_ineq2} completes the proof.	
 
\end{proof}

\subsubsection{Proof of Theorem \ref{thm:effectiveness}}

By Lemma \ref{lem:key-technical}, for any $P'$ such that $\tv (P,P') \leq \frac{\epsilon}{1-\epsilon}$, 
\begin{equation}
\cvar_{\alpha}(\theta;P) - \cvar_{\alpha}(\theta;P')	\leq 2\alpha^{-1}\sigma \sqrt{\frac{\epsilon}{1-\epsilon}}
\end{equation}
By Proposition \ref{prop:dro-fwod}, if $\gR_{\max}(\theta;P) > 3\alpha^{-1}\sigma\sqrt{\frac{\epsilon}{1-\epsilon}}$, then $\cvar_{\alpha}(\theta;P) > 3\alpha^{-1}\sigma\sqrt{\frac{\epsilon}{1-\epsilon}}$, which implies that
\begin{equation}
\label{eqn:cvar-lower-bound}
\frac{\cvar_\alpha(\theta;P')}{\gR_{\max}(\theta;P)} \geq \frac{\cvar_\alpha(\theta;P')}{\cvar_\alpha(\theta;P)} = 1 - \frac{\delta}{\cvar_\alpha(\theta;P)} \geq 1-\frac{2\alpha^{-1}\sigma\sqrt{\frac{\epsilon}{1-\epsilon}}}{3\alpha^{-1}\sigma\sqrt{\frac{\epsilon}{1-\epsilon}}} = \frac{1}{3}
\end{equation}
holds for any $P'$ such that $\tv (P,P') \leq \frac{\epsilon}{1-\epsilon}$. By Lemma \ref{lem:robust-dro}, taking the infimum over $P'$ yields the first inequality of (\ref{eqn:thm-effectiveness}). Moreover, by Proposition \ref{prop:dro-fwod}, for any $\theta$ and $P'$, $D_{\chi^2,\rho}(\theta;P') \geq \cvar_\alpha(\theta;P')$, which together with (\ref{eqn:cvar-lower-bound}) yields the second inequality of (\ref{eqn:thm-effectiveness}). \qed

\section{Experiment Details}
\label{app:exp-details}
\subsection{Domain Definition}
\label{app:domain-def}
One important decision we need to make when we design a task with subpopulation shift is how to define the domains (subpopulations). We refer our readers to the Wilds paper \cite{koh2020wilds}, which discusses in detail the desiderata and considerations of domain definition, and defines 16 domains on the CivilComments-Wilds dataset which we use directly. The authors selected 8 features such as race, sex and religion, and crossed them with the two classes to define the 16 domains. Such a definition naturally covers class imbalance. There is no official domain definition on CelebA, so we define the domains on our own. Following their approach, on CelebA we also select 8 features and cross them with the two classes to compose the 16 domains. Our definition is inspired by \cite{sagawa2019distributionally}, but we cover more types of subpopulation shift apart from demographic differences.

We select 8 features on CelebA: \textit{Male}, \textit{Female}, \textit{Young}, \textit{Old}, \textit{Attractive}, \textit{Not-attractive}, \textit{Straight-hair} and \textit{Wavy-hair}. We explain why we select these features as follows:
\begin{itemize}
	\item The first four features cover sex and age, two protected features widely used in algorithmic fairness papers.
	\item We select the next two features in order to cover labeling biases, biases induced by the labelers into the dataset. Among the 40 features provided by CelebA, the \textit{Attractive} feature is the most subjective one. Table \ref{tab:celeba-domains} shows that among the people with blond hair, more than half are labeled \textit{Attractive}; while among the other people, more than half are labeled \textit{Not-attractive}. It might be that the labelers consider blond more attractive than other hair colors, or it might be that the labelers consider females more attractive than males, and it turns out that more females have blond hair than males in this dataset. Although the reason behind is unknown, we believe that these two features well represent the labeling biases in this dataset, and should be taken into consideration.
	\item We select the last two features in order to cover confounding variables, features the model uses to do classification that should have no correlation with the target by prior knowledge. Since the target is the hair color, a convolutional network trained on this dataset would focus on the hair of the person, so we conjecture that the output of the convolutional network is highly correlated with the hair style. In our experiments, we find that models trained with ERM misclassify about 20\% of the test instances with blond straight hair, much more than the other three combinations.
\end{itemize}

\begin{table}[!t]
	\caption{Number of training instances in each domain of CelebA and CivilComments-Wilds.}
	\label{tab:celeba-domains}
	\vskip 0.15in
	\begin{subtable}[!t]{0.482\textwidth}
		\centering
		\begin{small}
			\begin{tabular}{ccc}
				\toprule
				\textbf{CelebA} & Blond & Others \\
				\midrule
				Male  & 1387 & 66874 \\
				Female & 22880 & 71629 \\ 
				Young & 20230 & 106558 \\ 
				Old & 4037 & 31945 \\ 
				Attractive & 17008 & 66595 \\ 
				Not-attractive & 7259 & 71908 \\ 
				Straight-hair & 5178 & 28769 \\ 
				Wavy-hair & 11342 & 40640 \\
				\midrule
				Total & \multicolumn{2}{c}{162770} \\ 
				\bottomrule
			\end{tabular}
		\end{small}
	\end{subtable}
	\hfill
	\begin{subtable}[!t]{0.55\textwidth}
		\centering
		\begin{small}
			\begin{tabular}{ccc}
				\toprule
				\textbf{CivilComments-Wilds} & Toxic & Non-toxic \\
				\midrule
				Male  & 4437 & 25373 \\
				Female & 4962 & 31282 \\ 
				LGBTQ & 2265 & 6155 \\ 
				Christian & 2446 & 24292 \\ 
				Muslim & 3125 & 10829 \\ 
				Other Religions & 1003 & 5541 \\ 
				Black & 3111 & 6785 \\ 
				White & 4682 & 12016 \\
				\midrule
				Total & \multicolumn{2}{c}{269038} \\ 
				\bottomrule
			\end{tabular}
		\end{small}
	\end{subtable}
	
\end{table}

Table \ref{tab:celeba-domains} lists the number of training instances in each domain of each dataset. Each instance may belong to zero, one or more domains. In CivilComments-Wilds, the aggregated group size of the 16 groups is less than the total number 269,038, because most online comments do not contain sensitive words.

\subsection{Model Selection}
\label{app:model-select}

\begin{table}[!t]
	\caption{The average and worst-case test accuracies of the best models selected by different strategies. (\%)}
	\label{tab:model-selection}
	\begin{center}
		\vskip 0.15in
		\begin{small}
			\begin{tabular}{ cccc } 
				\toprule
				\textbf{Training Algorithm} & \textbf{Model Selection} & \textbf{Average Accuracy}  & \textbf{Worst-case Accuracy}\\ 
				\midrule
				\multirow{4}{*}{ERM} & Oracle & $95.01 \pm 0.38$ & $53.94 \pm 2.02$ \\ 
				& Max Avg Acc & $95.65 \pm 0.05$ & $45.83 \pm 1.87$ \\ 
				& Min CVaR & $95.68 \pm 0.04$ & $44.83 \pm 2.74$ \\ 
				& Min CVaR-DORO & $95.69 \pm 0.04$ & $44.50 \pm 2.72$ \\ 
				\midrule
				& Oracle & $95.52 \pm 0.08$ & $49.94 \pm 3.36$ \\ 
				CVaR & Max Avg Acc & $95.74 \pm 0.06$ & $39.28 \pm 3.58$ \\ 
				($\alpha=0.2$) & Min CVaR & $95.79 \pm 0.05$ & $38.67 \pm 2.06$ \\ 
				& Min CVaR-DORO & $95.81 \pm 0.05$ & $38.83 \pm 2.05$ \\ 
				\midrule
				& Oracle & $92.91 \pm 0.48$ & $72.17 \pm 3.14$ \\ 
				CVaR-DORO& Max Avg Acc & $95.60 \pm 0.05$ & $45.39 \pm 3.22$ \\ 
				($\alpha = 0.2$, $\epsilon = 0.005$)& Min CVaR & $95.58 \pm 0.06$ & $39.83 \pm 2.37$ \\ 
				& Min CVaR-DORO & $95.56 \pm 0.07$ & $41.28 \pm 3.26$ \\ 
				\midrule
				& Oracle & $82.44 \pm 1.22$ & $63.36 \pm 2.51$ \\ 
				$\chi^2$-DRO & Max Avg Acc & $90.70 \pm 0.26$ & $20.67 \pm 3.86$ \\ 
				($\alpha=0.2$) & Min CVaR & $87.28 \pm 2.05$ & $21.44 \pm 11.13$ \\ 
				& Min CVaR-DORO & $89.16 \pm 1.41$ & $25.50 \pm 9.14$ \\ 
				\midrule
				& Oracle & $80.73 \pm 1.41$ & $65.36 \pm 1.02$ \\ 
				$\chi^2$-DORO& Max Avg Acc & $90.06 \pm 0.57$ & $22.06 \pm 5.82$ \\ 
				($\alpha = 0.2$, $\epsilon = 0.005$)& Min CVaR & $84.37 \pm 4.08$ & $29.83 \pm 12.10$ \\ 
				& Min CVaR-DORO & $88.76 \pm 0.81$ & $23.61 \pm 7.45$ \\ 
				\bottomrule
			\end{tabular}
		\end{small}
	\end{center}
\end{table}

In Section \ref{sec:experiments} we assume access to a domain-aware validation set, which is not available in real domain-oblivious tasks. In this part we study several domain-oblivious model selection strategies, and discuss why model selection is hard.

We study the following model selection strategies:
\begin{itemize}
	\item Max Average Accuracy: The model with the highest average accuracy in validation.
	\item Min CVaR: The model with the lowest CVaR risk ($\alpha=0.2$) over the validation set.
	\item Min CVaR-DORO: The model with the lowest CVaR-DORO risk ($\alpha=0.2, \epsilon = 0.005$) over the validation set.
\end{itemize}

Note that selecting the model that achieves the highest average accuracy over the worst $\alpha$ portion of the data is almost equivalent to the Max Average Accuracy strategy because the model with the highest average accuracy over the population also achieves the highest accuracy on the worst $\alpha$ portion (see e.g. \cite{hu2018does}, Theorem 1).

We conduct experiments on CelebA and report the results in Table \ref{tab:model-selection}. From the table we draw the following conclusions:
\begin{enumerate}
    \item For every training algorithm, the oracle strategy achieves a much higher worst-case test accuracy than the other three strategies, and the gap between the oracle and the non-oracle strategies for DRO and DORO is larger than ERM. While it is expected that the oracle achieves a higher worst-case accuracy, the large gap indicates that there is still huge room for improvement.
    \item For $\chi^2$-DRO/DORO, Min CVaR and Min CVaR-DORO work better than Max Average Accuracy. However, for the other three algorithms, Max Average Accuracy is better. This shows that model selection based on CVaR and selection based on CVaR-DORO are not good strategies. 
    \item With the three non-oracle strategies, ERM achieves the highest worst-case test accuracy. This does not mean that DRO and DORO are not as good as ERM, but suggests that we need other model selection strategies that work better with DRO and DORO.
\end{enumerate}

The reason why Min CVaR is not a good strategy is that CVaR does not decrease monotonically with $\gR_{\max}$. Corollary \ref{prop:dro-fwod} only guarantees that CVaR is an upper bound of $\gR_{\max}$, but the $\theta$ that achieves the minimum CVaR does not necessarily have the smallest $\gR_{\max}$. For the same reason, Min CVaR-DORO is not a good strategy either. 

Model selection under the domain oblivious setting is a very difficult task. In fact, Theorem 1 of \cite{hu2018does} implies that no strategy can be provably better than Max Average Accuracy under the domain-oblivious setting, i.e. for any model selection strategy, there always exist $\gD_1,\cdots,\gD_K$ such that the model it selects is not better than the model selected by the Max Average Accuracy strategy. Thus, to design a provably model selection strategy, prior knowledge or reasonable assumptions on the domains are necessary.

\subsection{Training Hyperparameters}
\label{app:best-param}

On the COMPAS dataset, we use a two-layer feed-forward neural network activated by ReLU as the classification model. For optimization we use ASGD with learning rate 0.01. The batch size is 128. The hyperparameters we used in Table \ref{tab:results-acc} were: $\alpha=0.5$ for CVaR; $\alpha=0.5$, $\epsilon=0.2$ for CVaR-DORO; $\alpha=0.5$ for $\chi^2$-DRO; $\alpha=0.5$, $\epsilon=0.2$ for $\chi^2$-DORO.

On the CelebA dataset, we use a standard ResNet18 as the classification model. For optimization we use momentum SGD with learning rate 0.001, momentum 0.9 and weight decay 0.001. The batch size is 400. The hyperparameters we used in Table \ref{tab:results-acc} were: $\alpha=0.1$ for CVaR; $\alpha=0.2$, $\epsilon=0.005$ for CVaR-DORO; $\alpha=0.25$ for $\chi^2$-DRO; $\alpha=0.25$, $\epsilon=0.01$ for $\chi^2$-DORO.

On the CivilComments-Wilds dataset, we use a pretrained BERT-base-uncased model as the classification model. For optimization, we use AdamW with learning rate 0.00001 and weight decay 0.01. The batch size is 128. The hyperparameters we used in Table \ref{tab:results-acc} were: $\alpha=0.1$ for CVaR; $\alpha=0.1$, $\epsilon=0.01$ for CVaR-DORO; $\alpha=0.2$ for $\chi^2$-DRO; $\alpha=0.2$, $\epsilon=0.01$ for $\chi^2$-DORO.

\end{document}